\def\eqref#1{equation~\ref{#1}}
\def\1{\bm{1}}
\def\eps{{\epsilon}}
\def\vx{{\bm{x}}}
\DeclareMathAlphabet{\mathsfit}{\encodingdefault}{\sfdefault}{m}{sl}
\SetMathAlphabet{\mathsfit}{bold}{\encodingdefault}{\sfdefault}{bx}{n}
\newcommand{\R}{\mathbb{R}}
\newcommand{\Var}{\mathrm{Var}}
\newcommand{\Cov}{\mathrm{Cov}}
\DeclareMathOperator*{\argmax}{arg\,max}
\definecolor{inkgreen}{HTML}{006633}% colors
\crefname{equation}{Eq.}{Eqs.}
\crefname{appsec}{Appendix}{Appendices}
\crefname{appsubsec}{Appendix}{Appendices}
\crefname{appsubsubsec}{Appendix}{Appendices}
\title{Exploration vs Exploitation: \\ Rethinking RLVR through Clipping, Entropy, and Spurious Reward}
\author{Peter Chen$^1$ \quad Xiaopeng Li$^2$ \quad Ziniu Li$^2$ \quad Wotao Yin$^3$ \quad Xi Chen$^4$ \quad Tianyi Lin$^1$ \\ \\
$^1$Columbia \quad $^2$CUHK SZ \quad $^3$DAMO, Alibaba US \quad $^4$NYU Stern
}
\newcommand{\y}{\mathbf y}
\newcommand{\x}{\mathbf x}
\newcommand{\h}{\mathbf h}
\newcommand{\vSS}{\mathbf S}
\newcommand{\vaa}{\mathbf a}
\newcommand{\vrr}{\mathbf r}
\newcommand{\EE}{{\mathbb{E}}}
\newcommand{\XCal}{{\mathcal{X}}}
\newcommand{\HCal}{{\mathcal{H}}}
\newcommand{\CCal}{{\mathcal{C}}}
\newcommand{\ICal}{{\mathcal{I}}}
\newcommand{\VCal}{{\mathcal{V}}}
\theoremstyle{plain}
\newtheorem{theorem}{Theorem}[section]
\newtheorem{proposition}[theorem]{Proposition}
\newtheorem{lemma}[theorem]{Lemma}
\newtheorem{corollary}[theorem]{Corollary}
\theoremstyle{definition}
\newtheorem{definition}[theorem]{Definition}
\theoremstyle{plain}
\newtheorem{remark}[theorem]{Remark}
\newenvironment{restatedtheorem}[1]{%
\par\noindent\textbf{Theorem~#1~(restated).}\itshape
}{\par}
\begin{document}

\maketitle

\begin{abstract}
This paper examines the exploration-exploitation trade-off in reinforcement learning with verifiable rewards (RLVR), a framework for improving the reasoning of Large Language Models (LLMs). Recent studies suggest that RLVR can elicit strong mathematical reasoning in LLMs through two seemingly paradoxical mechanisms: \textit{spurious rewards}, which suppress exploitation by rewarding outcomes unrelated to the ground truth, and \textit{entropy minimization}, which suppresses exploration by pushing the model toward more confident and deterministic outputs, highlighting a puzzling dynamic: both discouraging exploitation and discouraging exploration improve reasoning performance, yet the underlying principles that reconcile these effects remain poorly understood. We focus on two fundamental questions: (i) how policy entropy relates to performance, and (ii) whether spurious rewards yield gains, potentially through the interplay of clipping bias and model contamination. Our results show that clipping bias under spurious rewards reduces policy entropy, leading to more confident and deterministic outputs, while entropy minimization alone is insufficient for improvement. We further propose a reward-misalignment model explaining why spurious rewards can enhance performance beyond contaminated settings. Our findings clarify the mechanisms behind spurious-reward benefits and provide principles for more effective RLVR training.
\end{abstract}

%!TEX root = ../main.tex
\section{Introduction}
The recent emergence of Large AI Reasoning Models (e.g., Kimi-K2, OpenAI-o1, and DeepSeek-R1~\citep{Team-2025-Kimi,Jaech-2024-OpenAI,Guo-2025-Deepseek}) has been driven by reinforcement learning with verifiable rewards (RLVR) \citep{Li-2025-Review}. In RLVR, a verifier compares the model's rollout against a deterministic ground-truth solution, especially in mathematics and other STEM domains, providing outcome rewards. This verifiability has enabled models to achieve competitive and human-level performance on challenging benchmarks, such as the International Mathematical Olympiad~\citep{Huang-2025-Gemini}. Among RLVR methods, Group Relative Policy Optimization (GRPO)~\citep{Shao-2024-Deepseekmath} has become particularly popular due to its computational simplicity and memory efficiency.

In reinforcement learning, the exploration-exploitation trade-off is framed within a Markov decision process with per-step or shaped rewards. Exploration is typically promoted through stochastic policies or explicit bonus terms for underexplored actions (e.g., entropy regularization), while exploitation reinforces high-return actions via accurate value estimation. RLVR for LLMs departs from this paradigm in three respects: (i) rewards are outcome-level, extremely sparse, and verifiable only at the end of long rollouts, rendering all intermediate token-level actions reward-equivalent; (ii) exploration unfolds in sequence space and is governed by decoding temperature rather than state-local bonuses; and (iii) policy updates rely on ratio clipping with group-normalized advantages, making them more sensitive to importance ratios and relative ranks than to absolute reward values.

These properties give RLVR a distinctive exploration–exploitation regime. In classical RL, spurious rewards, which are misaligned with the true outcome reward (e.g., random noise), would be expected to hinder exploitation by injecting randomness that encourages suboptimal actions. Yet in RLVR, they have been observed to improve performance in \texttt{Qwen-Math} models~\citep{Shao-2025-Spurious}, a phenomenon  attributed to upper-clipping bias that disproportionately amplifies high-prior responses, consistent with contamination effects reported on \texttt{MATH500}~\citep{Wu-2025-Memorization}. Conversely, entropy minimization, which reduces policy entropy to yield more deterministic, high-confidence rollouts, has been widely adopted in RLVR and empirically linked to consistent gains~\citep{Zhang-2025-EMPO,Zhao-2025-Learning, Cui-2025-Entropy,Fu-2025-How}. Notably, \citet{Agarwal-2025-Entropy} and \citet{Gao-2025-One} directly optimize entropy as an objective and report substantial improvements even without verifiable feedback. These findings point to an RLVR-specific paradox: discouraging exploitation through spurious rewards and discouraging exploration through entropy minimization can both enhance validation accuracy, underscoring learning dynamics that depart from classical RL intuitions.

In this paper, we investigate how clipping, policy entropy, and spurious (random) rewards jointly shape model performance in RLVR. We show, both theoretically and empirically, that under random rewards, which discourage exploitation, clipping bias alone provides no meaningful learning signal and cannot directly improve performance. Instead, we establish a direct connection between clipping and policy entropy: clipping reduces entropy and drives the policy toward more deterministic, higher-confidence outputs, thereby inducing an entropy-minimization effect. Importantly, reduced entropy by itself does not guarantee performance gains. To clarify when spurious rewards can be beneficial, we introduce a simple reward-misalignment model. Our analysis overturns the prevailing view that improvements under spurious rewards are limited to potentially contaminated \texttt{Qwen-Math} models; similar gains also arise in the \texttt{Llama} and \texttt{QwQ} families, revealing a more nuanced exploration-exploitation dynamic that cannot be explained by contamination alone.

\paragraph{Contributions.} We focus on two fundamental questions: (i) how policy entropy relates to performance, and (ii) whether spurious rewards yield gains, potentially through the interplay of clipping bias and model contamination. Our contributions can be summarized as follows:
\begin{enumerate}
\item We advance the theoretical foundations of RLVR by deriving explicit bounds on clipping bias and showing, under spurious rewards, this bias does not constitute a meaningful learning signal. To capture its effect more precisely, we introduce a novel one-step policy-entropy shift formulation, which establishes a deterministic link between clipping and policy entropy: clipping systematically reduces entropy and drives the policy toward more deterministic, higher-confidence rollouts.
\item We conduct extensive experiments across multiple model families (\texttt{Qwen-Math}, \texttt{Llama}, \texttt{QwQ}) and sizes (7B, 8B, 32B), including both base and distilled variants. These results reconcile conflicting reports in the literature, demonstrating that performance improvements under spurious rewards are robust and not tied to any single model or dataset.
\item We show that these gains cannot be attributed to clipping bias or to causal effects of policy entropy, thereby overturning the prevailing view that improvements under spurious rewards are confined to potentially contaminated \texttt{Qwen-Math} models. Instead, our findings reveal a broader and more nuanced exploration-exploitation dynamic unique to RLVR.
\end{enumerate}

%!TEX root = ../main.tex
\section{Preliminaries and Technical Background}\label{sec:prelim}
\subsection{Group Relative Policy Optimization}
RLVR assigns a binary outcome-based reward $\vrr(\x,\y)$ to a sampled response $\y$ from prompt $\x$ by comparing it against the ground-truth answer $\y^\star$. To learn an optimized policy via these reward, policy gradient methods~\citep{Williams-1992-Simple, Sutton-1998-Introduction} aim to maximize
\begin{equation*}
J(\theta)=\EE_{\x \sim \rho, \y \sim \pi_\theta(\cdot \mid \x)} [\vrr(\x,\y)],
\end{equation*}
where $\rho$ is the prompt distribution and $\pi_\theta$ denotes the LLM policy. The parameter update at each iteration is $\theta \leftarrow \theta + \eta \nabla_\theta J(\theta)$. In practice, the trajectories are generated by an older policy $\pi_{\theta_\textnormal{old}}$, but we wish to estimate the gradient at current policy $\pi_\theta$. By using the importance sampling technique with per-token ratio $r_t(\theta)=\frac{\pi_\theta(\y_t \mid \h_t)}{\pi_\textnormal{old}(\y_t \mid \h_t)}$, it can be rewritten as
\begin{equation*}
J(\theta) = \EE_{\x \sim \rho, \y \sim \pi_{\theta_\textnormal{old}}(\cdot \mid \x)} \left[\sum_{t=1}^{|\y|}r_t(\theta) \vrr(\h_t,\y_t)\right], 
\end{equation*}
where $\y_t$ is the $t$-th token of $\y$, which has $|\y|$ tokens in total, and $\h_t := \{\x, \y_{<t}\}$ with $\h_1=\vx$. Importance sampling might suffer from large variance when $\pi_\theta$ drifts from $\pi_{\theta_\textnormal{old}}$. To stabilize training, we optimize the clipped surrogate objective as follows, 
\begin{equation*}
J(\theta) = \EE_{\x \sim \rho, \y \sim \pi_{\theta_\textnormal{old}}(\cdot \mid \x)}\left[\sum_{t=1}^{|\y|}\min\left\{r_t(\theta)\vrr(\h_t,\y_t), \texttt{clip}(r_t(\theta), 1-\varepsilon, 1+\varepsilon)\vrr(\h_t,\y_t)
\right\}
\right]. 
\end{equation*}
In this context, GRPO \citep{Shao-2024-Deepseekmath} and its variants~\citep{Yu-2025-DAPO, Liu-2025-Understanding, Chu-2025-GPG, Zhang-2025-GVPO, Chen-2025-Stepwise, Li-2025-Knapsack} estimate policy gradients using groups of samples. For each prompt $\x$, GRPO draws a set $\{\y^{(i)}\}_{i=1}^G$ from $\pi_{\theta_{\textnormal{old}}}$. We denote $\y^{(i)}_t$ as the $t$-th token of $i$-th sample $\y^{(i)}$ and $\h_t^{(i)} := \{\x, \y_{<t}^{(i)}\}$ and optimize the clipped objective as follows,  
\begin{equation*}
J(\theta) = \EE_{\x \sim \rho, \{\y^{(i)}\}_{i=1}^G \sim \pi_{\theta_{\textnormal{old}}}(\cdot \mid \x)}\left[\tfrac{1}{G} \sum_{i=1}^G \sum_{t=1}^{|\y^{(i)}|} \min\left\{r_t^{(i)}(\theta)A_i, \texttt{clip}(r_t^{(i)}(\theta), 1-\varepsilon, 1+\varepsilon)A_i\right\}\right],
\end{equation*}
where $r_t^{(i)}(\theta) = \frac{\pi_\theta(\y_t^{(i)} \mid \h_t^{(i)})}{\pi_\textnormal{old}(\y_t^{(i)} \mid\h_t^{(i)})}$, $\varepsilon \in (0,1)$ is a hyper-parameter and the advantage $A_i := A(\x,\y^{(i)})$ is computed from the group rewards as follows, 
\begin{equation}\label{eq:advantage}
A(\x,\y^{(i)}) = \tfrac{\vrr(\x,\y^{(i)}) - \texttt{mean}(\{\vrr(\x,\y^{(1)}),\ldots,\vrr(\x,\y^{(G)})\})}{\texttt{std}(\{\vrr(\x,\y^{(1)}),\ldots,\vrr(\x,\y^{(G)})\})},
\end{equation}
with $\vrr(\x,\y^{(i)})=1$ if $\y^{(i)}$ matches the ground-truth final answer and $\vrr(\x,\y^{(i)})=0$ otherwise. 
\begin{remark} 
Under the GRPO update, the token-level advantage equals the response-level advantage $A_i$ and is independent of token index $t$. 
\end{remark}
\vspace{-1em}
\paragraph{Policy update.} Following \citet{Cui-2025-Entropy}, we use the softmax policy update framework, and one typical iteration amounts to one-step exponentiation update with $G$ rollouts $\{\y^{(i)}\}_{i=1}^G$ as follows, 
\begin{equation}\label{eq:1stepupdate}
\pi_{\theta}(a \mid \h) = \tfrac{\pi_{\theta_{\textnormal{old}}}(a \mid \h) \exp(\eta\tilde{A}(\h, a))}{\sum_{a' \in \mathcal{V}}\pi_{\theta_{\textnormal{old}}}(a' \mid \h) \exp(\eta\tilde{A}(\h, a'))} \propto \pi_{\theta_{\textnormal{old}}}(a \mid \h) \exp(\eta\tilde{A}(\h, a)),
\end{equation}
where $\eta > 0$ is the step size and the advantage of an arbitrary token $a \in \mathcal{V}$ is given by 
\begin{equation}\label{eq:tokenA}
\tilde{A}(\h, a) = \tfrac{1}{G} \sum_{i=1}^G \sum_{t=1}^{|\y^{(i)}|} \left(\tfrac{\1\{\h_t^{(i)}=\h,\y_t^{(i)}=a\}}{\pi_{\textnormal{old}}(a \mid \h)}\right)A_i.
\end{equation}
Throughout the paper, we assume there exists $\pi_{\min}>0$ such that $\pi_{\textnormal{old}}(a\mid \h)\ge \pi_{\min}$ for all $(\h,a)$. For the ease of presentation, we abbreviated $\pi_\theta$ and $\pi_{\theta_{\textnormal{old}}}$ as $\pi_\textnormal{new}$ and $\pi_\textnormal{old}$ in the subsequent analysis. Building upon \cref{eq:1stepupdate}, we derive the following reparameterization for token-level importance ratio, with its proof presented in Appendix~\ref{app:S2proof}.

\begin{lemma} \label{lem:fistexpan} 
Suppose that $\eta\ge 0$. Then, we have 
\begin{equation*}
\log(\pi_\textnormal{new}(a \mid \h)) - \log(\pi_\textnormal{old}(a \mid \h)) - \eta (\tilde{A}(\h, a)-\mu(\h)) + \tfrac{\eta^2}{2}\sigma^2(\h) \leq C\eta^3, 
\end{equation*}
where $\mu(\h)=\EE_{a \sim \pi_{\textnormal{old}}(\cdot \mid \h)}[\tilde{A}(\h,a)]$, $\sigma^2(\h)=\Var_{a \sim \pi_{\textnormal{old}}(\cdot \mid \h)}[\tilde{A}(\h,a)]$ and $C=\frac{1}{36\sqrt{3}(\pi_{\min})^3}$ does not depend on $\eta$. Equivalently, we have $\log(r(\h,a)) - \eta (\tilde{A}(\h, a)-\mu(\h)) + \tfrac{\eta^2}{2}\sigma^2(\h) \leq C\eta^3$. As a consequence, under the standardized setting with $\mu(\h)=0$ and $\sigma^2(\h)=1$, we have 
\begin{equation}\label{e1}
\log(r(\h,a)) - \eta \tilde{A}(\h, a) + \tfrac{\eta^2}{2} \leq C\eta^3. 
\end{equation}
\end{lemma}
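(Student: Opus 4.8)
The plan is to read off the normalizer in the softmax update \cref{eq:1stepupdate} as a moment-generating function, so that the left-hand side of the lemma becomes (minus) the second-order Taylor remainder of the associated cumulant generating function; that remainder is then controlled by a uniform bound on a third derivative.

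First I would take logarithms in \cref{eq:1stepupdate}. This gives the exact identity $\log r(\h,a)=\log\pi_\textnormal{new}(a\mid\h)-\log\pi_\textnormal{old}(a\mid\h)=\eta\,\tilde A(\h,a)-K_\h(\eta)$, where $K_\h(\eta):=\log\sum_{a'\in\mathcal{V}}\pi_\textnormal{old}(a'\mid\h)\exp(\eta\,\tilde A(\h,a'))=\log\EE_{a'\sim\pi_\textnormal{old}(\cdot\mid\h)}[\exp(\eta\,\tilde A(\h,a'))]$ is the cumulant generating function of the random variable $a'\mapsto\tilde A(\h,a')$ under $\pi_\textnormal{old}(\cdot\mid\h)$. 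Substituting this identity, the left-hand side of the claimed inequality simplifies to $-\bigl(K_\h(\eta)-\eta\mu(\h)-\tfrac{\eta^2}{2}\sigma^2(\h)\bigr)$, so it suffices to prove the lower bound $K_\h(\eta)-\eta\mu(\h)-\tfrac{\eta^2}{2}\sigma^2(\h)\ge -C\eta^3$.

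Next I would Taylor expand $K_\h$ about $0$. Because $\mathcal{V}$ is finite and $\pi_\textnormal{old}(\cdot\mid\h)\ge\pi_{\min}>0$, the map $K_\h$ is smooth on $[0,\infty)$ and its first two derivatives at $0$ are exactly the first two cumulants: $K_\h(0)=0$, $K_\h'(0)=\EE_{\pi_\textnormal{old}}[\tilde A(\h,\cdot)]=\mu(\h)$, and $K_\h''(0)=\Var_{\pi_\textnormal{old}}[\tilde A(\h,\cdot)]=\sigma^2(\h)$. Taylor's theorem with Lagrange remainder then produces $\xi\in(0,\eta)$ with $K_\h(\eta)-\eta\mu(\h)-\tfrac{\eta^2}{2}\sigma^2(\h)=\tfrac{\eta^3}{6}K_\h'''(\xi)$, reducing the whole lemma to the uniform estimate $|K_\h'''(\xi)|\le 6C=\tfrac{1}{6\sqrt3\,\pi_{\min}^3}$.

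The last step is this third-derivative bound, which I expect to be the main obstacle. Differentiating $K_\h$ three times identifies $K_\h'''(\xi)$ with the third central moment of $\tilde A(\h,\cdot)$ under the exponentially tilted law $\pi_\xi(a)\propto\pi_\textnormal{old}(a\mid\h)\exp(\xi\,\tilde A(\h,a))$. From \cref{eq:tokenA}, for a fixed prefix $\h$ each rollout contributes its indicator at most once, so, using the $\pi_{\min}$ floor together with the boundedness of the group-normalized advantages, one bounds the range of $\tilde A(\h,\cdot)$ on the support of $\pi_\xi$ by $1/\pi_{\min}$. I would then invoke the classical extremal inequality that a random variable supported in an interval of length $L$ has third central moment at most $L^3/(6\sqrt3)$ in absolute value (a two-point mass at the endpoints being extremal), which yields $|K_\h'''(\xi)|\le 1/(6\sqrt3\,\pi_{\min}^3)$ and hence $|K_\h(\eta)-\eta\mu(\h)-\tfrac{\eta^2}{2}\sigma^2(\h)|\le\eta^3/(36\sqrt3\,\pi_{\min}^3)=C\eta^3$; the required one-sided bound is then immediate, and the standardized consequence \cref{e1} is the special case $\mu(\h)=0$, $\sigma^2(\h)=1$. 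The delicate part is the bookkeeping needed to land the exact range $1/\pi_{\min}$, and hence the exact constant $C$; the cumulant/Taylor skeleton is otherwise routine once the moment-generating-function interpretation is in place.
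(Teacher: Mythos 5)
Your skeleton is exactly the paper's: take logs of the softmax update so that $\log r(\h,a)=\eta\tilde A(\h,a)-\psi_\h(\eta)$ with $\psi_\h$ the cumulant generating function of $\tilde A(\h,\cdot)$ under $\pi_{\textnormal{old}}(\cdot\mid\h)$, expand $\psi_\h$ to second order with a Lagrange remainder, identify $\psi_\h'''(\xi)$ with the third central moment of $\tilde A$ under the exponentially tilted law, and finish with the sharp bound $|\EE[(X-\EE X)^3]|\le (b-a)^3/(6\sqrt3)$ for $X\in[a,b]$. All of that matches the paper's proof step for step, including the bookkeeping $C=\tfrac{1}{6}\cdot\tfrac{1}{6\sqrt3\,\pi_{\min}^3}$.

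The one genuine gap is the envelope you assert but do not prove: that $\tilde A(\h,\cdot)$ lives in an interval of length $1/\pi_{\min}$, i.e.\ $|\tilde A(\h,a)|\le \tfrac{1}{2\pi_{\min}}$. You attribute this to ``the $\pi_{\min}$ floor together with the boundedness of the group-normalized advantages'' and the fact that each rollout contributes its indicator at most once, and you yourself flag the bookkeeping as the delicate part. Those ingredients alone do not suffice: several of the $G$ rollouts can pass through the same $(\h,a)$, and with only $|A_i|\le\sqrt G-1/\sqrt G$ and Cauchy--Schwarz one gets $|\tfrac{1}{G\pi_{\textnormal{old}}(a\mid\h)}\sum_i I_i(\h,a)A_i|\le \tfrac{1}{\pi_{\min}}$, i.e.\ a range of $2/\pi_{\min}$ and a constant $8C$, not $C$. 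The missing idea is to use the group structure of the advantages, namely $\sum_{i=1}^G A_i=0$ together with $\sum_{i=1}^G A_i^2\le G$: with $N$ the number of rollouts hitting $(\h,a)$ these give $|\sum_i I_i(\h,a)A_i|\le\sqrt{N(G-N)}\le G/2$ (equivalently, $\sum_i I_iA_i\le\sum_i A_i^+=\tfrac12\sum_i|A_i|\le G/2$), which is exactly what yields $|\tilde A(\h,a)|\le \tfrac{1}{2\pi_{\min}}$ and hence the stated constant. Since the lemma's entire content is this explicit constant, you need to supply that zero-sum argument; the rest of your proposal is correct and identical in approach to the paper.
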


\subsection{Spurious Reward for RLVR}
Spurious reward arises whenever the feedback signal is misaligned with the ground truth reward. A random reward is a canonical example of such misalignment. In the context of RLVR, we formalize this notion as follows, 
\begin{definition}[Random reward]\label{def:rr}
We consider the binary reward $\vrr(\x,\y^{(i)})$ in \cref{eq:advantage}. 
A \emph{random reward} is a feedback signal independent of $(\x,\y^{(i)})$ and follows that $\vrr(\x,\y^{(i)}) \sim \mathrm{Bernoulli}(\frac{1}{2})$, i.e., $\Pr(\vrr(\x,\y^{(i)})=1)=\Pr(\vrr(\x,\y^{(i)})=0)=\frac{1}{2}$.
\end{definition}
Based on Definition~\ref{def:rr}, we obtain the following lemma for the GRPO advantage mechanism. These properties form the foundation for our subsequent analysis. The proofs are deferred to Appendix~\ref{app:S2proof}. 

\begin{lemma}\label{lem:adv-dist}
Fixing a group size $G \geq 2$ and denoting $\vrr_i := \vrr(\x,\y^{(i)})$ and $A_i:=A(\x,\y^{(i)})$ where $\{\vrr_i\}_{i=1}^G$ are a group of random rewards, we define $\overline{\vrr} = \frac{1}{G} \sum_{i=1}^G \vrr_i$, $\mathbf{S}_\vrr = \sqrt{\frac{1}{G}\sum_{j=1}^G (\vrr_i-\overline{\vrr})^2}$, and $A_i = \frac{\vrr_i-\overline{\vrr}}{\mathbf{S}_{\mathbf{r}}}$. Then, the following statements hold: \emph{(i)} $A_i$ is symmetrically distributed around $0$ and thus $\EE[A_i^{2k-1}]=0$ for all $k \in\mathbb{N}^+$; \emph{(ii)} $|A_i| \leq \sqrt{G}-1/\sqrt{G}$; \emph{(iii)} $E[|A_i|]=\frac{2}{G2^G}\sum_{K=1}^{G-1}\binom{G}{K}\sqrt{K(G-K)}$; for all integers $k\geq 2$, $\mathbb{E}[|A_i|^k] \geq 1-2^{1-G}$. 
\end{lemma}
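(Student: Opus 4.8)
The plan is to condition on the number of successes $K := \sum_{j=1}^G \vrr_j$, which is $\mathrm{Binomial}(G,\tfrac12)$–distributed with $\Pr(K=m)=\binom{G}{m}2^{-G}$, and to reduce every claim to a closed form for $A_i$. Since $\vrr_j^2=\vrr_j$, I would first record $\overline{\vrr}=K/G$ and $\sum_{j=1}^G(\vrr_j-\overline{\vrr})^2 = K - K^2/G = K(G-K)/G$, so that $\mathbf{S}_\vrr^2 = K(G-K)/G^2$ whenever $1\le K\le G-1$, while on the degenerate event $K\in\{0,G\}$ one has $\mathbf{S}_\vrr=0$ and $A_i:=0$ by convention. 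Substituting then gives, for $1\le K\le G-1$, the closed form
\[
A_i=\sqrt{\tfrac{G-K}{K}}\ \ (\text{if }\vrr_i=1),\qquad A_i=-\sqrt{\tfrac{K}{G-K}}\ \ (\text{if }\vrr_i=0),
\]
from which the whole lemma follows by elementary manipulation against the binomial weights.

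For (i), I would invoke the simultaneous sign flip $\vrr_j\mapsto 1-\vrr_j$: it preserves the joint law (each $\vrr_j\sim\mathrm{Bernoulli}(\tfrac12)$), sends $\overline{\vrr}\mapsto 1-\overline{\vrr}$, fixes $\mathbf{S}_\vrr$, hence maps $A_i\mapsto-A_i$, so $A_i$ is symmetric; boundedness then gives $\E[A_i^{2k-1}]=\E[(-A_i)^{2k-1}]=-\E[A_i^{2k-1}]=0$. For (ii), I would maximize the closed form over $K$: $\sqrt{(G-K)/K}$ is decreasing in $K$ and is used only when $\vrr_i=1$ (forcing $K\ge1$), while $\sqrt{K/(G-K)}$ is increasing in $K$ and is used only when $\vrr_i=0$ (forcing $K\le G-1$), so the extreme configurations $K=1$ and $K=G-1$ furnish the claimed bound. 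For the first identity in (iii), exchangeability of the $A_i$ gives $\E[|A_i|]=\tfrac1G\E\!\big[\sum_{i=1}^G|A_i|\big]$, and on $\{K=m\}$ with $1\le m\le G-1$ the closed form gives $\sum_{i=1}^G|A_i| = m\sqrt{(G-m)/m}+(G-m)\sqrt{m/(G-m)} = 2\sqrt{m(G-m)}$; averaging against the binomial weights (the $m\in\{0,G\}$ terms vanishing) yields $\tfrac{2}{G2^G}\sum_{K=1}^{G-1}\binom{G}{K}\sqrt{K(G-K)}$.

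For the lower bound in (iii), I would first treat $k=2$ exactly: on $\{1\le K\le G-1\}$ one has $\sum_{i=1}^G A_i^2 = \sum_i(\vrr_i-\overline{\vrr})^2/\mathbf{S}_\vrr^2 = G$ (and $0$ otherwise), so $\E[A_i^2] = \tfrac1G\cdot G\cdot\Pr(1\le K\le G-1) = 1-2^{1-G}$; it then suffices to prove $\E[|A_i|^k]\ge\E[|A_i|^2]$ for each integer $k\ge2$. Using the flip symmetry to condition on $\vrr_1=1$ (the conditional expectations given $\vrr_1=0$ and $\vrr_1=1$ coincide) and writing $n:=G-1$, $j:=K-1$, I obtain $\E[|A_1|^k]=2^{1-G}\sum_{j=0}^{n-1}\binom{n}{j}\big(\tfrac{n-j}{j+1}\big)^{k/2}$; since $\sum_{j=0}^{n-1}\binom{n}{j}=2^{n}-1=2^{n}(1-2^{1-G})$, the desired inequality becomes $\sum_{j=0}^{n-1}\binom{n}{j}\big[(\tfrac{n-j}{j+1})^{k/2}-1\big]\ge0$. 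The key move is to pair index $j$ with $n-1-j$: the two ratios are reciprocals $x:=\tfrac{n-j}{j+1}$ and $1/x$, and $\binom{n}{n-1-j}=\binom{n}{j+1}=\binom{n}{j}\,x$, so each pair contributes $\binom{n}{j}\big(x^{k/2}+x^{1-k/2}-1-x\big)$ (and when $n$ is odd the lone central index has $x=1$, contributing $0$). It then remains to verify $g(x):=x^{k/2}+x^{1-k/2}-1-x\ge0$ for $x>0$ and integer $k\ge2$, which follows from $g(1)=g'(1)=0$ together with $g''(x)=\tfrac{k}{2}\big(\tfrac{k}{2}-1\big)\big(x^{k/2-2}+x^{-k/2-1}\big)\ge0$, i.e.\ $g$ is convex with a double root at $x=1$.

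The main obstacle is this last step — finding the reciprocal pairing of binomial coefficients and recognizing that it collapses the whole estimate to the single-variable convexity inequality $x^{k/2}+x^{1-k/2}\ge1+x$. Everything upstream is bookkeeping once the closed form for $A_i$ is available; the only point needing care is the degenerate event $\{K\in\{0,G\}\}$, where $\mathbf{S}_\vrr=0$, $A_i$ is set to $0$, and which is responsible for the $2^{1-G}$ corrections throughout.
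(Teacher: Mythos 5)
Your overall route --- conditioning on $K=\sum_{j}\vrr_j$, deriving the closed form for $A_i$ on $\{1\le K\le G-1\}$ (with $A_i=0$ on the degenerate event), the global flip $\vrr_j\mapsto 1-\vrr_j$ for (i), exchangeability plus $\sum_i|A_i|=2\sqrt{K(G-K)}$ for the exact value of $\EE[|A_i|]$, and the reduction of the moment lower bound to the scalar inequality $x^{k/2}+x^{1-k/2}\ge 1+x$ proved by convexity with a double root at $x=1$ --- is essentially the paper's proof. The only cosmetic difference is in the moment bound: the paper computes $\EE[|A_i|^k\mid K]=\frac{x^{k/2}+x^{1-k/2}}{1+x}$ with $x=\frac{K/G}{1-K/G}$ and shows it is $\ge 1$ on each $\{K=m\}$, whereas you condition on $\vrr_1=1$ and pair the binomial terms $j\leftrightarrow n-1-j$; both are the same algebra around the same kernel, and your pairing identity $\binom{n}{j+1}=\binom{n}{j}\frac{n-j}{j+1}$ is used correctly. (Your exact $k=2$ computation and the remark that it ``suffices to show $\EE[|A_i|^k]\ge\EE[|A_i|^2]$'' are harmless but unused, since you then prove the bound directly.)

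The step that fails is (ii). Under the normalization you (and the lemma statement) use, $\mathbf{S}_\vrr^2=\frac{1}{G}\sum_j(\vrr_j-\overline{\vrr})^2=\frac{K(G-K)}{G^2}$, so your closed form gives $|A_i|=\sqrt{(G-K)/K}$ when $\vrr_i=1$, whose maximum over $1\le K\le G-1$ is $\sqrt{G-1}$, not $\sqrt{G}-1/\sqrt{G}=(G-1)/\sqrt{G}$. Since $\sqrt{G-1}>(G-1)/\sqrt{G}$ for every $G\ge 2$, the extreme configurations do not ``furnish the claimed bound''; e.g.\ for $G=2$, $K=1$ one gets $|A_i|=1>\sqrt{2}-1/\sqrt{2}$, so under the $\frac1G$ convention the stated bound is simply not attained by your formula and cannot be derived from it. The value $\sqrt{G}-1/\sqrt{G}$ is exactly the extreme of the Bessel-corrected normalization $\mathbf{S}_\vrr^2=\frac{K(G-K)}{G(G-1)}$, which is what the paper's proof of (ii) silently switches to (while its parts (i) and (iii), like yours, use the $\frac1G$ convention, which is needed for the exact formula for $\EE[|A_i|]$ and for $\EE[A_i^2]=1-2^{1-G}$). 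So the mismatch is inherited from the source, but as written your assertion that the maximum equals $\sqrt{G}-1/\sqrt{G}$ is an arithmetic slip: either prove (ii) with the bound $\sqrt{G-1}$ under the $\frac1G$ convention, or explicitly adopt the $\frac{1}{G-1}$ convention for that part, with the closed form $|A_i|=\sqrt{G-1}\sqrt{(G-K)/(GK)}$, as the paper does.
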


We examine several empirical findings related to random rewards. Notably, \citet{Shao-2025-Spurious} reports striking performance gains on \texttt{MATH500} for the \texttt{Qwen-Math} family when models are fine-tuned using the random reward defined in Definition~\ref{def:rr}. However, similarly large improvements are not observed for several other model families. \citet{Wu-2025-Memorization} likewise find substantial contamination in \texttt{Qwen-Math} on the \texttt{MATH500} validation benchmark, hypothesizing that the apparent gains under random reward largely stem from reinforcing memorized or contaminated trajectories. In particular, \citet{Shao-2025-Spurious} attributes these gains to the PPO-style upper-clipping bias, formalized as follows, 
\begin{remark}[Upper-clipping bias]\label{rmk:upbias}
The upper clipping enforces $r_t^{(i)}(\theta) = \frac{\pi_\textnormal{new}(\y_t^{(i)} \mid \h_t^{(i)})}{\pi_\textnormal{old}(\y_t^{(i)} \mid \h_t^{(i)})} \leq 1+\varepsilon$ and implies that $\pi_\textnormal{new}(\y_t^{(i)} \mid \h_t^{(i)}) \leq (1+\varepsilon)\pi_\textnormal{old}(\y_t^{(i)} \mid \h_t^{(i)})$. Equivalently, we have
\begin{equation*}
\Delta_{\max}(\y_t^{(i)}) = \pi_\textnormal{new}(\y_t^{(i)} \mid \h_t^{(i)}) - \pi_\textnormal{old}(\y_t^{(i)}\mid \h_t^{(i)}) \leq \varepsilon\pi_\textnormal{old}(\y_t^{(i)} \mid \h_t^{(i)}).
\end{equation*}
If $\pi_\textnormal{old}(\y_t^{(i)} \mid \h_t^{(i)}) \geq \pi_\textnormal{old}(\y_{t'}^{(i)} \mid \h_{t'}^{(i)})$ and the upper clipping are active for both tokens, we have $\Delta_{\max}(\y_t^{(i)})\geq \Delta_{\max}(\y_{t'}^{(i)})$.
\end{remark}
The above interpretation indicates that upper clipping permits larger absolute increases for tokens that already have relatively high probability, whereas low-probability tokens reach the clipping threshold much earlier. This asymmetry can preferentially amplify high-prior responses, potentially exploiting latent knowledge rather than fostering new reasoning ability.

However, \citet{Oertell-2025-Heuristics} challenge this interpretation, arguing that the reported gains arise from algorithmic heuristics and evaluation artifacts; in their experiments, random-reward fine-tuning does not consistently improve reasoning and can even degrade it. These conflicting findings highlight how little is currently understood about RLVR learning dynamics and motivate two central questions: (i) \emph{Can random rewards improve model performance, and under what conditions}? (ii) \emph{Does clipping bias provide a meaningful learning signal, and if not, what role does it actually play}? Following prior work, our empirical analysis also focuses primarily on \texttt{MATH500}. We further discuss the broader implications of random-reward training for general reinforcement learning settings in Appendix~\ref{app:RW}.

\subsection{LLM Policy Entropy} \label{s2.3}
Policy entropy $\HCal(\pi_{\theta})$ quantifies the diversity of a policy's action distribution. A high-entropy policy allocates probability more evenly across actions, producing a wider variety of sampled responses, whereas a low-entropy policy concentrates probability on a small subset of actions, resulting in more deterministic behavior \citep{Li-2025-Preserving}.
\begin{definition}[Policy entropy]\label{def:entropy} 
For any given policy $\pi_\theta$, its entropy over a rollout trajectory space $\y \in \mathcal{Y}$ given prompt $\x$ can be defined as follows: 
\begin{equation*}
\HCal(\pi_{\theta}) = -\EE_{\y \sim \pi_\theta(\cdot \mid \x)}[\log(\pi_\theta(\y\mid \x))] = -\sum_{\y \in \mathcal{Y}} \pi_\theta(\y \mid \x)\log(\pi_\theta(\y\mid \x)).
\end{equation*}
\vspace{-1.5em}
\end{definition}
Recent works in RLVR has begun to examine how policy entropy influences model performance. A common perspective emphasizes avoiding “entropy collapse’’ to prevent premature convergence to a low-diversity, suboptimal policy \citep{Yu-2025-DAPO}. At the token level, \citet{Wang-2025-Beyond} similarly highlight the importance of minority high-entropy tokens for effective reasoning. Yet several studies report the opposite pattern: reducing entropy can be beneficial. \citet{Agarwal-2025-Entropy} explicitly optimize an entropy-minimization objective and observe performance improvements, and \citet{Cui-2025-Entropy} even propose a monotonic relationship in which lower entropy yields better performance. These conflicting findings raise a second fundamental question: (iii) \emph{Is there a direct causal relationship between policy entropy and policy performance?}

Beyond empirical observations, \citet{Cui-2025-Entropy} provide a theoretical analysis by deriving the following estimate of the one-step change in policy entropy:
\begin{equation}\label{e5}
\HCal(\pi_{\textnormal{new}}) - \HCal(\pi_{\textnormal{old}}) \approx - \textnormal{Cov}_{\y \sim \pi_\theta(\cdot \mid \x)} (\log(\pi_\textnormal{old}(\y\mid \x)), A(\x,\y)).
\end{equation}
Intuitively, if the reward is positively correlated with the rollout probability, meaning high-probability responses tend to receive reward 1 while low-probability responses receive reward 0, the policy becomes more peaked, leading to a decrease in entropy. Conversely, if low-probability responses receive reward 1 and high-probability responses receive reward 0, the policy is pushed toward a flatter distribution, increasing its entropy. However, we emphasize that the approximation in \cref{e5} does not apply for analyzing RLVR with random rewards. 
\begin{remark}\label{rmk:rand-not-app} 
Under random rewards, because $A(\x,\y)$ is independent of $\pi_{\mathrm{old}}(\y\mid \x)$ and has zero mean, substituting into \cref{e5} yields $\HCal(\pi_\textnormal{new}) - \HCal(\pi_\textnormal{old}) = 0$ (see Appendix~\ref{app:S4proof} for details). This implies that policy entropy should remain constant throughout training. However, this prediction contradicts our empirical observations, which exhibit a clear interaction between clipping and entropy dynamics. The discrepancy arises because \cref{e5} (i) retains only first-order terms in the policy expansion, ignoring higher-order contributions, and most importantly and (ii) assumes an unclipped formulation. Our theoretical results in \S\ref{s4.1} provide a more complete picture of how clipping interacts with and modulates policy entropy.
\end{remark}

%!TEX root = ../main.tex
\section{Clipping and Model Performance}\label{s3}
We provide a rigorous analysis of the upper-clipping bias from Remark~\ref{rmk:upbias}. Indeed, we derive explicit bounds on the magnitude of the clipping bias and describe its effect on the learning signal. We further validate our theoretical findings with extensive empirical experiments.

\subsection{Theoretical Results}\label{s3.1}
We begin by decomposing the upper-clipping surrogate into two components: the raw term $N_t$, corresponding to the unclipped surrogate, and the clipping-correction term $N_t^{\textnormal{clip}}$. 
\begin{definition}
Suppose a rollout $\y$ of length $L$ is sampled from a prompt $\x$ and the clip ratio is $\varepsilon \in (0, 1)$. For simplicity, we denote the token-level ratio $r(\h_t,\y_t)$ as $r_t$. Then, we define the clipped token-level ratio as $\bar{r}_t = \textnormal{clip}(r_t, 1-\varepsilon, 1+\varepsilon) = \max\{\min\{r_t,1+\varepsilon\},1-\varepsilon\}$, the raw surrogate as $N_t = r_t A(\x,\y)$, the clipping-correction surrogate as $N_t^{\textnormal{clip}}=\bar{r}_t A(\x,\y)$, and the upper activation indicator $I_t^+ := \1_{\{r_t>1+\varepsilon\}}$. The corresponding total \emph{upper clipping correction} $C_\textnormal{tot}^+$ is defined as 
\begin{equation*}
C_\textnormal{tot}^+ = \sum_{t=1}^L (N_t^{\textnormal{clip}} - N_t)I_t^+ = \sum_{t=1}^L (\bar{r}_t-r_t)I_t^+A(\x,\y).
\end{equation*}
\end{definition}
For simplicity, we omit the superscript $i$ since it can be applied to any sample of the response group. The following theorem provides an upper bound on $\EE[|C_\textnormal{tot}^+|]$; its proof is deferred to Appendix~\ref{app:S3proof}.
\begin{theorem}\label{thm:bias_subgauss}
Let a prompt $\x$ have a response group of size $G$, each rollout has length $L$, and the clip ratio is $\varepsilon\in(0,1)$. For any rollout $\y$, write $A:=A(\x,\y)$. Denote $p_+:=\EE[I_t^+]$ and $D_t^+ := (\bar r_t-r_t)I_t^+$ such that $C_\textnormal{tot}^+ = \sum_{t=1}^L D_t^+ A$. Then, for all $\eta>0$, we have
\begin{equation}\label{eq:upper_clip_global}
\EE[|C_{\textnormal{tot}}^{+}|] \leq M\sqrt{2p_+L R_\eta^{\max} \phi(R_\eta^{\max})} + ML\Delta_\eta^+\min\left\{\sqrt{p_+}, \tfrac{\phi(R_\eta^{\max})}{\phi(1+\varepsilon)}\right\}, 
\end{equation}
where $R_\eta^{\max} = e^{\eta/\pi_{\min}}$, $M=\sqrt{G-1}$,  $\phi(u)=u\log u-u+1$, and $\Delta_\eta^+=(R_\eta^{\max}-1-\varepsilon)_+$. For sufficiently small $\eta$, we have $\EE[|C_\textnormal{tot}^+|] \leq c_1\eta\sqrt{L}+\min\{c_2\eta\sqrt{p}L, c_3\eta^3 L\}$ where $c_1=M\sqrt{2e}\pi_{\min}^{-1}$, $c_2=M(e-1)\pi_{\min}^{-1}$, and $c_3=M(e-1)\phi(1+\varepsilon)^{-1}\pi_{\min}^{-3}$. 
\end{theorem}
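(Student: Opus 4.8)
The plan is to reduce the statement to a scalar second-moment estimate on $\sum_{t=1}^L D_t^+$, after recording three structural facts. First, on the event $\{I_t^+=1\}$ one has $\bar r_t = 1+\varepsilon$, so $D_t^+ = (1+\varepsilon-r_t)\,\1_{\{r_t>1+\varepsilon\}}\le 0$ and $|D_t^+| = (r_t-1-\varepsilon)_+$; in particular all $D_t^+$ share a sign, so $|C_\textnormal{tot}^+| = |A|\sum_{t=1}^L|D_t^+|$. Second, $|A|\le M:=\sqrt{G-1}$ (a consequence of Lemma~\ref{lem:adv-dist}(ii)), which peels off the advantage and leaves $\EE[|C_\textnormal{tot}^+|]\le M\,\EE[\sum_t|D_t^+|]$. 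Third, a uniform one-sided ratio bound $r_t\le R_\eta^{\max}:=e^{\eta/\pi_{\min}}$, which I would derive from the softmax update \cref{eq:1stepupdate} by bounding the partition function from below via Jensen ($Z_\h\ge e^{\eta\mu(\h)}$) and controlling $\tilde A(\h,\cdot)-\mu(\h)$ through the floor $\pi_{\min}$ and the group-advantage bound; this gives $|D_t^+|\le\Delta_\eta^+\,\1_{\{I_t^+=1\}}$ with $\Delta_\eta^+=(R_\eta^{\max}-1-\varepsilon)_+$, and $r_t\in(1,R_\eta^{\max}]$ whenever $I_t^+=1$.

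The analytic core is the Bregman inequality $\phi(u)=u\log u-u+1\ge\frac{(u-1)^2}{2u}$ for $u\ge 1$, immediate since $g(u):=2u\phi(u)-(u-1)^2$ has $g(1)=0$ and $g'(u)=4\phi(u)\ge0$. On the active set it upgrades the linear clipping gap into a quadratic one: $|D_t^+|^2\le(r_t-1)^2\le 2r_t\phi(r_t)\le 2R_\eta^{\max}\phi(r_t)$. I would then use the identity $\EE_{a\sim\pi_\textnormal{old}(\cdot\mid\h)}[\phi(r(\h,a))]=\KL(\pi_\textnormal{new}(\cdot\mid\h)\,\|\,\pi_\textnormal{old}(\cdot\mid\h))$ — obtained by writing $\pi_\textnormal{old}\phi(r)=\pi_\textnormal{new}\log(\pi_\textnormal{new}/\pi_\textnormal{old})-\pi_\textnormal{new}+\pi_\textnormal{old}$ and summing over $\mathcal V$ — together with $\phi(r_t)\le\phi(R_\eta^{\max})$ on the admissible range of ratios, to obtain a budget for $\EE[\sum_t\phi(r_t)\,\1_{\{I_t^+=1\}}]$ in terms of $R_\eta^{\max}$, $\phi(R_\eta^{\max})$ and the expected clip mass $p_+L$.

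For the first term I would apply Cauchy–Schwarz, both pointwise in $t$ and in expectation: $\EE[\sum_t|D_t^+|]=\EE[\sum_t|D_t^+|\1_{\{I_t^+=1\}}]\le(\EE[\sum_t\1_{\{I_t^+=1\}}])^{1/2}(\EE[\sum_t|D_t^+|^2])^{1/2}$, feeding $|D_t^+|^2\le 2R_\eta^{\max}\phi(r_t)\1_{\{I_t^+=1\}}$ and the $\phi$-budget into the second factor to produce the $M\sqrt{2p_+LR_\eta^{\max}\phi(R_\eta^{\max})}$ term. For the second term I would instead keep the crude bound $|D_t^+|\le\Delta_\eta^+\1_{\{I_t^+=1\}}$, so $\EE[\sum_t|D_t^+|]\le\Delta_\eta^+\,\EE[\sum_t\1_{\{I_t^+=1\}}]$, and estimate the clip mass two ways — directly by $p_+L$ (yielding, after Cauchy–Schwarz, the $\sqrt{p_+}$ branch) and via the monotonicity bound $\1_{\{r_t>1+\varepsilon\}}\le\phi(r_t)/\phi(1+\varepsilon)$ combined with the KL budget (yielding the $\phi(R_\eta^{\max})/\phi(1+\varepsilon)$ branch) — retaining the minimum; multiplying by $M$ assembles \cref{eq:upper_clip_global}. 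The small-$\eta$ corollary is then bookkeeping: $R_\eta^{\max}-1=\eta/\pi_{\min}+O(\eta^2)$, so $\phi(R_\eta^{\max})=\frac{\eta^2}{2\pi_{\min}^2}+O(\eta^3)$ and $R_\eta^{\max}\phi(R_\eta^{\max})=O(\eta^2)$, while $\Delta_\eta^+\le R_\eta^{\max}-1\le e\eta/\pi_{\min}$ for $\eta\le\pi_{\min}$ and $p_+\le1$; substituting into the two terms gives $c_1\eta\sqrt L$ and $\min\{c_2\eta\sqrt{p_+}L,\,c_3\eta^3L\}$ with the stated constants.

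The main obstacle will be the second-moment/counting step that produces $\sqrt L$ rather than a naive $L$: the clip indicators $\1_{\{I_t^+=1\}}$, the ratios $r_t$ and $A$ are all measurable functions of the same group of rollouts and random rewards, so one cannot assume independence, and bounding $\EE[\sum_t\phi(r_t)\1_{\{I_t^+=1\}}]$ sharply enough forces one to use the per-context KL identity carefully along the realized trajectory instead of summing $L$ separate worst-case contributions. A secondary nuisance is making the uniform bound $r_t\le e^{\eta/\pi_{\min}}$ fully rigorous from \cref{eq:1stepupdate}, since this requires tracking the worst-case size of $\tilde A(\h,a)-\mu(\h)$ under the $\pi_{\min}$ floor and Lemma~\ref{lem:adv-dist}.
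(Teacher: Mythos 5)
Your structural reductions (on the active set $D_t^+\le 0$, hence $|C_\textnormal{tot}^+|=|A|\sum_t|D_t^+|$; $|A|\le M$; the ratio bound $r_t\le e^{\eta/\pi_{\min}}$ via the softmax update and $|\tilde A|\le \tfrac{1}{2\pi_{\min}}$; and the inequality $(u-1)^2\le 2u\phi(u)$) all match ingredients of the paper's proof. The step that fails is the one you rely on to "produce" the first term. Your double Cauchy–Schwarz on the first moment gives
\begin{equation*}
\EE\Big[\sum_{t=1}^L|D_t^+|\Big]\;\le\;\Big(\EE\Big[\sum_t I_t^+\Big]\Big)^{1/2}\Big(\EE\Big[\sum_t|D_t^+|^2\Big]\Big)^{1/2}\;\le\;(p_+L)^{1/2}\big(2R_\eta^{\max}\phi(R_\eta^{\max})\,p_+L\big)^{1/2}\;=\;p_+L\sqrt{2R_\eta^{\max}\phi(R_\eta^{\max})},
\end{equation*}
which scales linearly in $p_+L$, not as $\sqrt{p_+L}$; it exceeds the claimed term $\sqrt{2p_+LR_\eta^{\max}\phi(R_\eta^{\max})}$ precisely when $p_+L>1$, which is the practically relevant regime (Corollary~\ref{cor:ratio} has $p_+=10^{-3}$, $L=4096$). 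You correctly identified the "$\sqrt L$ versus $L$" counting as the main obstacle, but the proposed mechanism does not overcome it, and one cannot shortcut via $\EE[\sum_t|D_t^+|]\le(\EE[\sum_t|D_t^+|^2])^{1/2}$ either, since $(\sum_t|D_t^+|)^2\ge\sum_t|D_t^+|^2$ goes the wrong way. The paper obtains the $\sqrt{p_+L}$ scaling by bounding $\EE[|C_\textnormal{tot}^+|]\le\sqrt{\EE[(C_\textnormal{tot}^+)^2]}$ for the \emph{total}, expanding the square into diagonal and cross terms: the diagonal is bounded by $2LM^2R_\eta^{\max}\phi(R_\eta^{\max})p_+$ (this is where $(u-1)^2\le2u\phi(u)$ enters), the cross terms by $M^2(\Delta_\eta^+)^2\,\EE[(\sum_tI_t^+)^2]\le M^2(\Delta_\eta^+)^2L^2\min\{p_+,(\phi(R_\eta^{\max})/\phi(1+\varepsilon))^2\}$, and only then is the square root taken with $\sqrt{a+b}\le\sqrt a+\sqrt b$; the $\sqrt{p_+}$ in the second term comes from the square root of $\EE[(\sum_tI_t^+)^2]\le L^2p_+$, not from a first-moment Cauchy–Schwarz. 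The KL identity $\EE_{\pi_\textnormal{old}}[\phi(r)]=\KL(\pi_\textnormal{new}\|\pi_\textnormal{old})$ you invoke is true but ends up unused once you pass to $\phi(r_t)\le\phi(R_\eta^{\max})$.

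To be fair, the literal inequality in \cref{eq:upper_clip_global} can be salvaged from your crude branch alone: $\EE[|C_\textnormal{tot}^+|]\le M\Delta_\eta^+\,\EE[\sum_tI_t^+]\le ML\Delta_\eta^+\min\{p_+,\phi(R_\eta^{\max})/\phi(1+\varepsilon)\}$, which already sits below the second term of the claimed bound (and likewise yields the small-$\eta$ statement without needing $c_1\eta\sqrt L$), so adding the nonnegative first term preserves the inequality. But that is not the argument you describe — you present the Cauchy–Schwarz computation as yielding the $M\sqrt{2p_+LR_\eta^{\max}\phi(R_\eta^{\max})}$ term, which it does not — and it trivializes the two-term structure rather than establishing it. If you want the stated bound with its $\sqrt{p_+L}$ diagonal contribution (the part that matters when the clip mass is non-negligible), you need the second-moment expansion of $C_\textnormal{tot}^+$ with the diagonal/off-diagonal split as in the paper.
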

\begin{remark}
Theorem~\ref{thm:bias_subgauss} shows that the upper bound on the total clipping-correction term depends on the (empirical) expected token-level activation rate $p$: larger $p$ would bring more clipping correction. $p$ varies across model families but can be directly monitored during training. This motivates a general, model-agnostic framework for analyzing clipping effects -- one that applies uniformly across architectures by expressing all bounds in terms of the observable activation rate $p$. 
\end{remark}

To quantify the effect of clipping, we establish the following bound relating the magnitude of the raw surrogate sum to the total clipping correction. For the proof, please refer to Appendix~\ref{app:S3proof}.
\begin{theorem}[Law of Clipping] \label{thm:raw_vs_bias}
Under the same settings as Theorem~\ref{thm:bias_subgauss}, we define the raw surrogate sum $N_{\rm raw}=\sum_{t=1}^Lr_tA$. Then, for all $\eta>0$, we have
\begin{equation}\label{eq:nraw_lb_exp}
\EE[|N_{\textnormal{raw}}|] \geq L\EE[|A|]e^{-C\eta^2} \geq L\EE[|A|](1-C\eta^2), 
\end{equation}
where $C=\frac{1}{8\pi_{\min}^2}$. Furthermore, we have
\begin{equation*}
\frac{\EE[|N_{\textnormal{raw}}|]}{\EE[|C_{\textnormal{tot}}^+|]} \geq \frac{\EE[|A|](1-C\eta^2)}{L^{-1/2}M\sqrt{2p_+R_\eta^{\max}\phi(R_\eta^{\max})} + M\Delta_\eta^+ \min\left\{\sqrt{p_+},\frac{\phi(R_\eta^{\max})}{\phi(1+\varepsilon)}\right\}}.
\end{equation*}
In addition, under practical hyperparameter settings, we have $\EE[|N_{\textnormal{raw}}|] \gg \EE[|C_{\textnormal{tot}}^+|]$. A quantitative evaluation using the parameters from our actual training setup is given in Corollary~\ref{cor:ratio}. 
\end{theorem}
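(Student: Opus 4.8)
The plan is to reduce the statement to a lower bound on $\EE\bigl[|A|\sum_{t=1}^L r_t\bigr]$ and then read the ratio bound off \cref{thm:bias_subgauss}. First I would observe that each token ratio $r_t=r(\h_t,\y_t)=\pi_\textnormal{new}(\y_t\mid\h_t)/\pi_\textnormal{old}(\y_t\mid\h_t)$ is strictly positive, so $|N_\textnormal{raw}|=\bigl|A\sum_{t=1}^L r_t\bigr|=|A|\sum_{t=1}^L r_t$ exactly, and it suffices to prove $\EE\bigl[|A|\sum_t r_t\bigr]\ge L\,\EE[|A|]\,e^{-\eta^2/(8\pi_{\min}^2)}$.

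Next I would substitute the one-step softmax update \cref{eq:1stepupdate}, which yields $r_t=e^{\eta\tilde A(\h_t,\y_t)}/Z(\h_t)$ with partition function $Z(\h)=\EE_{a\sim\pi_\textnormal{old}(\cdot\mid\h)}[e^{\eta\tilde A(\h,a)}]$. Using $\pi_\textnormal{old}(a\mid\h)\ge\pi_{\min}$ together with $|A_i|\le\sqrt G-1/\sqrt G$ from \cref{lem:adv-dist}, the random variable $a\mapsto\tilde A(\h,a)$ under $\pi_\textnormal{old}(\cdot\mid\h)$ is supported on an interval of width controlled by $\pi_{\min}^{-1}$, so Hoeffding's lemma gives $\log Z(\h)\le\eta\mu(\h)+\eta^2/(8\pi_{\min}^2)$, hence $r_t\ge e^{-\eta^2/(8\pi_{\min}^2)}\exp\!\bigl(\eta(\tilde A(\h_t,\y_t)-\mu(\h_t))\bigr)$. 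Summing over $t$ and applying $e^x\ge 1+x$ gives $\sum_t r_t\ge e^{-\eta^2/(8\pi_{\min}^2)}\bigl(L+\eta\sum_t(\tilde A(\h_t,\y_t)-\mu(\h_t))\bigr)$.

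It then remains to multiply by $|A|$, take expectations, and kill the linear term. Here I would use that, by \cref{eq:tokenA}, $\tilde A(\h_t,\y_t)-\mu(\h_t)$ is a linear combination $\sum_{j=1}^G c_j A_j$ of the group advantages whose coefficients $c_j$ depend only on the sampled rollouts and on $\pi_\textnormal{old}$; and, under random rewards, the rollouts are independent of the reward vector while the reward law is invariant under the global flip $\vrr_j\mapsto 1-\vrr_j$, under which every $A_j\mapsto-A_j$ but $|A|$ is unchanged. Hence $\EE[\,|A|\,A_j\,]=0$ for every $j$ (the diagonal case $j=i$ being \cref{lem:adv-dist}(i)), so $\EE\bigl[|A|\sum_t(\tilde A(\h_t,\y_t)-\mu(\h_t))\bigr]=0$ and $\EE[|N_\textnormal{raw}|]\ge L\,\EE[|A|]\,e^{-\eta^2/(8\pi_{\min}^2)}$; the remaining inequality in \cref{eq:nraw_lb_exp} is just $e^{-x}\ge 1-x$. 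The displayed ratio bound then follows at once by dividing the lower bound \cref{eq:nraw_lb_exp} by the upper bound \cref{eq:upper_clip_global} and cancelling a single factor of $L$ (which turns $ML\sqrt{2p_+L\,R_\eta^{\max}\phi(R_\eta^{\max})}$ into $L^{-1/2}M\sqrt{2p_+R_\eta^{\max}\phi(R_\eta^{\max})}$ and $ML\Delta_\eta^+\min\{\cdot\}$ into $M\Delta_\eta^+\min\{\cdot\}$); and $\EE[|N_\textnormal{raw}|]\gg\EE[|C_\textnormal{tot}^+|]$ follows by inserting realistic parameter ranges ($\eta$ small, so $R_\eta^{\max}=e^{\eta/\pi_{\min}}\approx 1$ and $\phi(R_\eta^{\max})=O(\eta^2)$; $\Delta_\eta^+=0$ unless $\varepsilon$ is tiny; $p_+$ small; $L$ large), as quantified in \cref{cor:ratio}.

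The main obstacle will be the partition-function step: establishing $\log Z(\h)\le\eta\mu(\h)+\eta^2/(8\pi_{\min}^2)$ with exactly this constant and uniformly over all $\eta>0$ (rather than only to leading order) requires pinning down the effective support width of $\tilde A(\h,\cdot)$ in terms of $\pi_{\min}$, $G$, and the $|A_i|$-bound of \cref{lem:adv-dist}, and then invoking Hoeffding's lemma in place of the $O(\eta^3)$-error expansion of \cref{lem:fistexpan}. Once that estimate is secured everything else is routine, because the only potentially dangerous term --- the $O(\eta)$ contribution --- is annihilated exactly by the reward-sign symmetry, with no higher-order bookkeeping needed.
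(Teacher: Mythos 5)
Your argument is correct and lands on exactly the paper's constant $C=\frac{1}{8\pi_{\min}^2}$, but it executes the key lower bound differently from the paper. The paper first uses the reward-flip symmetry (via \cref{lem:adv-dist}(i)) to write $\EE[|A|\sum_t r_t(\eta)]=\EE[|A|\sum_t r_t(-\eta)]$, then bounds the symmetrized ratio \emph{pointwise}: $\tfrac{r_\eta+r_{-\eta}}{2}\ge (Z_\h(\eta)Z_\h(-\eta))^{-1/2}$ by AM--GM, and $Z_\h(\pm\eta)\le \cosh(\eta M)\pm\tfrac{\mu(\h)}{M}\sinh(\eta M)$ by convexity of $x\mapsto e^{\eta x}$ on $[-M,M]$ with $M=\tfrac{1}{2\pi_{\min}}$, so the product of partition functions is at most $\cosh^2(\eta M)\le e^{\eta^2/(4\pi_{\min}^2)}$ and the mean terms cancel with no linearization needed. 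You instead bound $Z_\h(\eta)$ alone by Hoeffding's lemma (which is legitimate and gives the same constant, since the support width is $2M=\pi_{\min}^{-1}$ and Hoeffding holds for all $\eta$, so the ``uniformity'' worry you raise is not an obstacle), linearize via $e^x\ge 1+x$, and then kill the resulting $O(\eta)$ term in expectation using the same flip symmetry applied to $\EE[|A|\,A_j]$ with rollout-measurable coefficients; this step is sound because under Definition~\ref{def:rr} the rewards are independent of the rollouts and the joint law is invariant under $\vrr\mapsto 1-\vrr$. So both proofs rest on the same two ingredients --- the envelope $|\tilde A(\h,a)|\le\tfrac{1}{2\pi_{\min}}$ and the sign symmetry of random-reward advantages --- deployed at different points; the paper's route avoids any expectation-level cancellation (the bound holds conditionally on the data), while yours trades that for the more standard Hoeffding estimate. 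Two small remarks: the envelope you flag as the main obstacle is already proved in the paper (proof of \cref{lem:fistexpan}, using $\sum_i A_i=0$ and $\sum_i A_i^2\le G$; the cruder bound via $|A_i|\le\sqrt G-1/\sqrt G$ alone would degrade the constant), so you can simply cite it; and in the final division you misquote the first term of \cref{eq:upper_clip_global} as $ML\sqrt{2p_+L R_\eta^{\max}\phi(R_\eta^{\max})}$ --- it is $M\sqrt{2p_+L R_\eta^{\max}\phi(R_\eta^{\max})}$, and dividing that by $L$ indeed yields the stated $L^{-1/2}M\sqrt{2p_+R_\eta^{\max}\phi(R_\eta^{\max})}$, so the conclusion you write is the right one despite the transcription slip (also note that in the paper's numerical regime $\Delta_\eta^+\approx 0.449\neq 0$, so the ``$\Delta_\eta^+=0$'' heuristic is not how Corollary~\ref{cor:ratio} gets its number).
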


\subsection{Model-specific Evaluation}\label{s3.2}
Following the hyperparameter configuration of \citet{Shao-2025-Spurious}, we train \texttt{Qwen2.5-Math-7B} on the DeepScaleR dataset \citep{Luo-2025-Deepscaler} using random rewards drawn from $\textnormal{Bernoulli}(\frac{1}{2})$. The training setup uses a batch size of $128$, group size of $16$, decoding temperature $1.0$, clipping ratio $0.2$, learning rate $5\times10^{-7}$, and KL coefficient $0$. 

We run multiple consecutive experiments with and without clipping using the \texttt{verl} framework \citep{Sheng-2025-Hybridflow}. The resulting training trajectories on the \texttt{MATH500} validation set, together with the clipping activation fraction over training, are shown in Figure~\ref{f1}. We adopt the default training prompt from \texttt{verl}, which instructs the model to enclose its final answer in a box for verifier validation (see Appendix~\ref{app:RW} for further discussion). Notably, for \texttt{Qwen2.5-Math-7B}, the clipping activation rate is substantially lower than what is typically observed in other base models:
\begin{remark}\label{rmk:activationratio} 
Empirically, the clipping activation ratio is usually below $1\%$ for general GRPO training. For specific \texttt{Qwen2.5-Math-7B} training, the clipping activation ratio never exceeds $0.2\%$, with expected activation probability $\EE[I_t] \approx 0.001$.
\end{remark}
\begin{figure*}[!h]
\includegraphics[width=\linewidth]{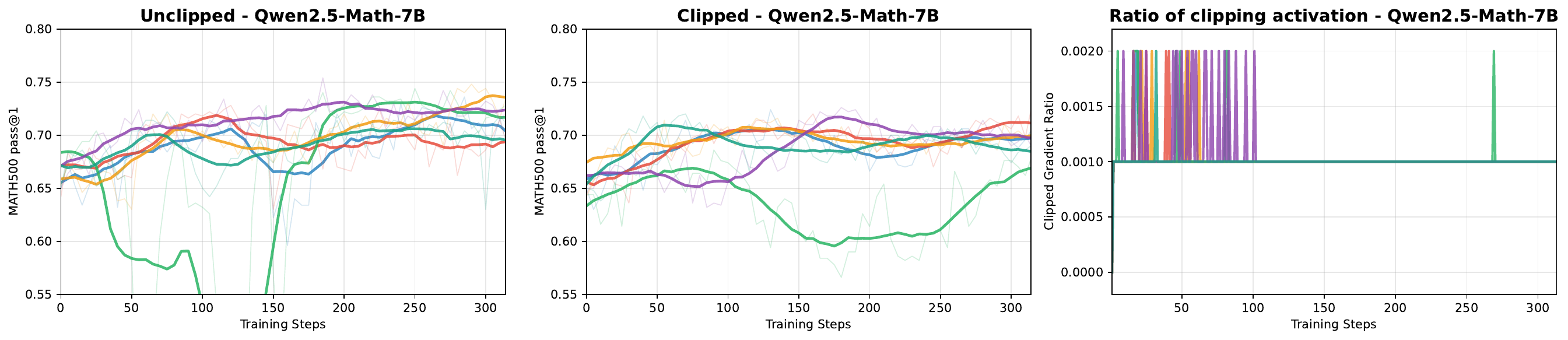}
\caption{Independent trials over \texttt{Qwen2.5-Math-7B} on the \texttt{MATH500} validation set. For performance validation subpanels (Left \& Middle), each color represents a different run; the bold line shows the smoothed trajectory, and the faint line of the same color shows the corresponding raw individual run. All later figures follow the same plotting convention. Unclipped training (\textbf{Left}); clipped training (\textbf{Middle}); and clipping activation ratio during training (\textbf{Right}).} \label{f1} \vspace*{-.5em}
\end{figure*}
As shown in Figure~\ref{f1}, enabling clipping can lead to a decline in validation performance, whereas disabling clipping often results in improvement. These findings suggest that upper clipping bias is not the mechanism driving the observed gains under random rewards. To illustrate this point, we provide a numerical instantiation of Theorem~\ref{thm:raw_vs_bias} using the training hyperparameters of \texttt{Qwen-Math}:
\begin{corollary}\label{cor:ratio}
Suppose that $\eta = 5\times 10^{-7}$, $\varepsilon=0.2$, $p_+=0.001$, $G=16$, $L=4096$, and $\pi_{\min}=10^{-6}$, then by \cref{lem:adv-dist}, $M=3.75$ and $\mathbb{E}[|A|]\approx 0.967$; by \cref{thm:bias_subgauss}, $R_\eta^{\max} \approx 1.649$,  $\phi(R_\eta^{\max}) \approx 0.176$, and $\Delta_\eta^+\approx 0.449$. Thus, \cref{thm:raw_vs_bias} implies $C=1.25\times10^{11}$ and 
\begin{equation*}
\frac{\EE[|N_{\textnormal{raw}}|]}{\EE[|C_{\textnormal{tot}}^+|]} \geq \frac{\EE[|A|](1-C\eta^2)}{L^{-1/2}M\sqrt{2p_+R_\eta^{\max}\phi(R_\eta^{\max})} + M\Delta_\eta^+\min\left\{\sqrt{p_+},\tfrac{\phi(R_\eta^{\max})}{\phi(1+\varepsilon)}\right\}} \approx 17.15. 
\end{equation*}
This confirms that $\EE[|N_\textnormal{raw}|] \gg \EE[|C_\textnormal{tot}^+|]$ in magnitude for hyperparameters used in practice. 
\end{corollary}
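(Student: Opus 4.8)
The statement is a numerical instantiation, so the plan is purely computational: substitute the listed hyperparameters into the closed forms supplied by \cref{lem:adv-dist}, \cref{thm:bias_subgauss}, and \cref{thm:raw_vs_bias}, and then evaluate the ratio bound of \cref{thm:raw_vs_bias}. I would first collect the advantage statistics. With $G=16$, part (ii) of \cref{lem:adv-dist} gives $|A|\le\sqrt{G}-1/\sqrt{G}=4-\tfrac14=3.75$; since this is a valid and tighter uniform bound on $|A|$ than $\sqrt{G-1}$, it may be used as the constant $M$ in \cref{thm:bias_subgauss,thm:raw_vs_bias}, so I set $M=3.75$. Part (iii) gives the exact formula $\EE[|A|]=\tfrac{2}{G2^G}\sum_{K=1}^{G-1}\binom{G}{K}\sqrt{K(G-K)}$; evaluating this finite sum for $G=16$ yields $\EE[|A|]\approx 0.967$. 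This is the only step requiring a binomial series, and it is the part I would double-check most carefully, e.g.\ by using the symmetry of the summand about $K=8$ and the dominance of the central binomial terms.

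Next I would compute the clipping-geometry constants from \cref{thm:bias_subgauss}. Since $\eta/\pi_{\min}=(5\times10^{-7})/(10^{-6})=\tfrac12$, we have $R_\eta^{\max}=e^{1/2}\approx 1.649$, and because $\log R_\eta^{\max}=\tfrac12$ exactly, $\phi(R_\eta^{\max})=\tfrac12 e^{1/2}-e^{1/2}+1=1-\tfrac12 e^{1/2}\approx 0.176$, while $\Delta_\eta^+=(R_\eta^{\max}-1-\varepsilon)_+=(e^{1/2}-1.2)_+\approx 0.449$. I would also record $\phi(1+\varepsilon)=\phi(1.2)=1.2\log1.2-0.2\approx 0.0188$ for the $\min$ term. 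Finally, for the raw-surrogate lower bound in \cref{thm:raw_vs_bias}, $C=\tfrac{1}{8\pi_{\min}^2}=1.25\times10^{11}$, hence $C\eta^2=1.25\times10^{11}\cdot(5\times10^{-7})^2\approx 0.0313$ and $1-C\eta^2\approx 0.969>0$, which is exactly what keeps the bound $L\EE[|A|](1-C\eta^2)$ meaningful.

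It remains to assemble the ratio from \cref{thm:raw_vs_bias}. The numerator is $\EE[|A|](1-C\eta^2)\approx 0.967\times0.969\approx 0.937$. In the denominator, $L^{-1/2}=1/64$ and $\sqrt{2p_+R_\eta^{\max}\phi(R_\eta^{\max})}\approx\sqrt{5.8\times10^{-4}}\approx 0.0241$, so the first summand is $\tfrac1{64}\cdot 3.75\cdot 0.0241\approx 1.4\times10^{-3}$. Since $\sqrt{p_+}=\sqrt{0.001}\approx 0.0316$ is smaller than $\phi(R_\eta^{\max})/\phi(1.2)\approx 9.35$, the $\min$ equals $\sqrt{p_+}$, and the second summand is $3.75\cdot 0.449\cdot 0.0316\approx 5.3\times10^{-2}$. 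Summing gives a denominator $\approx 5.5\times10^{-2}$, so the ratio is at least $0.937/0.055\approx 17.15$, as claimed; this also shows $\EE[|N_{\textnormal{raw}}|]\gg\EE[|C_{\textnormal{tot}}^+|]$.

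There is no genuine analytical obstacle here — the only care points are (i) the binomial-sum evaluation of $\EE[|A|]$, and (ii) correctly identifying which branch of $\min\{\sqrt{p_+},\,\phi(R_\eta^{\max})/\phi(1+\varepsilon)\}$ is active: here it is the $\sqrt{p_+}$ branch, and since the second denominator summand is two orders of magnitude larger than the first, it alone essentially determines the final numeric value, so a misidentification there would change the answer substantially.
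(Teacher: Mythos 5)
Your proposal is correct and follows exactly the route the paper intends for this corollary: a direct numerical substitution of the stated hyperparameters into Lemma~\ref{lem:adv-dist}, Theorem~\ref{thm:bias_subgauss}, and Theorem~\ref{thm:raw_vs_bias}, and your numbers (including the use of the tighter bound $M=\sqrt{G}-1/\sqrt{G}=3.75$ from Lemma~\ref{lem:adv-dist}(ii), the evaluation $\EE[|A|]\approx 0.967$, and the identification of the active $\sqrt{p_+}$ branch in the minimum) all check out, reproducing the ratio $\approx 17.15$.
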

\begin{remark}
As a consequence of Corollary~\ref{cor:ratio}, the upper-clipping bias fails to provide meaningful learning signal towards the gradient, even under contaminated model and benchmark. This result is supported by our empirical observations and theoretical justifications. We present further ablation analysis over clipping threshold and group size in Appendix~\ref{app:exp}. Nonetheless, even though clipping does not directly correlated to performance, \S\ref{s4} shows that it still has a causal effect on policy entropy under random rewards, shaping the structure of the outcomes without enhancing learning.
\end{remark}

%!TEX root = ../main.tex
\section{Clipping and Policy Entropy}\label{s4}
We provide two theoretical results describing policy entropy under \emph{unclipped} and \emph{clipped} training (\S\ref{s4.1}). As discussed in \S\ref{s2.3}, the approximation in \cref{e5} from \citet{Cui-2025-Entropy} becomes inaccurate when clipping or random rewards are present. Our analysis incorporates both clipping and initial policy skewness, yielding a more precise characterization of entropy dynamics. In \S\ref{s4.2}, we validate these results through extensive experiments and targeted case studies. In \S\ref{s4.3}, we interpret clipping as a mechanism that implicitly reduces entropy and caution -- supported by empirical evidence -- against conflating entropy reduction with improved performance.

\subsection{One-step Policy Entropy Change under Random Rewards}\label{s4.1}

We analyze entropy dynamics under unclipped and clipped training in \cref{thm:entropyinc} and \cref{thm:entropycollp}, and hope these results motivate new ways to modulate entropy using spurious-reward setups alongside explicit entropy regularization. 

We first present the unclipped-training dynamics in \cref{thm:entropyinc}, with detailed statement and proof in Appendix~\ref{app:S4proof}, where we also identify the conditions that permit entropy growth. 

\begin{theorem}\label{thm:entropyinc}
With update in \cref{eq:1stepupdate} and $c_G:=(1-2^{1-G})/G$, for all $\eta>0$, 
\begin{equation*}
\EE[\HCal(\pi_\textnormal{new}) - \HCal(\pi_\textnormal{old})] = -c_G\Phi(\pi_\textnormal{old})\eta^2 + \mathbb{E}[R(\eta)],
\end{equation*}
where $\Phi$ measures the skewness of $\pi_\textnormal{old}$ and $|R(\eta)|=\mathcal{O}(\eta^4)$ for small $\eta$. 
\end{theorem}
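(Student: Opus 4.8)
I would track the one-step entropy change through the causal (chain-rule) decomposition of the trajectory entropy. Since an autoregressive model visits each context $\h=(\x,\y_{<t})$ at most once along a rollout, $\HCal(\pi_\theta)=\sum_\h d_\theta(\h)H_\h(\pi_\theta)$, where $d_\theta(\h)$ is the probability that $\h$ is a prefix of $\y\sim\pi_\theta$ and $H_\h(\pi_\theta)=-\sum_a\pi_\theta(a\mid\h)\log\pi_\theta(a\mid\h)$. Adding and subtracting $d_\textnormal{old}(\h)H_\h(\pi_\textnormal{new})$ splits $\HCal(\pi_\textnormal{new})-\HCal(\pi_\textnormal{old})$ into a \emph{local} part $\sum_\h d_\textnormal{old}(\h)(H_\h(\pi_\textnormal{new})-H_\h(\pi_\textnormal{old}))$ and a \emph{reweighting} part $\sum_\h(d_\textnormal{new}(\h)-d_\textnormal{old}(\h))H_\h(\pi_\textnormal{new})$; only contexts actually hit by the $G$ rollouts contribute, since $\tilde A(\h,\cdot)\equiv 0$ elsewhere. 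For the local part, fix $\h$ and parametrize \cref{eq:1stepupdate} as the exponential-family path $\pi_\theta(a\mid\h)\propto\pi_\textnormal{old}(a\mid\h)e^{\eta\tilde A(\h,a)}$; differentiating $f(\eta):=H_\h(\pi_\theta)$ gives the closed form $f'(\eta)=-\Cov_{\pi_\theta(\cdot\mid\h)}(\tilde A(\h,\cdot),\log\pi_\textnormal{old}(\cdot\mid\h))-\eta\Var_{\pi_\theta(\cdot\mid\h)}(\tilde A(\h,\cdot))$, so that $H_\h(\pi_\textnormal{new})-H_\h(\pi_\textnormal{old})=-\eta\Cov_\textnormal{old}(\tilde A,\log\pi_\textnormal{old})-\tfrac{\eta^2}{2}(\mathrm{cosk}_\textnormal{old}+\Var_\textnormal{old}(\tilde A))+\tfrac{\eta^3}{6}f'''(0)+O(\eta^4)$, with all moments under $\pi_\textnormal{old}(\cdot\mid\h)$, $\mathrm{cosk}_\textnormal{old}=\EE_\textnormal{old}[(\tilde A-\EE_\textnormal{old}\tilde A)^2(\log\pi_\textnormal{old}-\EE_\textnormal{old}\log\pi_\textnormal{old})]$, and $f'''(0)$ a degree-three polynomial in the central moments of $\tilde A$ and $\log\pi_\textnormal{old}$. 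The reweighting part is expanded the same way via \cref{lem:fistexpan}: along the prefix of $\h$, $\log(d_\textnormal{new}(\h)/d_\textnormal{old}(\h))=\eta\sum_{s<t}(\tilde A(\h_s,\y_s)-\mu(\h_s))-\tfrac{\eta^2}{2}\sum_{s<t}\sigma^2(\h_s)+O(\eta^3)$, so $d_\textnormal{new}(\h)-d_\textnormal{old}(\h)$ is $O(\eta)$ with an explicit $\eta^2$ correction and $H_\h(\pi_\textnormal{new})=H_\h(\pi_\textnormal{old})+O(\eta)$.

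The next step is to integrate out the random rewards. The crucial structural fact is a negation symmetry: the relabelling $\vrr_i\mapsto1-\vrr_i$ preserves the law of a group of $\mathrm{Bernoulli}(\tfrac12)$ rewards and sends $A_i\mapsto-A_i$; since $\tilde A(\h,a)=(G\pi_\textnormal{old}(a\mid\h))^{-1}\sum_{i\in S_\h}\1\{a^{(i)}=a\}A_i$ is linear in $(A_1,\dots,A_G)$ (with $S_\h$ the set of rollouts through $\h$ and $a^{(i)}$ the action rollout $i$ takes at $\h$), the joint law of the whole advantage vector is symmetric under negation, so every monomial of odd total degree in the advantages has zero expectation. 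This kills the $\eta^1$ and $\eta^3$ coefficients of both the local and the reweighting parts: writing the $\eta$-expansion of $\HCal(\pi_\textnormal{new})-\HCal(\pi_\textnormal{old})$ as $\eta T_1+\eta^2T_2+\eta^3T_3+R(\eta)$ with $R(\eta)$ the fourth-order Taylor remainder, we obtain $\EE[T_1]=\EE[T_3]=0$, hence $\EE[\HCal(\pi_\textnormal{new})-\HCal(\pi_\textnormal{old})]=\eta^2\EE[T_2]+\EE[R(\eta)]$.

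It remains to evaluate $\EE[T_2]$ and bound the remainder. For the second moments I use $\EE[A_iA_j]=\tau^2(\1_{i=j}-(G-1)^{-1}\1_{i\neq j})$ with $\tau^2=\EE[A_i^2]=1-2^{1-G}$ — this follows because $\sum_iA_i^2=G$ whenever the rewards are not all equal (an event of probability $1-2^{1-G}$) and $\sum_iA_i=0$ always — together with the fact that, conditioned on $m$ rollouts reaching $\h$, those rollouts' actions at $\h$ are i.i.d.\ $\pi_\textnormal{old}(\cdot\mid\h)$ while $m\sim\mathrm{Binomial}(G,d_\textnormal{old}(\h))$. A direct computation then yields clean identities such as $\EE[\pi_\textnormal{old}(a\mid\h)(\tilde A(\h,a)-\mu(\h))^2]=c_G\,d_\textnormal{old}(\h)(1-\pi_\textnormal{old}(a\mid\h))$ with $c_G=\tau^2/G=(1-2^{1-G})/G$, and the analogous bilinear identities for the mixed advantage moments appearing in the reweighting cross-term. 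Substituting these, summing the local and reweighting contributions over $\h$, and simplifying collapses $\EE[T_2]$ into $-c_G\Phi(\pi_\textnormal{old})$, where $\Phi$ is a functional of $\pi_\textnormal{old}$ alone — a visitation-weighted log-probability skewness of the old policy — whose sign determines whether entropy grows ($\Phi<0$) or shrinks ($\Phi>0$). For the remainder, $|\tilde A(\h,a)|\le\sqrt{G-1}/\pi_{\min}$ (\cref{lem:adv-dist}) and $|\log\pi_\textnormal{old}(a\mid\h)|\le\log(1/\pi_{\min})$ bound all cumulants of $\tilde A$ and all log-probability weights entering the fourth $\eta$-derivative of $\eta\mapsto\HCal(\pi_\textnormal{new}(\eta))$ — including the reweighting factors, which contribute at worst powers of the maximal rollout length — so the Lagrange remainder gives $|R(\eta)|=O(\eta^4)$ uniformly for small $\eta$, with a constant depending only on $G$, $\pi_{\min}$, the vocabulary size, and the maximal rollout length; this boundedness also justifies exchanging expectation with the Taylor expansion.

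\textbf{Main obstacle.} The delicate step is the last simplification. Taken in isolation, the local $\eta^2$ term carries pieces proportional to $\sum_a\log\pi_\textnormal{old}(a\mid\h)$ and to the vocabulary size — they arise from the event that a rollout lands on a very low-probability token, where the exponentiated update with its $1/\pi_\textnormal{old}$ weight produces an $O(1)$ probability jump — and these cannot survive in a genuine skewness. One has to show that the $\eta^2$ contribution of the reweighting cross-term, generated by exactly those same probability jumps propagating to all descendant contexts, cancels them. Carrying this out requires carefully tracking the coupling between the path-to-$\h$ advantages and the at-$\h$ advantages — which share the same response-level $A_i$'s — and is where the group-size combinatorics that produce the constant $c_G$ and the skewness functional $\Phi$ are ultimately pinned down.
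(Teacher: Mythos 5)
Your plan diverges from the paper's proof in scope and, more importantly, it leaves the decisive step unproven. The paper establishes \cref{thm:entropyinc} only in the one-step, single-context setting ($L=1$, stated explicitly in the restated theorem in Appendix~\ref{app:S4proof}): there the update in \cref{eq:1stepupdate} is a single exponential tilt, so the entropy change is a scalar function of $\eta$; the negation symmetry of the random-reward advantages makes this function even in $\eta$ (killing the first and third derivatives), Lemma~\ref{lem:entropy-taylor-local-slim} identifies the exact $\eta^2$ coefficient as $-\tfrac12\EE\bigl[\Var_{\pi_{\rm old}}(\tilde A)+\Cov_{\pi_{\rm old}}(\tilde A^2,\log\pi_{\rm old})\bigr]$ with an explicit Lagrange $O(\eta^4)$ remainder, and Lemma~\ref{lem:momentofAtilde} evaluates these two moments in closed form under $\mathrm{Bernoulli}(\tfrac12)$ rewards, which is where $c_G$ and $\Phi(\pi)=|\VCal|-1+\sum_a\log\pi(a)-|\VCal|\sum_a\pi(a)\log\pi(a)$ come from. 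Your symmetry argument, your covariance structure $\EE[A_iA_j]$, your per-token identity $\EE[\pi_{\rm old}(a)\tilde A(a)^2]=c_G(1-\pi_{\rm old}(a))$, and your boundedness-based remainder control all match this part of the paper in spirit.

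The genuine gap is in your general-$L$ route. You decompose the trajectory entropy as $\sum_{\h} d(\h)H_{\h}$ into a local part and a visitation-reweighting part, and the crucial claim -- that the combined $\eta^2$ coefficient ``collapses'' to $-c_G\Phi(\pi_{\rm old})$ -- is exactly what you do not carry out; you flag it yourself as the main obstacle. Moreover, the intuition you give for how it should go is at odds with the actual result: you assert that the local pieces proportional to $\sum_a\log\pi_{\rm old}(a\mid\h)$ and to the vocabulary size ``cannot survive in a genuine skewness'' and must be cancelled by the reweighting cross-term, but those pieces are precisely what the paper's $\Phi$ consists of (compare the two-armed instance in Remark~\ref{rmk:numunclipped}, $\Phi=1+(1-2\beta)\log\tfrac{\beta}{1-\beta}$, which is the $|\VCal|=2$ case of the formula above). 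In the setting the paper actually analyzes there is no reweighting term at all and no cancellation occurs or is needed, so a proof built around that cancellation would be chasing an identity that is false as stated. To obtain the theorem as the paper proves it, restrict to the $L=1$ case and finish with the closed-form moment computation (your second-moment identity summed over $a$, together with the analogous computation of $\EE[\Cov_{\pi_{\rm old}}(\tilde A^2,\log\pi_{\rm old})]$); extending to general $L$ with visitation reweighting would be a substantially stronger statement than the paper establishes and would require you to actually control the cross-terms you currently only gesture at.
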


\begin{remark}\label{rmk:numunclipped}
\cref{thm:entropyinc} shows that the one-step entropy change under unclipped training depends critically on the initial policy distribution; indeed, more skewed policies can exhibit entropy increases during training. As a concrete example, we consider a two-armed policy $\pi_\textnormal{old} = (\beta, 1-\beta)$ for some $\beta \in (0,1)$. In this case, using the definition of $\Phi$, one can compute $\Phi(\pi_\textnormal{old})= 1 + (1-2\beta)\log(\tfrac{\beta}{1-\beta})$. Moreover, $\Phi(\pi_\textnormal{old}) \ge 0$ if and only if $\beta \in [0.176,0.824]$. Thus, up to $\mathcal{O}(\eta^2)$ term, entropy decreases in expectation when $\beta\in[0.176, 0.824]$ (a less skewed policy) and increases when $\beta>0.824$ or $\beta<0.176$ (a more skewed policy). Figure~\ref{fig:flatvsskewed_onestep} illustrates this behavior: for a less-skewed initialization (Figure~\ref{fig:flatvsskewed_onestep}, Left), spurious rewards do not increase entropy under unclipped training, whereas with a sufficiently skewed initialization (Figure~\ref{fig:flatvsskewed_onestep}, Right), entropy increases over training. This is also consistent with the entropy growth observed in our experiments (Figure~\ref{f2}, Left).
\end{remark}

\paragraph{Actual policy $\Phi(\pi)$ evaluation.} Apart from the two-armed example in Remark~\ref{rmk:numunclipped}, we further evaluate $\Phi(\pi)$ for the actual \texttt{Qwen-Math-7B} policy in Figure~\ref{fig:skew}, which helps readers better perceive the policy skewness and its corresponding $\Phi(\pi)$ associated with entropy increases during training. For detailed setup and results, please refer to Appendix~\ref{app:exp}.
\begin{figure*}[!t]
\centering
\includegraphics[width=\linewidth]{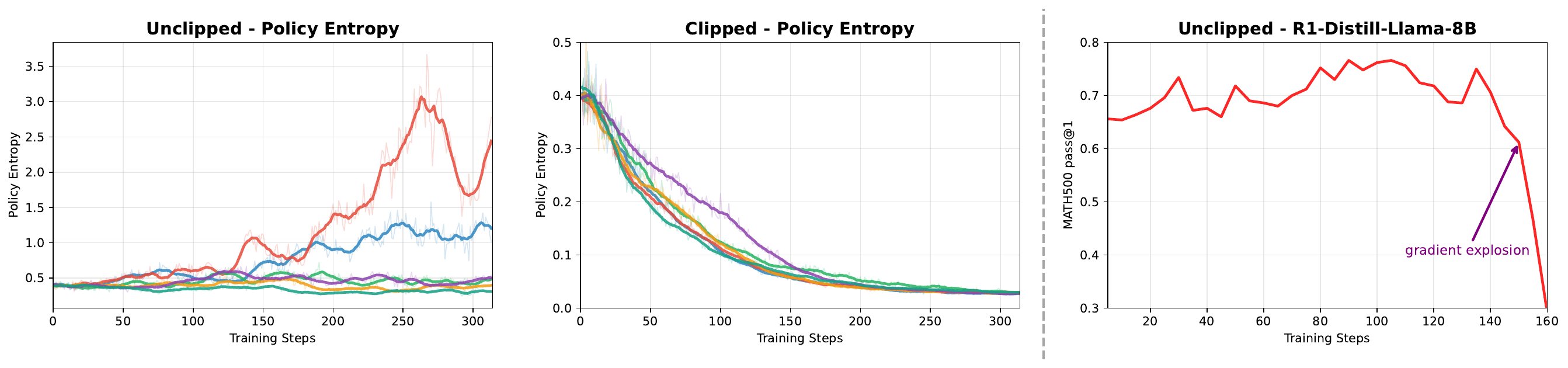}
\caption{Policy entropy evolution of \texttt{Qwen2.5-Math-7B} under random-reward training, with results for unclipped training (\textbf{Left}) and clipped training (\textbf{Middle}); Unclipped training with \texttt{R1-Distill-Llama-8B}, an example that leads to the gradient explosion (\textbf{Right}).} \label{f2} \vspace*{-1em}
\end{figure*}

Next, we analyze training dynamics with upper clipping in  \cref{thm:entropycollp}; detailed statements, proof and entropy decay verification under practical parameters are deferred to Appendix~\ref{app:S4proof}. 

\begin{theorem}\label{thm:entropycollp}
Define $C_i:=\{A_i>0,\,r(\y^{(i)})>1+\epsilon\}$. Let $\rho:=\mathbb P(C_1)$ and $\delta:=\mathbb E[r(\y^{(1)})-(1+\epsilon)\mid C_1]$. Then for $\eta>0$ small enough and any $p\in(\pi_{\min},1)$, 
\begin{equation*}
\EE[\HCal(\pi_\textnormal{new}) - \HCal(\pi_\textnormal{old})] \leq -c_G\Phi(\pi_{\rm old})\eta^2 + \EE[R(\eta)] + c(p)G\left(\rho\delta_{\rm eff} - \tfrac{X_{\max}}{2}(G-1)p\right), 
\end{equation*}
where $c_G$, $\Phi$ and $R(\eta)$ are defined as the same as in \cref{thm:entropyinc}; for other constants, see Appendix~\ref{app:S4proof}.  
\end{theorem}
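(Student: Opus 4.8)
The plan is to start from the unclipped one-step entropy identity of \cref{thm:entropyinc}, namely $\EE[\HCal(\pi_\textnormal{new}) - \HCal(\pi_\textnormal{old})] = -c_G\Phi(\pi_\textnormal{old})\eta^2 + \EE[R(\eta)]$, and then quantify how the upper clipping operation perturbs $\pi_\textnormal{new}$ and hence the entropy. The key observation is that clipping only acts on tokens lying in the event $C_i = \{A_i>0,\ r(\y^{(i)})>1+\epsilon\}$: on such tokens the realized ratio is capped at $1+\epsilon$ instead of the value $r(\y^{(i)})$ it would otherwise take, so the clipped update \emph{decreases} the probability mass placed on these already-upweighted high-probability tokens relative to the unclipped update. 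Writing $\pi_\textnormal{new}^{\rm clip}$ for the clipped post-update policy, I would expand $\HCal(\pi_\textnormal{new}^{\rm clip}) - \HCal(\pi_\textnormal{new})$ as a function of the per-token mass perturbation $\Delta_t := \pi_\textnormal{new}^{\rm clip}(\y_t\mid\h_t) - \pi_\textnormal{new}(\y_t\mid\h_t)$, which is supported on $C_i$ and has magnitude controlled by $\delta = \EE[r(\y^{(1)})-(1+\epsilon)\mid C_1]$ together with $\rho = \mathbb P(C_1)$.

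The main technical step is a second-order Taylor expansion of the entropy functional $\HCal$ around $\pi_\textnormal{new}$ in the direction of the clipping perturbation. Using $\HCal(\pi + \Delta) - \HCal(\pi) = -\sum \Delta (1 + \log\pi) - \tfrac12\sum \Delta^2/\pi + O(\|\Delta\|_\infty^3)$, the first-order term is what produces the $c(p)G(\rho\delta_{\rm eff})$ contribution after taking expectations and using that clipping removes mass from high-$\log\pi$ tokens; here $\delta_{\rm eff}$ is the effective (re-normalized) mass displacement, bounded in terms of $\eta\delta$ via \cref{lem:fistexpan} applied to relate $r(\y^{(i)})$ to $\exp(\eta\tilde A)$. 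The $-\tfrac{X_{\max}}{2}(G-1)p$ piece comes from the \emph{negative} second-order / renormalization term: since $\sum_a \Delta_t(a) = 0$ by the softmax normalization constraint, the mass stripped off the clipped token is redistributed across the remaining $|\mathcal V|-1$ tokens, and the resulting curvature term is strictly negative, scaled by the activation probability $p$ (the empirical token-level clip-activation rate from Theorem~\ref{thm:bias_subgauss}), the group-size factor $G-1$, and a uniform bound $X_{\max}$ on the relevant advantage-weighted quantities from \cref{lem:adv-dist}(ii). Combining the unclipped identity with these two clipping corrections and absorbing the higher-order remainder into $\EE[R(\eta)]$ (valid for $\eta$ small, using $\pi_\textnormal{old}\ge\pi_{\min}$ to keep all denominators bounded) gives the claimed inequality, with the $\min$/$\max$ structure and the explicit constant $c(p)$ tracked through the Taylor bounds.

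The hard part will be making the perturbation argument rigorous at the level of the full trajectory rather than a single token: clipping can be active at several positions within a rollout simultaneously, and the induced $\Delta_t$'s at different $(\h_t,\y_t)$ interact through the shared normalization of the softmax over $\mathcal V$ at each context $\h$, so one must carefully aggregate per-context perturbations while avoiding double-counting across the $G$ rollouts and the $L$ token positions. A secondary obstacle is controlling $\delta_{\rm eff}$ uniformly — i.e., showing the effective displacement does not blow up when $r(\y^{(i)})$ is large — which is where the $\min\{\cdot,\cdot\}$-type bound on $\Delta_\eta^+$ from Theorem~\ref{thm:bias_subgauss} and the boundedness of $|A_i|$ from \cref{lem:adv-dist} are essential; I expect the cleanest route is to prove the bound first conditionally on the clip-activation pattern and then take expectations over that pattern, treating $\rho$, $\delta$, and $p$ as the only data-dependent quantities that survive.
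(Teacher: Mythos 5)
Your first step---splitting the entropy change into the unclipped contribution from \cref{thm:entropyinc} plus a clipping correction---matches the paper's proof, which writes $\EE[\Delta\HCal^c]=\EE[\HCal(\pi^c_{\textnormal{new}})-\HCal(\pi^u_{\textnormal{new}})]+\EE[\Delta\HCal^u]$. But your plan for the correction term has genuine gaps. First, the mechanism and signs are off: the entropy decrease does not come from clipping ``removing mass from high-$\log\pi$ tokens.'' Clipping activates when $r_u(a)>1+\varepsilon$, which (since $\tilde A(a)\propto 1/\pi_{\textnormal{old}}(a)$) happens precisely for \emph{low}-probability sampled tokens with $A_i>0$; relative to the unclipped update, the clipped policy keeps these rare tokens rare, and it is the very negative $\log\pi^u$ on such tokens that makes the first-order term negative. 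This is exactly why the theorem carries a free threshold $p\in(\pi_{\min},1)$ on $\pi_{\textnormal{old}}(\y^{(i)})$ (it is \emph{not} the clip-activation rate of Theorem~\ref{thm:bias_subgauss}): the paper restricts to clipped tokens with $\pi_{\textnormal{old}}(\y^{(i)})\le p$, which produces $c(p)=\pi_{\min}\min\{0,\log(pe^{\eta/(2\pi_{\min})})\}$, and handles the complementary high-probability tokens through $M(p)$ and the conditional-splitting bound $\EE[X_i\1\{\pi_{\textnormal{old}}(\y^{(i)})\le p\}\mid C_i]\ge X_{\max}(\delta-M(p))_+/(X_{\max}-M(p))=:\delta_{\rm eff}$. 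None of this structure appears in your plan, and your guess that $\delta_{\rm eff}$ is ``bounded in terms of $\eta\delta$ via \cref{lem:fistexpan}'' would not reproduce it.

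Second, you attribute the $-\tfrac{X_{\max}}{2}(G-1)p$ term to a second-order/renormalization curvature term of the entropy, but in the paper it is the multiplicity correction for the double-counting issue you explicitly defer: converting the sum over clipped tokens $a$ into a sum over rollout indices $i$ requires dividing by $n_a=\#\{j:\y^{(j)}=a\}$, and the bound $1/n_a\ge 1-(n_a-1)/2$ together with $\EE[(n_a-1)\1\{\pi_{\textnormal{old}}(a)\le p\}]\le (G-1)p$ (a Binomial$(G-1,\pi_{\textnormal{old}}(a))$ estimate) is what yields that term. Third, your perturbative route is missing the two ingredients that make the paper's argument work without any Taylor remainder in the perturbation: (i) the clipped policy is defined as the KKT solution of the upper-clipped surrogate with KL regularization, and Lemma~\ref{lem:ratio-hit-clip} (this is where the ``$\eta$ small enough'' condition enters) guarantees $r_c(a)\le 1+\varepsilon$, hence $\pi^u(a)-\pi^c(a)\ge \pi_{\min}(r_u(a)-(1+\varepsilon))$ on clipped tokens; and (ii) the inequality $\HCal(\pi^c)-\HCal(\pi^u)\le \sum_a(\pi^u(a)-\pi^c(a))\log\pi^u(a)$, obtained from nonnegativity of $D_{\mathrm{KL}}(\pi^c\|\pi^u)$, which replaces your expansion in $\Delta$ entirely. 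Finally, your concern about aggregating clipping across many token positions is moot for the statement as proved: the analysis in Appendix~\ref{app:S4proof} is carried out in the $L=1$ (bandit) setting.
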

\begin{remark}
As shown in Figure~\ref{f2} (Middle), our experiments confirm that policy entropy consistently decreases over time under random rewards. In contrast, disabling clipping leads to entropy \emph{increasing} during training (Figure~\ref{f2}, Left). Existing approaches to counter early-stage entropy collapse rely on regularization techniques that merely slow the decay \citep{Wang-2025-Harnessing,Yao-2025-Diversity,Zheng-2025-First,Cheng-2025-Reasoning}. Our finding that one can \emph{actively increase} policy entropy -- while also improving validation performance -- suggests a complementary strategy: using spurious-reward setups to more effectively preserve and modulate entropy. This highlights a promising direction for combining true and spurious rewards to better balance exploration and exploitation in RLVR.
\end{remark}

\subsection{Empirical Evaluation}\label{s4.2}
Figure~\ref{f2} (Left \& Middle) shows that, under random rewards, disabling clipping can cause policy entropy to \emph{increase} over training, reflecting progressively greater exploration. In contrast, enabling clipping constrains this behavior and leads to a monotonic decrease in entropy. This pattern highlights that clipping functions primarily as a form of \emph{regularization}: by capping per-token likelihood ratios, it effectively reduces the update step size and prevents the policy from drifting too far from its previous distribution. Beyond its regularization effect, clipping also fulfills its original purpose of preventing gradient explosion, thereby adding further training stability.

When gradient magnitudes grow large, clipping protects the optimization process by preventing abrupt, destabilizing updates. Without clipping, this safeguard disappears: the optimizer may take oversized steps that inject excessive exploration and destabilize training. Thus, clipping does not introduce additional learning signals; its primary function is to maintain optimization stability by enforcing a local trust region. Models with sufficiently large single-step gradient norms can collapse entirely. A failure case is shown in Figure~\ref{f2} (Right): training \texttt{R1-Distill-Llama-8B} without clipping initially raises the \texttt{MATH500} validation accuracy from $65.6\%$ to $76.6\%$ within 100 steps, but around step 150 the gradients explode, causing a sharp drop in performance. For comparison, the clipped-training counterpart for \texttt{R1-Distill-Llama-8B} is shown in Figure~\ref{f5} (Middle).

\subsection{Policy Entropy and Model Performance}\label{s4.3}
\begin{figure*}[!ht]
\centering
\includegraphics[width=\linewidth]{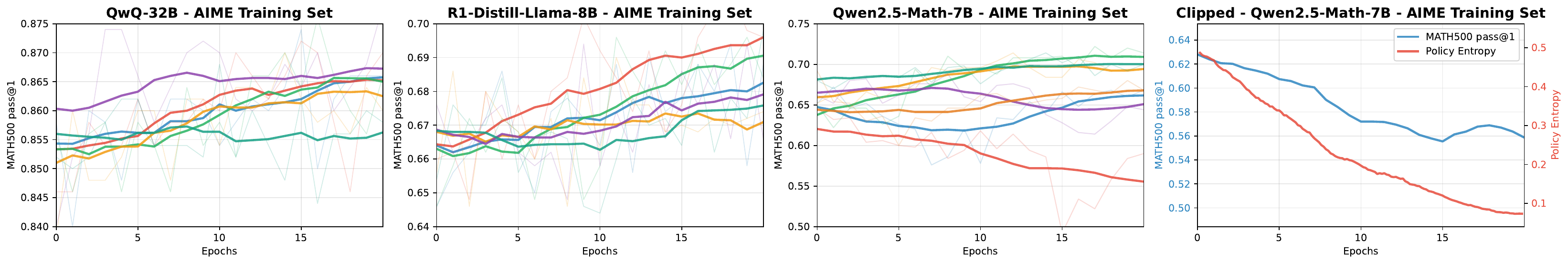}
\caption{Results on AIME training set on \texttt{QwQ-32B} (\textbf{Left}), \texttt{R1-Distill-Llama-8B} (\textbf{Middle-L}), \texttt{Qwen2.5-Math-7B} (\textbf{Middle-R}). With one specific example that shows entropy minimization would lead to sub-optimal policy under noisier and more difficult training environment (\textbf{Right}).} \label{f3} \vspace*{-.5em}
\end{figure*}

Figure~\ref{f2} shows that both higher and lower entropy can achieve improved performance. In practice, higher entropy reflects stronger \emph{exploration}: the policy is flatter and thus more capable of discovering new trajectories. Lower entropy corresponds to greater confidence, with the policy becoming more concentrated on a small set of trajectories; in RLVR, such concentration may also correlate with better performance. However, this connection is not guaranteed: convergence to a highly skewed, low-entropy policy does \emph{not} necessarily improve accuracy, as demonstrated in Figure~\ref{f3} (Right). This suggests that methods explicitly minimizing policy entropy should be applied with caution. Additional evidence from unclipped training under the same setup is provided in Appendix~\ref{app:exp}.

Under random rewards, clipping acts as an implicit entropy minimization mechanism, pushing the policy toward a more peaked distribution that concentrates probability mass on a small set of trajectories. Whether this effect is beneficial depends on the model's initial policy distribution and the difficulty of the training data. For a strong model on an easy dataset, the policy is already concentrated on correct trajectories; additional concentration can be sufficient and may appear advantageous. We provide a simple theoretical explanation of this phenomenon in \S\ref{s5}.

However, as the training data becomes difficult, the policy may place most of its probability mass on \emph{incorrect} trajectories. This produces the noisy rollouts and unstable updates, often driving the model toward an incorrect low-entropy solution. To illustrate, for \texttt{Qwen2.5-Math-7B}, we replace the milder \texttt{DeepScaleR} curriculum with the harder \texttt{AIME Past} series. As shown in Figure~\ref{f3} (Middle-R), after 20 training epochs (with the same hyperparameters as in Figure~\ref{f2}), the trajectory resembles a random walk with little meaningful improvement in validation accuracy. In contrast, stronger \texttt{QwQ-32B} and \texttt{R1-Distill-Llama-8B} models (rollout length $8192$, with all other settings identical to the 7B configuration) trained on \texttt{AIME} dataset exhibits steady early-epoch gains (Figure~\ref{f3}, Left \& Middle-L). These results indicate that the effectiveness of entropy minimization is regime-dependent: for strong models on easier data, it can further concentrate mass on correct trajectories, whereas for weaker models or harder data, it may reinforce incorrect modes and stall, or degrade performance. Thus, entropy minimization mechanisms (including clipping under random rewards) can be interpreted as regularization rather than universally beneficial learning signals.

%!TEX root = ../main.tex
\section{Reward Misalignment: Who can Benefit from Random Rewards?}\label{s5}
From empirical observations in this and prior work, we note two consistent patterns under random-reward training. First, in line with \citet{Shao-2025-Spurious}, weaker models tend to improve less—and importantly, \emph{model strength is dataset-dependent}: a model that performs well on an easier benchmark may struggle on a harder one. Second, as baseline accuracy increases (e.g., approaching 70\%), training dynamics become noticeably smoother, whereas models starting around 50\% accuracy exhibit substantially more oscillation. To explain when and why a model may improve under random rewards, we analyze the phenomenon through the lens of \emph{reward misalignment}. As a warm-up, we introduce a simple probabilistic model that captures this mechanism in the binary outcome-reward (ORM) setting, converting the observed behavior into a tractable misalignment analysis.

For a prompt $\x$, draw $G$ rollouts $\{\y^{(1)},\ldots,\y^{(G)}\}$ from current policy $\pi_\theta$. Partition the indices into correct and incorrect sets $\CCal,\ICal \subseteq\{1,\ldots,G\}$ with $|\CCal|=n_c$, $|\ICal|=n_i$, and $n_c+n_i=G$. We analyze two label errors: (i) \emph{False positives (FP)}: $\vrr_j=1$ for $j\in \ICal$ (an incorrect rollout is rewarded); (ii) \emph{False negatives (FN)}: $\vrr_k=0$ for $k \in \CCal$ (a correct rollout is not rewarded). Specifically, we aim to explain: (i) why validation curves fluctuate less when accuracy is high but become noticeably unstable when accuracy is low, and (ii) why stronger models are more likely to improve under random rewards. Our starting point is to formalize \emph{reward misalignment}: the loss of advantage mass that should have been assigned to correct rollouts but is instead diverted due to random reward mislabeling.
\begin{definition}[Correct-response advantage loss]
Let $\{\vrr_j\}_{j=1}^G$ be i.i.d. with $\vrr_j \sim \textnormal{Bernoulli}(\frac{1}{2})$ for all $j$, independent of correctness. We define the event counts $f:=\sum_{j\in \ICal} \mathbf{1}\{\vrr_j=1\}$ and $g:=\sum_{k \in \CCal} \mathbf{1}\{\vrr_k=0\}$, and let $T := \sum_{j=1}^G \vrr_j=f+(n_c-g)$ be the total number of $+1$ rewards. We write $\bar{\vrr}:=\frac{T}{G}$ for the group-averaged reward. The class-wise centered reward sum over $\CCal$ is $\Sigma_\CCal(f,g):=\sum_{k \in \CCal} (\vrr_k - \bar{\vrr}) = (n_c-g) - \frac{n_c T}{G}$. As an ``ideal'' reference with no mislabels ($f=g=0$), we have $\Sigma_\CCal^\textnormal{ideal} = \sum_{k \in \CCal} (1-\frac{n_c}{G})=n_c(1-\frac{n_c}{G})$. Finally, we define the \emph{damage} (advantage loss) as
\begin{equation}\label{eq:damage}
\Delta(f,g) := \Sigma_\CCal^\textnormal{ideal} - \Sigma_\CCal(f,g).
\end{equation}
\end{definition}
\begin{proposition}\label{prop:global-moments}
For any $n_c,n_i \geq 1$ and $G=n_c+n_i$, let $f \sim \textnormal{Binomial}(n_i,\frac{1}{2})$, $g \sim \textnormal{Binomial}(n_c,\frac{1}{2})$ be independent, and $\Delta:=\Delta(f,g)$ be defined in \cref{eq:damage}. Under i.i.d. $\textnormal{Bernoulli}(\frac{1}{2})$ rewards, we have
\begin{equation}
\EE[\Delta] = \tfrac{n_c(G-n_c)}{G},\quad \textnormal{Var}(\Delta)=\tfrac{n_c (G-n_c)}{4G}. 
\end{equation}
\end{proposition}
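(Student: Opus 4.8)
The plan is to first reduce $\Delta(f,g)$ to a transparent linear function of $f$ and $g$, and then read off both moments from the elementary properties of binomial random variables.

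First I would substitute $T=f+(n_c-g)$ into the definition $\Sigma_\CCal(f,g)=(n_c-g)-\tfrac{n_c T}{G}$ and expand. Using $\Sigma_\CCal^{\textnormal{ideal}}=n_c\bigl(1-\tfrac{n_c}{G}\bigr)$, the terms $n_c$ and $\tfrac{n_c^2}{G}$ cancel pairwise, leaving the clean identity
\begin{equation*}
\Delta(f,g)=\frac{n_c\,f+n_i\,g}{G},
\end{equation*}
where I have written $n_i=G-n_c$. This is the key step: it shows the damage is just a nonnegative weighted count of the two types of mislabels, with the false-positive count $f$ weighted by $n_c/G$ and the false-negative count $g$ weighted by $n_i/G$.

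Next I would record the distributional facts needed. Since the $\vrr_j$ are i.i.d.\ $\textnormal{Bernoulli}(\tfrac12)$ and independent of correctness, $f=\sum_{j\in\ICal}\1\{\vrr_j=1\}$ is a sum of $n_i$ i.i.d.\ $\textnormal{Bernoulli}(\tfrac12)$ variables, hence $f\sim\textnormal{Binomial}(n_i,\tfrac12)$; similarly $g\sim\textnormal{Binomial}(n_c,\tfrac12)$; and because $\ICal$ and $\CCal$ are disjoint, $f$ and $g$ are independent. Thus $\EE[f]=n_i/2$, $\Var(f)=n_i/4$, $\EE[g]=n_c/2$, $\Var(g)=n_c/4$.

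Finally I would apply linearity of expectation to the identity for $\Delta$ to get $\EE[\Delta]=\tfrac{1}{G}\bigl(n_c\cdot\tfrac{n_i}{2}+n_i\cdot\tfrac{n_c}{2}\bigr)=\tfrac{n_c n_i}{G}=\tfrac{n_c(G-n_c)}{G}$, and apply independence to get $\Var(\Delta)=\tfrac{1}{G^2}\bigl(n_c^2\Var(f)+n_i^2\Var(g)\bigr)=\tfrac{n_c n_i(n_c+n_i)}{4G^2}=\tfrac{n_c(G-n_c)}{4G}$. There is no real obstacle here; the only place to be careful is the algebraic cancellation in the first step (making sure the $\tfrac{n_c^2}{G}$ contributions from $\Sigma_\CCal^{\textnormal{ideal}}$ and from $-\tfrac{n_c T}{G}$ exactly offset), after which everything follows from standard binomial moments.
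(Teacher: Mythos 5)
Your proposal is correct and follows essentially the same route as the paper: both reduce $\Delta$ to a linear combination of $f$ and $g$ (your form $\tfrac{n_c f+n_i g}{G}$ is algebraically identical to the paper's $\tfrac{n_c(f-g)}{G}+g$) and then apply standard binomial moments together with the independence of $f$ and $g$. No gaps.
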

The expected damage decreases as the number of correct rollouts $n_c$ increases, and its variance likewise shrinks with $n_c$, explaining why the stronger models exhibit more stable validation curves. The largest fluctuations occur near the symmetric regime $n_c \approx n_i$. This is consistent with our empirical results in Figure~\ref{f1}. We further refine this characterization by decomposing the damage into conditional means in \cref{thm:cond-means}. The proofs are given in Appendix~\ref{app:S5proof}.

\begin{theorem}\label{thm:cond-means}
Let $f \sim \textnormal{Binomial}(n_i,\frac{1}{2})$ and $g \sim \textnormal{Binomial}(n_c,\frac{1}{2})$ be independent, and let $\Delta$ be defined in \cref{eq:damage}. For policy with more correct rollouts ($n_c>n_i$), we have
\begin{equation*}
\EE[\Delta \mathbf{1}_{\{f>g\}}] \leq \EE[\Delta \mathbf{1}_{\{g>f\}}]. 
\end{equation*}
As $n_c$ increases on $[\frac{G}{2}, G]$, we have $\EE[\Delta \mathbf{1}_{\{f>g\}}]$ constitutes a strictly smaller fraction of $\EE[\Delta]$.
\end{theorem}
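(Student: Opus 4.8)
The plan is to reduce both claims to elementary statements about the integer-valued gap $D:=g-f$ by first establishing one exact identity. I would start by expanding \cref{eq:damage}: substituting $T=f+(n_c-g)$ and $\bar{\vrr}=T/G$ and cancelling, the damage collapses to the linear form $\Delta(f,g)=\tfrac{n_c f+n_i g}{G}$ (this also re-derives $\EE[\Delta]=\tfrac{n_c n_i}{G}=:\mu$ from Proposition~\ref{prop:global-moments}). The crux is then the identity
\begin{equation*}
\EE[\Delta\mid D=d]=\mu\qquad\text{for every attainable }d,
\end{equation*}
i.e.\ $\Delta$ is mean-independent of the gap. To prove it I would realize $f=\sum_{j=1}^{n_i}X_j$ and $g=\sum_{k=1}^{n_c}Y_k$ with all $X_j,Y_k$ i.i.d.\ $\mathrm{Bernoulli}(\tfrac12)$, and set $X_j':=1-X_j$. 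Then $\{D=d\}=\{\sum_k Y_k+\sum_j X_j'=n_i+d\}$, so conditioning on $D$ is the same as fixing the total number of ``heads'' among the $G$ exchangeable variables $(Y_k)_k\cup(X_j')_j$ to equal $n_i+d$. Given that total, $\sum_j X_j'$ and $\sum_k Y_k$ are hypergeometric with means $\tfrac{n_i(n_i+d)}{G}$ and $\tfrac{n_c(n_i+d)}{G}$, whence $\EE[f\mid D=d]=n_i-\tfrac{n_i(n_i+d)}{G}=\tfrac{n_i(n_c-d)}{G}$ and $\EE[g\mid D=d]=\tfrac{n_c(n_i+d)}{G}$; substituting into the linear form gives $\EE[\Delta\mid D=d]=\tfrac{n_c n_i}{G^2}\big((n_c-d)+(n_i+d)\big)=\mu$.

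Given the identity, the first inequality is a two-line consequence. Since $\{f>g\}=\{D<0\}$ and $\{g>f\}=\{D>0\}$ are $\sigma(D)$-measurable, the tower property gives $\EE[\Delta\,\mathbf{1}_{\{f>g\}}]=\mu\,\mathbb{P}(D<0)$ and $\EE[\Delta\,\mathbf{1}_{\{g>f\}}]=\mu\,\mathbb{P}(D>0)$. Symmetry of each binomial about its mean ($Y_k\overset{d}{=}1-Y_k$, $X_j\overset{d}{=}1-X_j$, all independent) yields $D\overset{d}{=}(n_c-n_i)-D$, so $D$ is symmetric about $(n_c-n_i)/2$; hence for $n_c>n_i$ one gets $\mathbb{P}(D<0)=\mathbb{P}(D>n_c-n_i)\le\mathbb{P}(D>0)$, which is precisely $\EE[\Delta\,\mathbf{1}_{\{f>g\}}]\le\EE[\Delta\,\mathbf{1}_{\{g>f\}}]$.

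For the second claim, the identity also gives $\EE[\Delta\,\mathbf{1}_{\{f>g\}}]/\EE[\Delta]=\mathbb{P}(D<0)=\mathbb{P}(g<f)$ whenever $n_i\ge1$, so it remains to show $\mathbb{P}(g<f)$ strictly decreases as $n_c$ increases along $n_i=G-n_c$. I would do this by coupling consecutive parameter values: write $f=f'+B''$ with $f'\sim\mathrm{Bin}(n_i-1,\tfrac12)$, and the $(n_c{+}1)$-version of $g$ as $g'=g+B$ with $g\sim\mathrm{Bin}(n_c,\tfrac12)$, where $B,B''\sim\mathrm{Bernoulli}(\tfrac12)$ are mutually independent and independent of $(f',g)$. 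Since $B,B''\ge0$, $\{g'<f'\}=\{g+B<f'\}\subseteq\{g<f'\}\subseteq\{g<f'+B''\}=\{g<f\}$, so $\mathbb{P}(g'<f')\le\mathbb{P}(g<f)$; and the event $\{f'=g\}\cap\{B''=1\}$ sits in $\{g<f\}\setminus\{g'<f'\}$ with probability $\tfrac12\mathbb{P}(f'=g)>0$, giving the strict drop. This yields monotonicity on $G/2\le n_c\le G-1$; the endpoint $n_c=G$ (where $\EE[\Delta]=0$) is degenerate and should be excluded from the ``fraction'' statement.

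The step I expect to be the main obstacle is the conditional-mean identity $\EE[\Delta\mid D]=\mu$: it is the only genuinely nontrivial ingredient, and the cleanest route really does seem to be the ``flip the incorrect-set coins'' reparameterization above rather than a direct binomial-sum manipulation (which is provable via Vandermonde's identity plus an absorption identity, but is messier). Once that identity is in hand, both parts reduce to the symmetry of $D$ about $(n_c-n_i)/2$ together with a single one-line stochastic-domination coupling, neither of which presents difficulty.
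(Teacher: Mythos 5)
Your proposal is correct, and its backbone coincides with the paper's: after reducing \cref{eq:damage} to the linear form $\Delta=\tfrac{n_c f+n_i g}{G}$, your key identity $\EE[\Delta\mid D]=\tfrac{n_c n_i}{G}$ is the same fact the paper proves, since $D=g-f$ and the paper's conditioning statistic $Z=f+(n_c-g)=n_c-D$ generate the same $\sigma$-algebra, and both arguments run through exchangeability (hypergeometric conditional means) given the total number of $+1$ rewards. Where you diverge is in the two remaining steps. For the tail comparison, the paper writes $\Pr(f>g)$ and $\Pr(g>f)$ as wedge sums of products of binomial coefficients and proves the \emph{strict} inequality termwise via monotonicity of $n\mapsto\binom{n}{k}/\binom{n}{\ell}$; your symmetry argument $D\overset{d}{=}(n_c-n_i)-D$ is shorter but only delivers the non-strict bound $\Pr(D<0)\le\Pr(D>0)$, which matches the stated ``$\le$'' (and could be upgraded to strict by noting $\Pr(0<D\le n_c-n_i)>0$). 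More significantly, your coupling argument for the second claim -- showing $\Pr(f>g)$ strictly decreases along $n_c\mapsto n_c+1$, $n_i\mapsto n_i-1$ -- is a genuine addition: the paper's written proof establishes only the first inequality and never verifies the monotonicity of the fraction $\EE[\Delta\1_{\{f>g\}}]/\EE[\Delta]=\Pr(f>g)$, so your construction (including the strictness event $\{f'=g,\,B''=1\}$ and the sensible exclusion of the degenerate endpoint $n_c=G$ where $\EE[\Delta]=0$) actually closes a gap the paper leaves implicit.
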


\Cref{thm:cond-means} refines \cref{prop:global-moments}. As the overall damage $\EE[\Delta]$ decreases with $n_c$, the \emph{composition} of that (shrinking) damage shifts: for stronger models (those with $n_c > n_i$), FN-dominated regions ($g>f$) contribute a larger share than FP-dominated regions ($f>g$), and the FP-dominated portion decreases monotonically. Practically, this means that training stronger models on datasets where $n_c > n_i$ incurs less total misalignment damage—particularly fewer FP misallocations—and is therefore more likely to yield improvements under random rewards. This effect persists even beyond contaminated-reward settings. We further corroborate this trend through experiments on a stronger distilled \texttt{Llama} model and a weaker \texttt{Qwen-Math} model, with results shown in Figure~\ref{f5}.

\begin{figure*}[!t]   
\includegraphics[width=\linewidth]{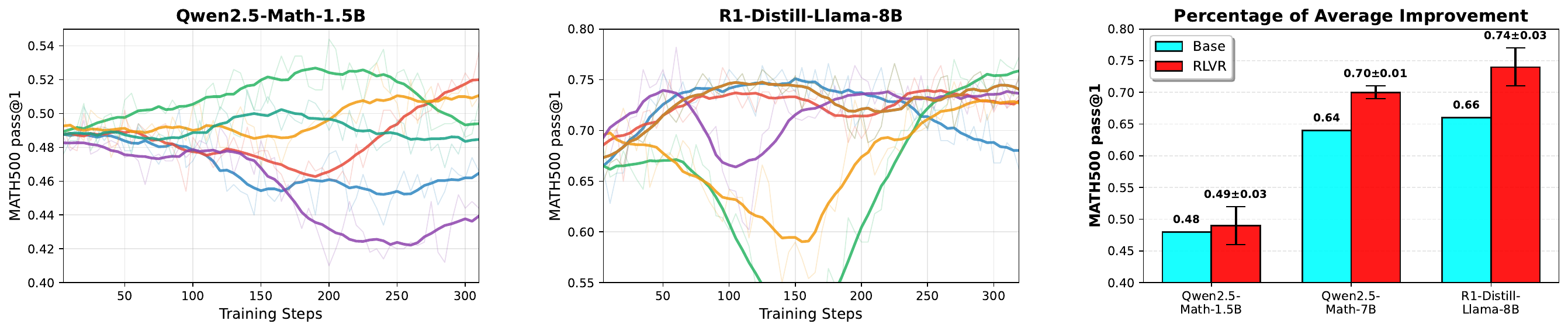}
\caption{Results of \texttt{Qwen2.5-Math-1.5B} under clipped training (\textbf{Left}); results of R1-Distill-Llama-8B under clipped training (\textbf{Middle}); percentage improvement (averaged over six independent runs) for different models under the same training and validation setup (\textbf{Right}).} \label{f5} \vspace*{-1em}
\end{figure*}

As reported by \citet{Shao-2025-Spurious}, base \texttt{Llama} models reliably degrade during random-reward training across trials. Under the reward-misalignment perspective, stronger models should benefit more and are thus more likely to improve. We test this by evaluating a stronger distilled \texttt{Llama} variant, whose base and teacher models both exhibit contamination on \texttt{MATH500}. As shown in Figure~\ref{f5} (Middle), using a rollout length of $8192$ tokens and matching all other hyperparameters to the \texttt{Qwen-Math} configuration, we observe improvements comparable to those in Figure~\ref{f2}. In contrast, the weaker and potentially contaminated \texttt{Qwen-Math} model (Figure~\ref{f5}, Left) fails to achieve similar gains. These results indicate that validation-set contamination does not account for the improvements under random rewards, nor is the effect specific to \texttt{Qwen-Math}. Figure~\ref{f5} (Right) summarizes the percentage improvements across the model results in Figure~\ref{f1} (Left) and Figure~\ref{f5} (Left and Middle).

%!TEX root = ../main.tex
\section{Conclusion}
We now revisit the three guiding questions posed in \S\ref{sec:prelim}: (i) Can random rewards improve model performance, and under what conditions? (ii) Does clipping bias provide a meaningful learning signal, and if not, what purpose does it serve? (iii) Is there a direct causal relationship between policy entropy and policy performance?

First of all, random rewards \textbf{can} improve model performance. As shown in \S\ref{s5}, the benefits depend on model strength: stronger models are more likely to realize gains from random reward, whereas weaker models become unstable when trained on harder datasets. Second, clipping bias does \textbf{not} supply a useful signal (\S\ref{s3.1}); instead, its function is to regulate policy entropy in the presence of spurious training signals (\S\ref{s4.1}). Finally, as demonstrated in \S\ref{s4.2} and \S\ref{s4.3}, policy entropy and performance do not exhibit a deterministic causal relationship: entropy decreases may accompany performance collapse, while entropy increases may coincide with improvements. Overall, our theoretical and empirical analyses disentangle the complex interplay between exploration and exploitation in RLVR, offering a principled foundation for future work to understand the alignment dynamics. 

\newpage

\section*{Acknowledgment}
We sincerely appreciate Buzz High Performance Computing (\hyperlink{https://www.buzzhpc.ai}{\texttt{https://www.buzzhpc.ai}}, \texttt{info@buzzhpc.ai}) for providing computational resources and support for this work.

\bibliographystyle{iclr2025}
\bibliography{ref}

\newpage
%!TEX root = ../main.tex
\appendix
% \section{Usage of Large Language Models}
% We use LLMs to support and polish the writing, employing them primarily for grammar and style checks to enhance the readability of this work.

\section{Related Works}\label{app:RW}
We provide a technical review clarifying the differences in experimental setups and summarizing insights from recent advances in RLVR for LLM post-training.

\paragraph{Spurious reward in classical RL.} We provide broader context on how prior work in reinforcement learning for classical settings has leveraged spurious rewards to facilitate training. Spurious reward signals are closely linked to challenges in generalization \citep{Zhang-2018-Dissection,Koch-2021-Objective,Langosco-2022-Goal}. While these works illustrate deliberate uses of such signals, spurious rewards may also arise unintentionally, leading to reward misspecification and reward hacking \citep{Pan-2022-Effects}; similar reward hacking has been documented in \citet{Tien-2023-Causal}. A second relevant thread traces back to \emph{potential-based reward shaping} (PBRS) \citep{Ng-1999-Invariance}, which introduces additional or misaligned rewards in principled ways to preserve the optimality of desired behaviors. More recently, \emph{Random Network Distillation} (RND) \citep{Burda-2019-Exploration} emerged as a leading exploration mechanism, with subsequent extensions such as \citet{Ma-2025-Perturbation}. In this same context, numerous other works propose reward signals (often spurious with respect to the true task objective) that encourage an agent to explore the state space in ways that eventually uncover genuine rewards \citep{Pathak-2017-Curiosity, Zhang-2021-NovelD, Wang-2023-LIBERTY, Li-2024-Breaking}. Spurious rewards have thus played a substantial role in improving exploration. One prominent theoretical foundation is \emph{posterior sampling} \citep{Russo-2014-Learning}, which motivates exploration through uncertainty and has been generalized to broader RL settings \citep{Chen-2024-SICNN, Xu-2025-RLQESA}.

\paragraph{Spurious reward in RLVR.} We now turn to recent works that study spurious rewards in RLVR. Although these works report broadly similar empirical phenomena, their experimental configurations differ in important ways. In \citet{Shao-2025-Spurious}, the prompt omits the standard \texttt{Qwen}-style instruction to place the final answer in a box. As they note, \texttt{Qwen-Math} is highly sensitive to prompt formatting, and such differences can substantially shift baseline performance. In contrast, our experiments follow the default \texttt{Qwen} prompt used in \texttt{verl} \citep{Sheng-2025-Hybridflow}, which explicitly instructs the model to place the final answer in a boxed expression—mirroring the RLVR verifier in \texttt{verl}, which extracts the boxed answer for scoring and reward assignment.

Apart from this prompt choice, we match all other hyperparameters in \citet{Shao-2025-Spurious}. \citet{Oertell-2025-Heuristics}, however, adopt a markedly different configuration: (i) a rollout-length cap of 1024 tokens (well below the 4096-token context window of \texttt{Qwen-Math}), (ii) a different training dataset (\texttt{MATH} \citep{Hendrycks-2021-Measuring} instead of DeepScaleR \citep{Luo-2025-Deepscaler}), (iii) a significantly smaller learning rate ($1 \times 10^{-7}$ versus $5\times 10^{-7}$ in \citet{Shao-2025-Spurious}), and (iv) a reduced batch size (64 versus 128). The smaller learning rate changes the effective update magnitude, and the smaller batch size yields noisier estimates of the stochastic reward distribution.
Given these differences, the empirical results reported across prior works are not directly comparable.

\paragraph{Contamination.} We further comment on potential contamination in the \texttt{Qwen2.5-Math} models. As reported by \citet{Wu-2025-Memorization}, contamination in \texttt{Qwen-Math} has been observed only on validation sets (e.g., \texttt{MATH500}). Beyond these findings, the official \texttt{Qwen2.5-Math} technical report \citep[Table 1]{Qwen2.5-Math} also acknowledges possible contamination arising from the close similarity between the training and validation sets of the \texttt{MATH} dataset \citep{Hendrycks-2021-Measuring}. Their training corpus comprises two components: (i) CoT data synthesis, which includes GSM8K, MATH, and NuminaMath \citep[\S 3.1.1]{Qwen2.5-Math}, and (ii) a tool-integrated reasoning dataset containing GSM8K, MATH, CollegeMath, NuminaMath, MuggleMath, and DotaMath \citep[\S 3.1.2]{Qwen2.5-Math}. In contrast, our experiments employ the \texttt{DeepScaleR} training set, which consists exclusively of selected questions from AMC, AIME, Omni-Math, and Still \citep{Luo-2025-Deepscaler}. None of these datasets appear in the training sources listed for \texttt{Qwen2.5-Math}. Therefore, we believe that our training data does not overlap with the datasets used to train \texttt{Qwen2.5-Math} and is thus not contaminated. 

\paragraph{LLM entropy.} \citet{Agarwal-2025-Entropy} demonstrate that token-level entropy minimization can substantially improve LLM reasoning without verifiable feedback, arguing that reduced entropy increases model confidence and reveals latent reasoning capability. This mechanism parallels \emph{clipped training under random rewards}, where updates primarily modulate entropy rather than exploit informative rewards. However, we show that entropy minimization alone may drive the policy toward low-entropy yet suboptimal solutions; hence, entropy should be viewed as a stabilizing regularizer rather than a replacement for genuine RLVR signals.

Related work explores entropy through the lens of self-confidence. In particular, \citet{Prabhudesai-2025-Maximizing} use low-entropy rollouts as implicit rewards, achieving gains across diverse benchmarks, while \citet{Gao-2025-One} show that even a single unlabeled example can improve reasoning via entropy reduction. Methods such as EMPO \citep{Zhang-2025-EMPO} and \citet{Zhao-2025-Learning} similarly enhance performance in unsupervised settings by amplifying model confidence. \citet{Niekerk-2025-Post} further construct preference datasets from confidence scores, achieving RLHF-level improvements without human feedback. In this context, \citet{Cui-2025-Entropy} propose a simple but influential empirical relationship between policy entropy $\mathcal{H}$ and model performance $R$, fit across extensive experiments:
\begin{equation*}
    R = -a\exp\left(\mathcal{H}\right) + b,\qquad a>0. 
\end{equation*}
This relation suggests that performance increases monotonically as entropy decreases but plateaus once entropy collapses too early. Intuitively, when a model overemphasizes certain tokens, its output distribution becomes overconfident and loses exploratory capacity, leading to a performance ceiling.

To mitigate early-stage entropy collapse, several works propose alternative strategies. \citet{Shen-2025-On} analyze why entropy regularization suffers from limited benefit in RLVR training for LLMs by attributing it to the vast response space and the sparsity of optimal outputs, and then introduce an adaptive entropy-control method using a clamped entropy bonus with automatically tuned coefficients. \citet{Song-2025-Outcome} show that ORM yields induces sharp reductions in output entropy and diversity (as shown by lower \texttt{pass@n} scores), and propose outcome-level entropy bonuses to counteract it. Prior works \citep{Wang-2025-Harnessing, Yao-2025-Diversity, Zheng-2025-First, Cheng-2025-Reasoning} also develop additional techniques for controlling entropy during RLVR training.

\paragraph{Reinforcement learning for LLM.} Proximal policy optimization (PPO) \citep{Schulman-2017-Proximal} has become standard for reward-based policy updates in LLM training and remains a core component of RLHF. However, since PPO requires loading and maintaining four separate models during training, it is computationally and memory intensive. This has motivated the development of lighter-weight and adapted policy-gradient updates \citep{Li-2024-Remax,Ahmadian-2024-Back,Shao-2024-Deepseekmath,Guo-2025-Deepseek}. In parallel, advances in verifiable reward construction \citep{Cobbe-2021-Training,Uesato-2022-Solving,Zelikman-2022-STaR,Singh-2023-Progprompt,Hosseini-2024-Vstar,Lightman-2024-Verify,Wang-2024-Math,Luo-2024-Improve,Setlur-2025-Rewarding,Zhang-2025-Lessons} have enabled reinforcement learning to substantially improve LLM reasoning, particularly in mathematical problem solving. Beyond training algorithms, recent work also explores post-processing and collaborative strategies to strengthen reasoning performance. \citet{Kay-2025-Consensus} and \citet{Zhao-2025-Majority} propose consensus-based and answer-aggregation methods within multi-model frameworks. \citet{Chen-2025-SelfQuestioning} introduce a self-questioning paradigm for iterative refinement, while \citet{Park-2025-MAPoRL} develop an online multi-agent collaborative reinforcement learning framework.

\paragraph{Offline alignment.} \textit{Direct alignment} methods \citep{Rafailov-2023-Direct} provide a simple and stable offline alternative to RLHF. Extensions to DPO include broader ranking objectives \citep{Dong-2023-RAFT, Yuan-2023-RRHF, Song-2024-Preference, Chen-2024-Noise, Liu-2025-LiPO} and lightweight variants that remove the reference model \citep{Hong-2024-ORPO, Meng-2024-SimPO}. Since DPO avoids reward model, limited human preference data becomes a key bottleneck; recent work addresses this by generating additional preference pairs via SFT policies \citep{Zhao-2023-Slic, Liu-2024-Statistical}. The framework has also been generalized to token-level MDPs \citep{Rafailov-2024-From} and broader RL settings \citep{Azar-2024-General}. Complementary approaches incorporate online human feedback to reduce distribution shift and improve reasoning \citep{Dong-2024-RLHF, Xiong-2024-Iterative, Pang-2024-Iterative}. Another line studies \textit{unintentional alignment} and proposes remedies \citep{Pal-2024-Smaug, Tajwar-2024-Preference, Liu-2024-Provably, Xiao-2024-Bias, Yuan-2025-Advancing, Razin-2025-Unintentional, Chen-2025-Compo}. For example, \citet{Razin-2025-Unintentional} filter noisy preference pairs using CHES similarity, while \citet{Chen-2025-Compo} show that combining comparison oracles with DPO mitigates unintended alignment effects.

\section{Additional Experimental Results}\label{app:exp}
We begin with a high-level overview of the experimental setup, followed by a comprehensive presentation of results in both the main text and the appendix. Our experiments are organized around two objectives: (i) characterizing the interplay between clipping, policy entropy, and performance under spurious rewards, and (ii) assessing whether the observed benefits of spurious rewards generalize beyond \texttt{Qwen-Math} to a broader range of model families.

For the first objective, we focus on \texttt{Qwen-Math-7B} and provide a controlled setting for examining how clipping and policy entropy affect model performance under spurious rewards. This choice is supported by previous empirical findings and practical considerations: \texttt{Qwen-Math-7B} has a moderate parameter count and a relatively short 4K context window, stabilizing training and reducing exposure to issues such as gradient explosion. This stability is crucial since clipping is commonly used to prevent gradient explosion in larger models with longer chain-of-thought rollouts. As shown in Figure~\ref{f2} (\textbf{Right}) and discussed in \S\ref{s4.2}, removing clipping on a stronger model with longer rollouts can cause catastrophic training collapse, making it difficult to disentangle the core relationship between clipping, entropy, and performance. Indeed, one key motivation for applying the clipping in GRPO originates from the need to stabilize training for the \texttt{DeepSeek-R1-671B} model. For the second objective, we additionally evaluate two non-contaminated model families, \texttt{Llama} and \texttt{QwQ}, for which no contamination has been reported in the community, to demonstrate that the benefits and behaviors of spurious rewards extend beyond \texttt{Qwen-Math} and reflect broader RLVR learning.
\begin{figure}[!t]   
\includegraphics[width=\linewidth]{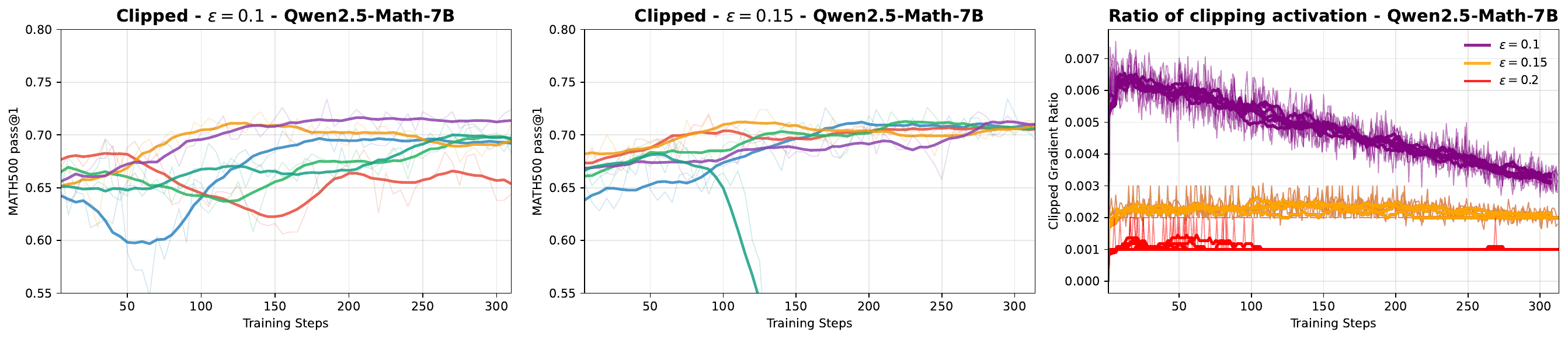}
\caption{All experiments follow the same setup as Figure~\ref{f1}, varying the threshold $\varepsilon$ with six independent runs for each setting: trials with clipping ratio $\varepsilon = 0.1$ (\textbf{Left}); trials with clipping ratio $\varepsilon = 0.15$ (\textbf{Middle}); and the ratio of clipping activations across $\varepsilon \in \{0.2, 0.15, 0.1\}$ (\textbf{Right}).} \label{fig:ablationclipping} \vspace{-1em}
\end{figure}

In \S\ref{s3.2}, we examine how clipping affects model performance by comparing training with and without clipping. In \S\ref{s4.2}, we validate our theoretical findings on the relationship between clipping and policy entropy. For consistency, these experiments use \texttt{Qwen-Math-7B} trained on the DeepScaleR dataset. Then, in \S\ref{s4.3} and this section, we investigate the interaction between entropy and performance on the more challenging AIME training set, again evaluating both clipped and unclipped training. We find that policy entropy is not directly related to performance improvements, and that models gain significantly less from random rewards when their baseline performance is reduced by dataset difficulty. This stands in contrast to stronger \texttt{Llama} and \texttt{QwQ} models, which continue to benefit from random rewards even on harder tasks. Finally, in \S\ref{s5}, we proceed to a broader spectrum of model strengths, showing that stronger models are more likely to benefit from random reward signals.

\begin{wrapfigure}{r}{0.3\textwidth}
\centering 
\vspace{-1.5em}
\includegraphics[width=0.3\textwidth]{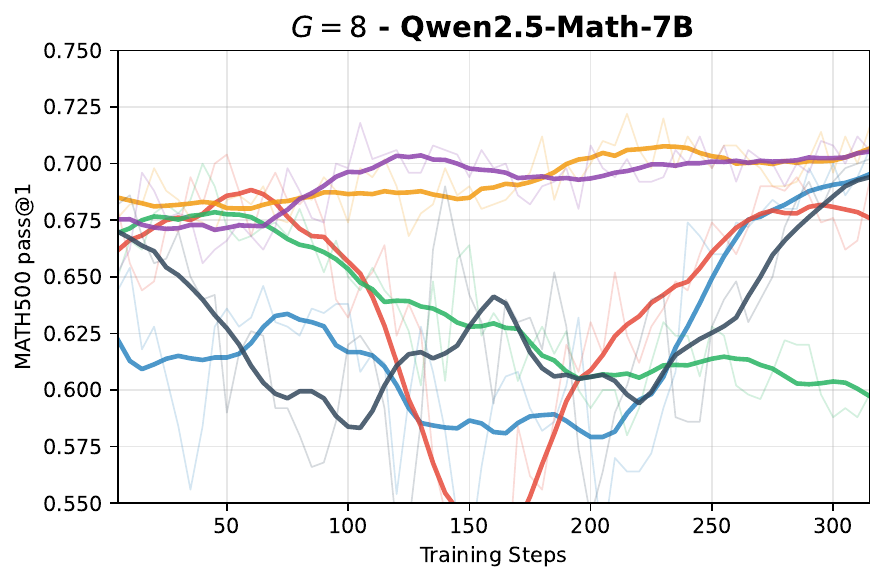}
\caption{Smaller group size.} \label{fig:ablationg}
\vspace{-1.5em}
\end{wrapfigure}

\paragraph{Ablation analysis.} We ablate the GRPO group size $G$. Larger groups ($G = 16$) yield more balanced binary rewards, while smaller groups increase the likelihood of extreme reward-misalignment events, such as entire groups receiving reward $0$ despite containing correct rollouts, or reward $1$ despite containing incorrect ones. Thus, reducing $G$ inherently amplifies instability from the reward-misalignment perspective. As shown in Figure~\ref{fig:ablationg}, using a smaller group size ($G = 8$) allows most runs to improve, but leads to higher variance and less stable learning dynamics throughout training. 

We further analyze the effect of varying the clipping ratio threshold $\varepsilon$. Indeed, Figure~\ref{f1} examined the cases $\varepsilon = 0.2$ and $\varepsilon = \infty$ (no clipping), showing that relaxing the clipping threshold does not degrade performance. However, this does not yet confirm robustness under stricter clipping. To address this, we report additional results for $\varepsilon \in \{0.15, 0.1\}$ in Figure~\ref{fig:ablationclipping}. Across these settings, we observe behavior consistent with Figure~\ref{f1}: (i) some runs fail to improve, as predicted by our probabilistic reward-misalignment framework; and (ii) successful runs converge to roughly 70\% validation accuracy regardless of the clipping strength. As discussed in \S\ref{s4.1}, clipping primarily influences policy entropy. Among the improving trials, stricter clipping tends to reduce variance across seeds, reflecting a more deterministic policy toward convergence. Taken together, these results indicate that our findings remain robust under different choices of $\varepsilon$.

\begin{figure}[h]   
\includegraphics[width=\linewidth]{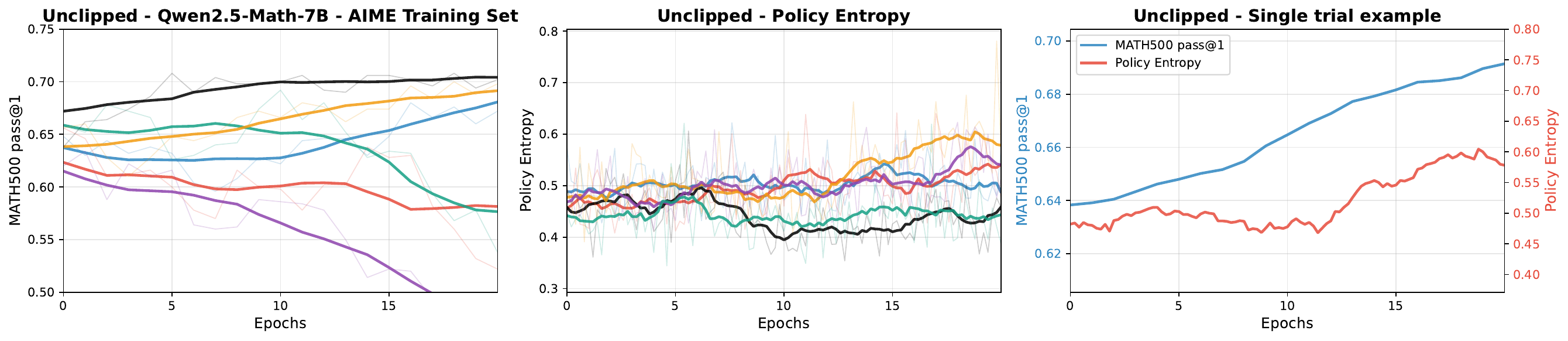}
\caption{Unclipped \texttt{Qwen2.5-Math-7B} on the hard AIME dataset: independent runs following from the setup in Figure~\ref{f3} (\textbf{Left}); corresponding policy entropy dynamics during unclipped training (\textbf{Middle}); joint evolution of model performance and policy entropy for an example trial (\textbf{Right}).} \label{fig:unclippedaime} \vspace{-1em}
\end{figure}

\begin{figure}[h]   
\includegraphics[width=\linewidth]{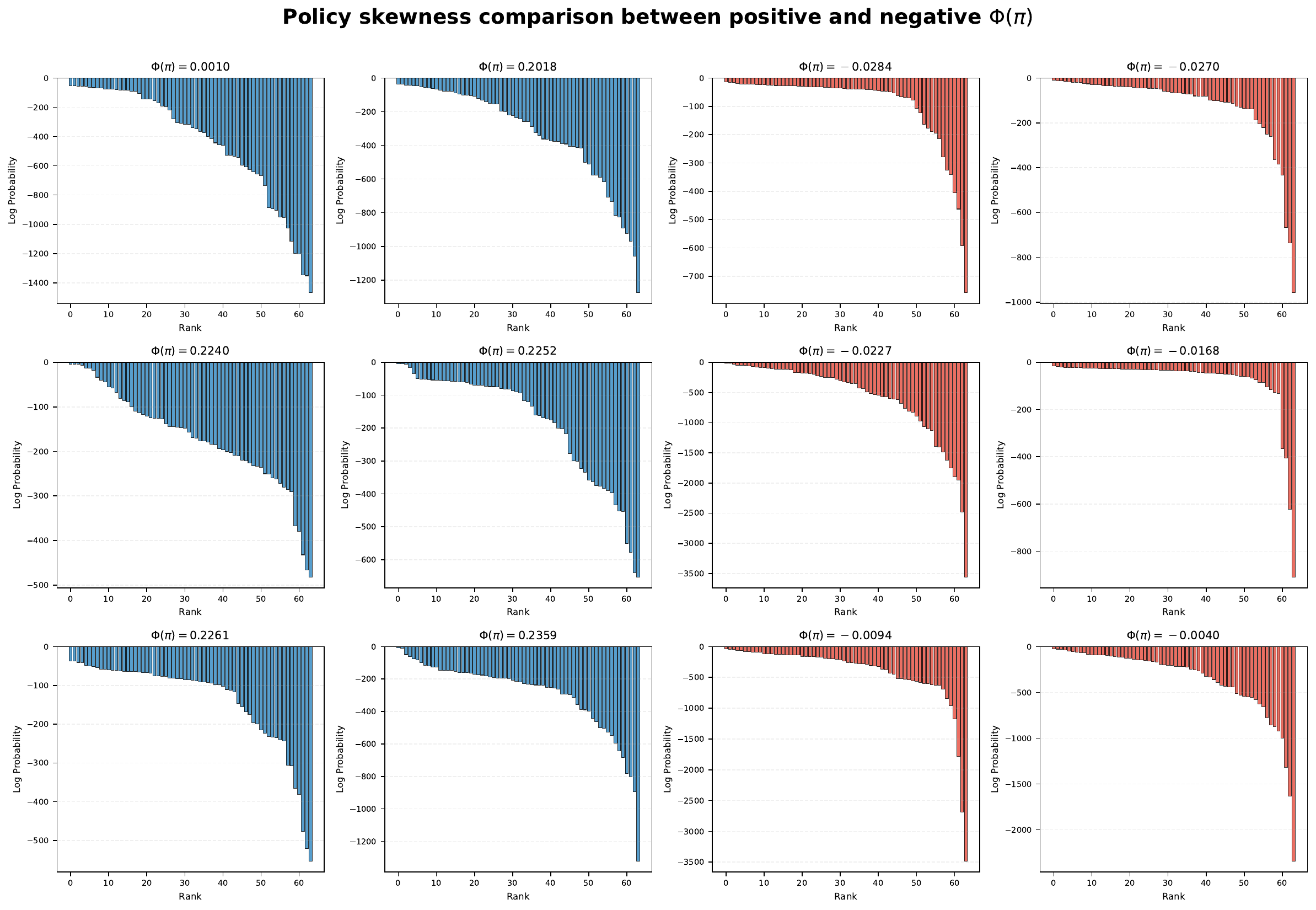}
\caption{Visualization of policy action distributions over 12 prompt $\x_i$. Each subplot displays the sorted log-probability of $\pi(\y \mid \x_i)$ for 64 sampled responses from each prompt $\x_i$. Columns 1-2 (blue) correspond to prompts $\x_i$ with $\Phi(\pi(\cdot \mid \x_i)) > 0$, while Columns 3-4 (orange) correspond to prompts with $\Phi(\pi(\cdot \mid \x_i)) < 0$. As discussed in \cref{rmk:numunclipped}, the entropy increase under unclipped training can occur only for the skewed one shown in Columns 3-4.} \label{fig:skew}
\vspace{-1em}
\end{figure}

\paragraph{Unclipped training.} In Figure~\ref{f3}, we present clipped training results for \texttt{Qwen2.5-Math-7B} on the AIME dataset, where clipping induces entropy collapse as demonstrated by Theorem~\ref{thm:entropycollp}. Although entropy decreases, performance also degrades, indicating that lower entropy is not reliably associated with better model performance. This raises an important question: what happens in the complementary regime where entropy \emph{increases} under random reward? To answer this, we conduct additional experiments and report the results in Figure~\ref{fig:unclippedaime}. Across independent seeds shown in Figure~\ref{fig:unclippedaime} (Left), the behavior remains qualitatively similar to the clipped case in Figure~\ref{f3}: some runs improve, others degrade, and overall learning dynamics appear stochastic. In Figure~\ref{fig:unclippedaime} (Middle), we empirically confirm the predicted entropy increase under unclipped training. Among the improving runs, Figure~\ref{fig:unclippedaime} (Right) provides a representative example in which performance improves even as entropy increases. These results answers our question: there is \textbf{no} direct causal relationship between policy entropy and model performance. Both clipped and unclipped experiments support that, as the model's initial performance on a dataset decreases, its likelihood of benefiting from random rewards diminishes.

\paragraph{Policy skewness.} We empirically evaluate the skewness measure $\Phi(\pi)$ introduced in Remark~\ref{rmk:numunclipped} on the actual \texttt{Qwen-Math-7B} policy. Recall that under unclipped training, entropy can increase after a single update only when $\Phi(\pi)<0$. Since the policy induces different action distributions $\pi(a \mid \x)$ for different input questions $\x$, we estimate skewness across questions by sampling the first 500 examples from the DeepScaleR training set~\citep{Luo-2025-Deepscaler}. For each question $\x$, we generate 64 responses $\y$ from the policy using the same sampling and decoding hyperparameters as in \S\ref{s3.2} and compute an empirical estimate of $\Phi(\pi(\cdot \mid \x))$. We visualize selected prompts $\x_i$ along with their corresponding skewness values in Figure~\ref{fig:skew}, providing a clearer picture of how \texttt{Qwen-Math-7B} behaves across the dataset. Among 500 sampled questions, 358 ones satisfy $\Phi(\pi(\cdot \mid \x_i)) < 0$, which is consistent with the observed entropy increases for the unclipped training.

\section{Theoretical Analysis}
\paragraph{Setup.} We model the next-token generation using a softmax at each history. Let $\VCal$ be the vocabulary and $\h_t = (\x,\y_{<t})$ be the history. For each prompt $\x \in \XCal$ and response $\vaa = (a_1,\ldots,a_L) \in \mathcal{V}^L$ where $a_t \in \mathcal{V}$, we have 
\begin{equation*}
\pi_{\theta}(\vaa \mid \x) = \prod_{t=1}^L \pi_{\theta_t}(a_t \mid \h_t), \quad \textnormal{where } \pi_{\theta_t}(a_t \mid \h_{t}) = \tfrac{\exp(\theta_{t,\h_{t},a_t})}{\sum_{a'\in\mathcal{V}}\exp(\theta_{t,\h_{t},a'})}, 
\end{equation*}
where $\theta=(\theta_1^\top,\ldots,\theta_L^\top)^\top$, and $\theta_t\in\mathbb{R}^{|\mathcal{X}||\mathcal{V}|^t}$ for all $t=1,\ldots,L$.

Given trajectories drawn from $\pi_{\textnormal{old}}$, we define the per-token ratio $r_t^{(i)}(\theta)=\frac{\pi_\theta(\y_t^{(i)} \mid \h_t^{(i)})}{\pi_{\textnormal{old}}(\y_t^{(i)} \mid \h_t^{(i)})}$. For a group $\{\y^{(i)}\}_{i=1}^G\sim\pi_{\textnormal{old}}(\cdot \mid \x)$ and the corresponding outcome-reward advantages $\{A_i\}_{i=1}^G$, the empirical per-history advantage used in the policy update is
\begin{equation*}
\tilde{A}(\h, a) = \tfrac{1}{G} \sum_{i=1}^G \sum_{t=1}^L \left(\tfrac{\1\{\h_t^{(i)}=\h,\y_t^{(i)}=a\}}{\pi_{\textnormal{old}}(a \mid \h)}\right)A_i.
\end{equation*}
This can be derived from \cref{eq:tokenA} using $L = |\y^{(i)}|$ for all $i=1,2,\ldots, G$. 

Following \citet{Williams-1992-Simple} and \citet{Li-2024-Remax}, we define the clipped surrogate loss with per-token ratios without adding separate length normalization terms as follow, 
\begin{equation*}
J(\theta) = \EE_{\x \sim \rho, \{\y^{(i)}\}_{i=1}^G \sim \pi_{\theta_{\textnormal{old}}}(\cdot \mid \x)}\left[\tfrac{1}{G} \sum_{i=1}^G \sum_{t=1}^L \min\left\{r_t^{(i)}(\theta)A_i, \texttt{clip}(r_t^{(i)}(\theta), 1-\varepsilon, 1+\varepsilon)A_i\right\}\right],
\end{equation*}
Without clipping, the surrogate loss reduces to
\begin{equation*}
J(\theta) = \EE_{\x \sim \rho, \{\y^{(i)}\}_{i=1}^G \sim \pi_{\theta_{\textnormal{old}}}(\cdot \mid \x)} \left[\tfrac{1}{G}\sum_{i=1}^G \sum_{t=1}^L r_t^{(i)}(\theta)A_i\right]. 
\end{equation*}
We derive the closed-form token-level update for optimizing the unclipped surrogate loss with a forward KL penalty to $\pi_{\textnormal{old}}$ as follows. To begin with, notice that 
\begin{equation*}
    \hat{J}(\theta)=\frac{1}{G}\sum_{i=1}^G\sum_{t=1}^L
    r_t^{(i)}(\theta)\,A_i = \sum_{\h}\sum_{a\in\mathcal{V}}\pi_\theta(a\mid \h)\underbrace{\left[\tfrac{1}{G}\sum_{i=1}^G\sum_{t=1}^L\tfrac{\1\{\h_t^{(i)}=\h,\y_t^{(i)}=a\}}{\pi_{\rm old}(a\mid \h)}A_i\right]}_{\tilde{A}(\h,a)}. 
\end{equation*}
Using mirror descent (MD), for each iteration, one solves 
\begin{equation*}
    \max_{\theta} \quad \hat{J}(\theta) - \tfrac{1}{\eta}\sum_{\h}D_{\rm KL}(\pi_\theta(\cdot\mid \h)\mid\mid\pi_{\rm old}(\cdot\mid \h))
\end{equation*}
which is equivalent to solving for each fixed $\h$, 
\begin{equation*}
     \max_{\pi(\cdot\mid \h)} \quad \sum_a\pi(a\mid \h)\tilde{A}(\h,a) - \tfrac{1}{\eta}\sum_a\pi(a\mid \h)\log\tfrac{\pi(a\mid \h)}{\pi_{\rm old}(a\mid \h)}. 
\end{equation*}
Introducing a Lagrangian multiplier $\lambda_{\h}$ for the probability simplex constraint $\sum_a\pi(a\mid \h)=1$, by first order condition
\begin{equation*}
    \tilde{A}(\h,a) - \tfrac{1}{\eta}\left(\log\tfrac{\pi(a\mid \h)}{\pi_{\rm old}(a\mid \h)} +1\right) + \lambda_{\h} = 0. 
\end{equation*}
Solving the above equation for $\pi_{\theta}$ yields 
\begin{equation*}
    \pi_{\theta}(a \mid \h) = \tfrac{\pi_{\theta_{\textnormal{old}}}(a \mid \h) \exp(\eta\tilde{A}(\h, a))}{\sum_{a' \in \mathcal{V}}\pi_{\theta_{\textnormal{old}}}(a' \mid \h) \exp(\eta\tilde{A}(\h, a'))}, 
\end{equation*}
and evaluate at the realized pairs $(a,\h)=(\y^{(i)}_t,\h^{(i)}_t)$ in training. Note that the above GRPO update can be analyzed by interpreting it as one natural policy gradient (NPG) step under softmax tabular parametrization \citep{Agarwal-2021-Theory}. 

\begin{algorithm}[h]
\caption{Iterative Group Relative Policy Optimization}
\label{alg:grpo}
\begin{algorithmic}[1]
\State \textbf{Input:} model parameters $\theta_{\mathrm{init}}$, reward models $r_{\varphi}$, prompts $\XCal$, and hyperparameters $\varepsilon$, $\beta$, $\mu$. 
\State \textbf{Initialization:} $\theta \gets \theta_{\mathrm{init}}$. 
\For{iteration $=1,\cdots,I$}
\State $\pi_{\textnormal{ref}} \gets \pi_\theta$. 
\For{$j=1,\cdots,M$}
\State Sample a batch $\XCal_j$ from $\XCal$. 
\State Update the old policy model $\pi_{\theta_{\textnormal{old}}} \gets \pi_\theta$. 
\State Sample $G$ outputs $\{o_i\}_{i=1}^G \sim \pi_{\theta_{\textnormal{old}}}(\cdot \mid \x)$ for each question $\x \in \XCal_j$. 
\State Compute rewards $\{r_i\}_{i=1}^G$ for each sampled output $o_i$ using the reward model $r_\varphi$. 
\State Compute $\hat{A}_{i,t}$ for the $t$-th token of $o_i$ via group-relative advantage estimation. 
\State Update the policy model $\pi_{\theta}$ using GRPO. 
\EndFor
\State Update $r_\varphi$ through continuous training using a replay mechanism. 
\EndFor
\State \textbf{Output:} $\pi_\theta$. 
\end{algorithmic}
\end{algorithm}

\paragraph{GRPO analysis.} Interpreting the GRPO update as a natural policy gradient (NPG) step has been widely adopted to study entropy dynamics throughout training \citep{Cui-2025-Entropy}. Here, we summarize the key components of GRPO in \Cref{alg:grpo}, which motivate our reduction to an NPG-style update for analyzing the effect of clipping in GRPO. We note that \Cref{alg:grpo} should be viewed as an abstraction of GRPO implementations used in practice \citep{Shao-2024-Deepseekmath}.

In the outer loop, a reference policy is fixed once per iteration, and the per-step objective may include a KL penalty that constrains the updated policy $\pi_\theta$ to remain close to $\pi_{\textnormal{ref}}$, thereby controlling the effective step size and preventing excessive policy drift. Recent ``zero-RL" setups (see \citealp{Yu-2025-DAPO}), which are also adopted in the empirical evaluation of \citet{Shao-2025-Spurious}, set the KL coefficient to zero, effectively removing the explicit KL term from the objective. As such, we likewise omit the KL term in our analysis. Under this regime, the outer loop does not affect the subsequent analysis.

In the middle loop, which is for GRPO training, the model samples each batch, which is update-style agnostic. The key difference between exact-GRPO- and NGP-style update happens in the inner loop (line 10). First, $\mu$ is a constant hyperparameter for the number of actual updates per macro batch, used to improve sample efficiency and better optimize the surrogate while clipping limits drift from $\pi_{\text{old}}$. Therefore, the statement \emph{for GRPO iteration $=1,\dots,\mu$} performs $\mu$ optimizer steps on the \emph{same} mini-batch to maximize the clipped GRPO surrogate. At each step, importance ratio $r^{(i)}_{t}=\frac{\pi_\theta(\y^{(i)}_{t} \mid \x)}{\pi_{\textnormal{old}}(\y^{(i)}_{t} \mid \x)}$ are recomputed and the loss $\frac{1}{G}\sum_{i,t}\min\{r^{(i)}_{t}\tilde A,\ \operatorname{clip}(r^{(i)}_{t},1-\varepsilon,1+\varepsilon)\tilde A\}$ is backpropagated. 

In GRPO, the $\mu$-step inner loop produces a chain of micro-updates whose importance ratios $r$ evolve across steps, making the \emph{expected} contribution of clipping analytically intractable unless one specifies the per-step \emph{clip-activation rate} (the expected fraction of tokens/micro-batches with $r\notin[1-\varepsilon,\,1+\varepsilon]$). This rate is model- and dataset-dependent and is only available empirically. Conditioning on the empirically measured activation rate, we collapse the $\mu$ clipped micro-steps into a single NPG-update with actual model-specific token-level expected clipping activation ratio. This surrogate preserves the first-order effect of clipping and enables tractable bounds for our theoretical results. Comparing to recent works that directly used NPG for GRPO analysis, our setup for clipping analysis is validly justified, facilitating the later theoretical derivation and without unjustified oversimplification.

\subsection{Missing Proofs in \S\ref{sec:prelim}}\label[appsubsec]{app:S2proof}
\paragraph{Proof of \cref{lem:fistexpan}.} Fixing $\h$, we rewrite \cref{eq:1stepupdate} as
\begin{equation*}
\pi_{\textnormal{new}}(a \mid \h) = \tfrac{\pi_{\textnormal{old}}(a \mid \h)\exp(\eta\tilde{A}(\h,a))}{Z_\h(\eta)},
\end{equation*}
where
\begin{equation*}
Z_\h(\eta) = \sum_{a \in \VCal} \pi_{\textnormal{old}}(a \mid \h)\exp(\eta\tilde{A}(\h,a)) 
= \EE_{a \sim \pi_{\textnormal{old}}(\cdot \mid \h)}[\exp(\eta\tilde{A}(\h,a))]. 
\end{equation*}
Taking the logarithm of both sides yields
\begin{equation}\label{eq:log_update_explicit}
\log(\pi_{\textnormal{new}}(a \mid \h)) = \log(\pi_{\textnormal{old}}(a \mid \h)) + \eta\tilde{A}(\h,a) - \log(Z_\h(\eta)).
\end{equation}
We define $\psi_\h(\eta)=\log(Z_\h(\eta))$. Then, we have $\psi_\h(0) = 0$, $\psi'_\h(0)=\mu(\h)$, and $\psi''_\h(0)=\sigma^2(\h)$. Fixing $(\h,a)$, we define $I_i(\h,a) := \sum_{t=1}^{|\y^{(i)}|} \1\{\h_t^{(i)}=\h, \y_t^{(i)}=a\} \in \{0,1\}$ and $N := \sum_{i=1}^G I_i(\h,a)$. Then, we have
\begin{equation*}
\tilde{A}(\h,a) = \tfrac{1}{\pi_{\textnormal{old}}(a\mid \h)G} \left(\sum_{i=1}^G I_i(\h,a)A_i\right).
\end{equation*}
By using $\sum_{i=1}^G A_i=0$ and $\sum_{i=1}^G A_i^2\le G$, we have
\begin{equation*}
\left|\sum_{i=1}^G I_i(\h,a)A_i\right| \leq \sqrt{N(G-N)} \leq \tfrac{G}{2}. 
\end{equation*}
This implies $|\tilde{A}(\h,a) |\leq \frac{1}{2\pi_{\min}}$ for all $a\in\VCal$. By using Taylor's theorem with Lagrange remainder, we have
\begin{equation*}
\psi_\h(\eta) = \mu(\h)\eta + \tfrac{1}{2}\sigma^2(\h)\eta^2 + \tfrac{1}{6}\psi_\h'''(\xi)\eta^3 \textnormal{ for some } \xi\in(0,\eta).
\end{equation*}
Since $\psi_\h'''(\eta)$ is the third central moment of $\tilde{A}(\h,a)$ under the exponentially tilted distribution and $\tilde{A}(\h,a)\in[-M,M]$, the sharp bound $|\EE[(X-\EE[X])^3]| \leq \frac{(b-a)^3}{6\sqrt{3}}$ for $X \in [a,b]$ yields
\begin{equation*}
|\psi_\h'''(\eta)| \leq \tfrac{1}{6\sqrt{3}(\pi_{\min})^3}.
\end{equation*}
Therefore, we conclude that 
\begin{equation*}
\left|\psi_\h(\eta)-\mu(\h)\eta-\tfrac{1}{2}\sigma^2(\h)\eta^2\right| \leq \tfrac{\eta^3}{36\sqrt{3}(\pi_{\min})^3}.
\end{equation*}
Combining this with \cref{eq:log_update_explicit} yields the claimed inequality with $C=\frac{1}{36\sqrt{3}(\pi_{\min})^3}$. The statements for $\log(r(\h,a))$ and the standardized case follow immediately. \hfill\qed

\paragraph{Proof of \cref{lem:adv-dist}.} We prove three statements one by one as follows. 
\begin{enumerate}[label=(\roman*)]
\item Let $\tau:\{0,1\}^G\to\{0,1\}^G$ be $\tau(\vrr_1,\dots,\vrr_G)=(1-\vrr_1,\dots,1-\vrr_G)$. If $(\vrr_1',\ldots,\vrr_G')=\tau(\vrr_1,\dots,\vrr_G)$, then we can also define $A_i'$ and $\bar{\vrr}'$ similar to $A_i$ and $\bar{\vrr}$. With the notation above, we have $\bar{\vrr}'=1-\bar{\vrr}$ and
\begin{equation*}
\vrr_j'-\bar{\vrr}'=(1-\vrr_j)-(1-\bar{\vrr})=-(\vrr_j-\bar{\vrr}).
\end{equation*}
Hence $\vSS_{\vrr'}=\vSS_\vrr$ and $A_i'=-A_i$. Since $(\vrr_1,\dots,\vrr_G)$ is i.i.d.\ $\mathrm{Bernoulli}(\tfrac12)$, its law is invariant under $\tau$. Thus, we know $(\vrr_1',\ldots,\vrr_G')\stackrel{d}=(\vrr_1,\dots,\vrr_G)$ and $A_i'\stackrel{d}= A_i$. Combining the above two facts, we obtain $A_i\stackrel{d}= -A_i$ and thus $\mathbb{E}[A_i^{2k-1}]=0$. 

\item Let $K:=\sum_{j=1}^G\vrr_j$, then $\bar{\vrr}=\frac{K}{G}$ and $\vSS_\vrr^2=\frac{K(G-K)}{G(G-1)}$. Thus, 
\begin{equation*}
|A_i| = \begin{cases} \sqrt{G-1}\sqrt{\frac{G-K}{GK}} & \textnormal{if $\vrr_i=1$}, \\ \sqrt{G-1}\sqrt{\frac{K}{G(G-K)}} & \text{if $\vrr_i=0$}. \end{cases}
\end{equation*}
Thus, it is easy to see $|A_i|\leq \sqrt{G}-\frac{1}{\sqrt{G}}$. 

\item Let $K:=\sum_{j=1}^G\vrr_j \sim \textnormal{Binomial}(G,\frac{1}{2})$ and $p:=\frac{K}{G}$. On $\{1 \leq K \leq G-1\}$, we have
\begin{equation*}
\vSS_\vrr=\sqrt{p(1-p)},\qquad A_i=\begin{cases} \sqrt{\tfrac{1-p}{p}},& \vrr_i=1,\\ -\sqrt{\tfrac{p}{1-p}},& \vrr_i=0. \end{cases}
\end{equation*}
Hence for $k \in \mathbb{N}^+$,
\begin{equation*}
\mathbb{E}[|A_i|^k\mid K] = p\left(\tfrac{1-p}{p}\right)^{k/2} + (1-p)\left(\tfrac{p}{1-p}\right)^{k/2} = \tfrac{x^{k/2}+x^{1-k/2}}{1+x}, 
\end{equation*}
with $x:=\frac{p}{1-p}>0$. Define $h_k(x):=x^{k/2}+x^{1-k/2}-x-1$. Then
\begin{equation*}
h_k''(x) = \tfrac{k}{2}\left(\tfrac{k}{2}-1\right)\,x^{k/2-2} + \tfrac{k}{2}\left(\tfrac{k}{2}-1\right)\,x^{-k/2-1}\ \ge\ 0,\quad \forall\, x>0\,\text{ if $k\geq 2$}, 
\end{equation*}
and $h_k(1)=h'_k(1)=0$. By convexity, $h_k(x) \geq 0$ for all $x>0$, hence $\EE[|A_i|^k \mid K] \geq 1$ whenever $1 \leq K \leq G-1$. Taking the expectation and using the fact that $A_i=0$ on $\{K\in\{0,G\}\}$ yields
\begin{equation*}
\EE[|A_i|^k] = \sum_{K=1}^{G-1}\binom{G}{K} 2^{-G} \EE[|A_i|^k\mid K] \geq \sum_{k=1}^{G-1} \binom{G}{k}2^{-G} = 1-2^{1-G}, \quad \text{if $k\geq 2$}. 
\end{equation*}
Finally, it is trivial to write down $\mathbb{E}[|A_i|]$ with the above information. 
\end{enumerate}
This completes the proof. \hfill\qed

\paragraph{Failure of \cref{e5} under random reward.} We have 
\begin{eqnarray*}
& & \Cov_{\y \sim \pi_{\textnormal{old}}(\cdot \mid \x)}\left(\log(\pi_{\textnormal{old}}(\y \mid \x)),A(\x,\y)\right) \\ 
& = & \EE_{\y \sim \pi_{\textnormal{old}}(\cdot \mid \x)}\left[\log(\pi_{\textnormal{old}}(\y \mid \x)) A(\x,\y)\right] - \EE_{\y \sim \pi_{\textnormal{old}}(\cdot \mid \x)}\left[\log(\pi_\mathrm{old}(\y \mid \x))\right] \underbrace{\EE_{\y \sim \pi_{\textnormal{old}}(\cdot \mid \x)}[A(\x,\y)]}_{=0} \\
& = & \EE_{\y \sim \pi_{\textnormal{old}}(\cdot \mid \x)}\left[\log(\pi_\mathrm{old}(\y \mid \x))\right] \underbrace{\EE_{\y \sim \pi_{\textnormal{old}}(\cdot \mid \x)}[A(\x,\y)]}_{=0} \ = \ 0.
\end{eqnarray*}
In other words, the co-variance between $A(\x,\y)$ and $\pi_\mathrm{old}(\y\mid \x)$ is uninformative under random reward since these two terms are independent from each other in this specific setting. Thus, a more accurate estimation of $\HCal(\pi_{\textnormal{new}}) - \HCal(\pi_{\textnormal{old}})$ beyond \cref{e5} is desirable.

\subsection{Missing Proofs in \S\ref{s3}}\label{app:S3proof}
\paragraph{Proof of \cref{thm:bias_subgauss}.} By definition, we have $C_{\textnormal{tot}}^{+}=\sum_{t=1}^L D_t^+A$. Thus, we have
\begin{equation}\label{eq:A_upper}
\EE[|C_{\textnormal{tot}}^{+}|^2] = \sum_{s=1}^L \sum_{t=1}^L \EE[D_s^+D_t^+A^2] = \sum_{t=1}^L \EE[(D_t^+)^2A^2] + \sum_{s \neq t}\EE[D_s^+D_t^+A^2].
\end{equation}
Recall in the proof of \cref{lem:fistexpan}, we have shown that $|\tilde{A}(\h,a)| \leq \frac{1}{2\pi_{\min}}$. We also have 
\begin{equation*}
r(\h,a) = \tfrac{e^{\eta \tilde{A}(\h,a)}}{Z_\h(\eta)},\quad Z_\h(\eta) = \sum_a \pi_{\textnormal{old}}(a \mid \h)e^{\eta \tilde{A}(\h, a)}.
\end{equation*}
Since $\tilde{A}(\h,a) \geq -\frac{1}{2\pi_{\min}}$, we have $Z_\h(\eta) \geq e^{-\eta/(2\pi_{\min})}$ and $e^{\eta \tilde{A}(\h,a)} \leq e^{\eta/(2\pi_{\min})}$. Thus, we have $r(\h,a) \leq e^{\eta/\pi_{\min}}=:R_\eta^{\max}$. This implies $r_t \leq R_\eta^{\max}$ for all $t$. In what follows, we bound the diagonal and off-diagonal terms in the right-hand side of \cref{eq:A_upper}. 

\textbf{Diagonal term.} On $\{I_t^+=1\}$, we have $r_t > 1+\varepsilon \geq 1$ and $\bar{r}_t=1+\varepsilon$. Thus, we have
\begin{equation*}
|D_t^+| = |(1+\varepsilon-r_t)I_t^+| = (r_t-1-\varepsilon)I_t^+ \le (r_t-1)I_t^+. 
\end{equation*}
By using the fact that $(x-1)^2 \leq 2x\phi(x)$ for all $x\ge 1$, we have
\begin{equation*}
(D_t^+)^2 \leq (r_t-1)^2I_t^+ \leq 2r_t\phi(r_t)I_t^+. 
\end{equation*}
Since $|A| \leq M$ (c.f. \cref{lem:adv-dist}), we have
\begin{equation*}
\EE[(D_t^+)^2A^2] \leq M^2\EE[(D_t^+)^2] \le 2M^2\EE[r_t\phi(r_t)I_t^+].
\end{equation*}
Since $\phi$ is strictly increasing and $r_t \leq R_\eta^{\max}$, we have $r_t \phi(r_t)I_t^+ \leq R_\eta^{\max}\phi(R_\eta^{\max})I_t^+$. Putting these pieces together yields 
\begin{equation}\label{eq:diag_upper}
\sum_{t=1}^L \EE[(D_t^+)^2A^2] \leq 2LM^2 R_\eta^{\max}\phi(R_\eta^{\max})\EE[I_t^+] = 2LM^2 R_\eta^{\max}\phi(R_\eta^{\max})p_+.
\end{equation}

\textbf{Off-diagonal term.} We define $Z_t:=D_t^+A$ and have $|Z_t| \leq |D_t^+||A| \leq M|D_t^+|$. Then, we define $\Delta_\eta^+:=(R_\eta^{\max}-1-\varepsilon)_+$ and have 
\begin{equation*}
|D_t^+|=(r_t-1-\varepsilon)I_t^+ \leq (R_\eta^{\max}-1-\varepsilon)_+I_t^+ = \Delta_\eta^+ I_t^+.
\end{equation*}
Putting these pieces together yields $|Z_t| \leq M\Delta_\eta^+ I_t$ and
\begin{equation}\label{eq:off_upper_1}
\sum_{s\ne t}\EE[|Z_sZ_t|] \leq M^2(\Delta_\eta^+)^2 \sum_{s \neq t} \EE[I_s^+ I_t^+] \leq M^2(\Delta_\eta^+)^2 \EE\left[\left(\sum_{t=1}^L I_t\right)^2\right].
\end{equation}
Since $\sum_{t=1}^L I_t^+ \leq L$, we have $(\sum_t I_t^+)^2 \leq L(\sum_t I_t^+)$. Thus, we have
\begin{equation*}
\EE\left[\left(\sum_{t=1}^L I_t^+\right)^2\right] \leq L\EE\left[\sum_{t=1}^L I_t^+\right] = L^2 p. 
\end{equation*}
On $\{I_t^+=1\}$, we have $r_t \geq 1+\varepsilon$ and $\phi(r_t) \geq \phi(1+\varepsilon)$. This implies $I_t^+ \leq \frac{\phi(r_t)}{\phi(1+\varepsilon)}$ and 
\begin{equation*}
\sum_{t=1}^L I_t^+ \leq \tfrac{1}{\phi(1+\varepsilon)}\left(\sum_{t=1}^L \phi(r_t)I_t^+\right) \leq \tfrac{L\phi(R_\eta^{\max})}{\phi(1+\varepsilon)},
\end{equation*}
Putting these pieces together yields
\begin{equation}\label{eq:sumI_upper}
\EE\left[\left(\sum_{t=1}^L I_t^+\right)^2\right] \leq L^2\min\left\{p,\left(\tfrac{\phi(R_\eta^{\max})}{\phi(1+\varepsilon)}\right)^2\right\}.
\end{equation}
Plugging \cref{eq:sumI_upper} into \cref{eq:off_upper_1} gives
\begin{equation}\label{eq:off_upper_final}
\sum_{s \neq t}\EE[|Z_sZ_t|] \leq M^2(\Delta_\eta^+)^2 L^2 \min\left\{p,\left(\tfrac{\phi(R_\eta^{\max})}{\phi(1+\varepsilon)}\right)^2\right\}.
\end{equation}
\textbf{Conclusion.} Using $\EE[|C_{\textnormal{tot}}^{+}|] \leq \sqrt{\EE[|C_{\textnormal{tot}}^{+}|^2]}$ and $\sqrt{x+y}\le \sqrt x+\sqrt y$ for $x,y\ge 0$ together with \cref{eq:A_upper}, \cref{eq:diag_upper}, and \cref{eq:off_upper_final} yields
\begin{equation}\label{eq:bias_subgauss}
\EE[|C_{\textnormal{tot}}^{+}|] \leq M\sqrt{2p_+L R_\eta^{\max} \phi(R_\eta^{\max})} + ML\Delta_\eta^+\min\left\{\sqrt{p_+}, \tfrac{\phi(R_\eta^{\max})}{\phi(1+\varepsilon)}\right\}, 
\end{equation}
Let $u=\eta/\pi_{\min}\le 1$, so $R_\eta^{\max} = e^u \leq e$. For $u \in [0,1]$, we have $e^u-1 \leq (e-1)u$ and $\phi(e^u) \leq u^2$. Thus, we have $R_\eta^{\max} \phi(R_\eta^{\max}) \leq e u^2$ and $\Delta_\eta^+ \leq e^u-1 \leq (e-1)u$. This together with \cref{eq:bias_subgauss} yields $\EE[|C_\textnormal{tot}^+|] \leq c_1\eta\sqrt{L} + \min\{c_2\eta\sqrt{p}L, c_3\eta^3 L\}$ where $c_1 = M\sqrt{2e}\pi_{\min}^{-1}$, $c_2 = M(e-1)\pi_{\min}^{-1}$, and $c_3 = M(e-1)\phi(1+\epsilon)^{-1}\pi_{\min}^{-3}$. This completes the proof. \hfill\qed

\paragraph{Proof of \cref{thm:raw_vs_bias}.} We recall that $|\tilde{A}| \leq \frac{1}{2\pi_{\min}}=:M$. By \cref{lem:adv-dist}, $A_i$ is symmetric, and thus $\tilde{A}(\h,\cdot)$ is also symmetric. Thus, we have $r_t(\eta) \stackrel{d}= r_t(-\eta)$ and 
\begin{equation*}
\EE\left[|A|\sum_{t=1}^L r_t(\eta)\right] = \EE\left[|A|\sum_{t=1}^L r_t(-\eta)\right]. 
\end{equation*}
This implies 
\begin{equation*}
\EE[|N_{\textnormal{raw}}|] = \EE\left[|A|\left(\sum_{t=1}^L \tfrac{r_t(\eta)+r_t(-\eta)}{2}\right)\right]. 
\end{equation*}
We write $r_\eta:=r_\eta(\h,a)=\frac{e^{\eta \tilde{A}(\h,a)}}{Z_\h(\eta)}$ where $Z_\h(\eta)=\EE_{a \sim \pi_{\textnormal{old}}(\cdot \mid \h)}[\exp(\eta\tilde{A}(\h,a))]$. Then, we have
\begin{equation*}
\tfrac{r_\eta+r_{-\eta}}{2} \geq \sqrt{r_\eta r_{-\eta}} = \tfrac{1}{\sqrt{Z_\h(\eta)Z_\h(-\eta)}}. 
\end{equation*}
By using the convexity of $x \mapsto e^{\eta x}$ on $[-\widetilde M,\widetilde M]$, we have
\begin{equation*}
e^{\eta x} \leq \tfrac{M+x}{2M}e^{\eta M} + \tfrac{M-x}{2M}e^{-\eta M} = \cosh(\eta M)+\tfrac{x}{M}\sinh(\eta M), \quad \textnormal{for all } x \in[-M, M].  
\end{equation*}
Averaging under $\pi_{\textnormal{old}}(\cdot\mid\h)$ yields
\begin{equation*}
Z_\h(\eta) \leq \cosh(\eta M)+\tfrac{\mu(\h)}{M}\sinh(\eta M), \quad Z_\h(-\eta) \leq \cosh(\eta M)-\tfrac{\mu(\h)}{M}\sinh(\eta M),
\end{equation*}
where $\mu(\h):=\EE_{a \sim \pi_{\textnormal{old}}(\cdot \mid \h)}[\tilde{A}(\h,a))]$. Then, we have
\begin{equation*}
Z_\h(\eta)Z_\h(-\eta) \leq \cosh^2(\eta M) - \left(\tfrac{\mu(\h)}{M}\right)^2\sinh^2(\eta M) \leq \cosh^2(\eta M).
\end{equation*}
Putting these pieces together yields 
\begin{equation*}
\tfrac{r_\eta+r_{-\eta}}{2} \geq \tfrac{1}{\cosh(\eta M)} = \tfrac{1}{\cosh(\eta/(2\pi_{\min}))}.
\end{equation*}
Applying this to $(\h_t,\y_t)$, using $\cosh(x) \leq \exp(x^2/2)$, and summing over $t$ yields 
\begin{equation*}
\sum_{t=1}^L \tfrac{r_t(\eta)+r_t(-\eta)}{2} \geq \tfrac{L}{\cosh(\eta/(2\pi_{\min}))} \geq Le^{-\eta^2/(8\pi_{\min}^2)} = Le^{-C\eta^2}. 
\end{equation*}
Taking the expectations yields \cref{eq:nraw_lb_exp}. This together with \cref{thm:bias_subgauss} yields the desired bound. \hfill\qed 

\subsection{Missing Proofs in \S\ref{s4}} \label{app:S4proof}
We first summarize the setup and notations used in this section. We only consider $L=1$ (bandit case) for illustration. Denote $\pi_\textnormal{new}^u$ as the new policy obtained from \cref{eq:1stepupdate} (unclipped case) and the new policy obtained from 
\begin{equation*}
\pi_\textnormal{new}^c := \max_{\pi \in \Delta_{|\VCal|}}\;\left\{F(\pi) := \hat{J}(\pi) - \tfrac{1}{\eta}D_{\rm KL}(\pi\|\pi_\textnormal{old})\right\}, 
\end{equation*}
where we only consider upper clipping in the surrogate function as follows,  
\begin{equation*}
\hat{J}(\pi) := \tfrac{1}{G}\sum_{i=1}^G \min \{r(\y^{(i)})A_i,\min\{r(\y^{(i)},1+\varepsilon\}A_i\}. 
\end{equation*}
We define $r_u(a) := \frac{\pi_\textnormal{new}^u(a)}{\pi_\textnormal{old}(a)}$, $r_c(a) := \frac{\pi_\textnormal{new}^c(a)}{\pi_\textnormal{old}(a)}$, and 
\begin{equation*}
S_+(a):=\sum_{i=1}^GA_i\1\{\y^{(i)}=a,A_i>0\}, \quad S_-(a):=\sum_{i=1}^GA_i\1\{\y^{(i)}=a,A_i<0\}. 
\end{equation*}

\begin{lemma}\label{lem:entropy-taylor-local-slim}
Let $\mathcal{V}$ be the vocabulary and $\tilde{A}$ be defined in \cref{eq:tokenA} with $L=1$. Then for $\eta>0$,
\begin{equation}\label{eq:entropy-second-plus-remainder}
    \EE[\HCal(\pi_\eta)-\HCal(\pi_{\rm old})] =
    -\tfrac{\eta^2}{2}\EE[\Var_{\pi_{\rm old}}(\tilde A)+\Cov_{\pi_{\rm old}}(\tilde A^2,\log\pi_{\rm old})]
    +\EE[R(\eta)],
\end{equation}
Whenever there exists $\hat\pi\in(0,1]$ such that $|\tilde A(a)|\le \frac{1}{2\hat\pi}$ for all $a\in\VCal$, the remainder satisfies the explicit bound
\begin{equation}\label{eq:entropy-remainder-bound-local}
    |\EE[R(\eta)]| \le 
    \frac{e^{\eta/(2\hat\pi)}}{24}
    \left(192+176\log(\tfrac{1}{\pi_{\min}})+\tfrac{176\eta}{\hat\pi}\right) \eta^4 \EE\!\left[\EE_{\pi_{\rm old}}[\tilde A^4]\right]. 
\end{equation}
\end{lemma}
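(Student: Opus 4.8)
The plan is to regard the $L=1$ update \cref{eq:1stepupdate} as an exponential tilt of $\pi_{\rm old}$, Taylor‑expand the entropy to fourth order in $\eta$, and then take the expectation over the rollouts and the random rewards. Write $\pi_\eta(a)=\pi_{\rm old}(a)e^{\eta\tilde A(a)}/Z(\eta)$ with $Z(\eta)=\EE_{\pi_{\rm old}}[e^{\eta\tilde A}]$ and $\psi(\eta)=\log Z(\eta)$, the cumulant generating function of $\tilde A$ under $\pi_{\rm old}$. From $\log\pi_\eta(a)=\log\pi_{\rm old}(a)+\eta\tilde A(a)-\psi(\eta)$ one gets the closed form $h(\eta):=\HCal(\pi_\eta)=-\ell(\eta)-\eta\psi'(\eta)+\psi(\eta)$, where $\ell(\eta):=\EE_{\pi_\eta}[\log\pi_{\rm old}]$. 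Since $\VCal$ is finite, $Z(\eta)>0$, and the randomness (rollouts, $\mathrm{Bernoulli}(\tfrac12)$ rewards) lives on a finite probability space, $h$ is real‑analytic and all expectation/derivative interchanges are trivial.

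Using the identity $\tfrac{d}{d\eta}\EE_{\pi_\eta}[f]=\Cov_{\pi_\eta}(f,\tilde A)$ repeatedly, every derivative $h^{(k)}$ is a polynomial in the tilted moments $\EE_{\pi_\eta}[\tilde A^j]$ and $\EE_{\pi_\eta}[\log\pi_{\rm old}\,\tilde A^j]$. Because advantages sum to zero within a group, $\EE_{\pi_{\rm old}}[\tilde A]=0$ for every realization, which collapses the $\eta=0$ values to $h'(0)=-\Cov_{\pi_{\rm old}}(\log\pi_{\rm old},\tilde A)$ and $h''(0)=-\Var_{\pi_{\rm old}}(\tilde A)-\Cov_{\pi_{\rm old}}(\log\pi_{\rm old},\tilde A^2)$, already producing the stated $-\tfrac{\eta^2}{2}$ coefficient. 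I then use the fourth‑order Taylor formula $h(\eta)=\sum_{k=0}^{3}\tfrac{h^{(k)}(0)}{k!}\eta^k+R_4(\eta)$ with $R_4(\eta)=\tfrac16\int_0^\eta(\eta-s)^3h''''(s)\,ds$, so that $R(\eta)=h(\eta)-h(0)-\tfrac{h''(0)}{2}\eta^2=h'(0)\eta+\tfrac{h'''(0)}{6}\eta^3+R_4(\eta)$ and $|R_4(\eta)|\le\tfrac{\eta^4}{24}\sup_{s\in[0,\eta]}|h''''(s)|$.

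Next I show the $\eta$‑ and $\eta^3$‑terms drop in expectation. The joint law of $(A_1,\dots,A_G)$ is invariant under the simultaneous sign flip $A_i\mapsto-A_i$ (the mechanism behind \cref{lem:adv-dist}(i)), so any polynomial in the $A_i$ of odd total degree has zero mean. After taking the inner expectation $\EE_{\pi_{\rm old}}[\cdot]$, $h'(0)=-\tfrac1G\sum_i\log\pi_{\rm old}(\y^{(i)})A_i$ has total degree $1$, and $h'''(0)$ is a finite combination of the moments $\EE_{\pi_{\rm old}}[\log\pi_{\rm old}\,\tilde A^j]$ and $\EE_{\pi_{\rm old}}[\tilde A^j]$ in which every term has total degree $3$ in the $A_i$ (e.g.\ its $\ell$‑part is $-\bigl(\EE_{\pi_{\rm old}}[\log\pi_{\rm old}\,\tilde A^3]-3\,\EE_{\pi_{\rm old}}[\log\pi_{\rm old}\,\tilde A]\,\EE_{\pi_{\rm old}}[\tilde A^2]-\EE_{\pi_{\rm old}}[\log\pi_{\rm old}]\,\EE_{\pi_{\rm old}}[\tilde A^3]\bigr)$ and its $\psi$‑part is $-2\,\EE_{\pi_{\rm old}}[\tilde A^3]$). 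Hence $\EE[h'(0)]=\EE[h'''(0)]=0$, so $\EE[R(\eta)]=\EE[R_4(\eta)]$ and $|\EE[R(\eta)]|\le\tfrac{\eta^4}{24}\,\EE\!\left[\sup_{s\in[0,\eta]}|h''''(s)|\right]$.

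It remains to bound $h''''(s)=-\ell''''(s)-3\psi''''(s)-s\psi'''''(s)$ uniformly on $[0,\eta]$. Under $|\tilde A(a)|\le M:=\tfrac1{2\hat\pi}$ and $\EE_{\pi_{\rm old}}[\tilde A]=0$ one has $1\le Z(s)\le e^{sM}$ (Jensen, and $e^{s\tilde A}\le e^{sM}$), hence $\tfrac{d\pi_s}{d\pi_{\rm old}}=e^{s\tilde A}/Z(s)\le e^{\eta M}=e^{\eta/(2\hat\pi)}$ for all $s\in[0,\eta]$, so every $\pi_s$‑moment is at most $e^{\eta/(2\hat\pi)}$ times its $\pi_{\rm old}$‑counterpart. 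The fourth and fifth cumulants $\psi''''(s),\psi'''''(s)$ are bounded by central moments of $\tilde A$ under $\pi_s$; using $|\tilde A-\EE_{\pi_s}\tilde A|\le 2M$ to reduce the fifth central moment to $2M$ times the fourth, and power‑mean inequalities ($\mu_2^2\le\mu_4$, $|\mu_3|\le\mu_4^{3/4}$, $|\EE_{\pi_s}\tilde A|\le\mu_4^{1/4}$, etc.) to reduce all central moments to the fourth, one gets $|\psi''''(s)|\le c\,\EE_{\pi_s}[\tilde A^4]$ and $|s\psi'''''(s)|\le c'\tfrac{\eta}{\hat\pi}\,\EE_{\pi_s}[\tilde A^4]$ for absolute constants $c,c'$; the term $\ell''''(s)$ is linear in $f=\log\pi_{\rm old}$ and expands into finitely many joint central moments of $(f,\tilde A)$, each carrying exactly one factor bounded by $|f|\le\log(1/\pi_{\min})$ with the $\tilde A$‑factors handled as above, giving $|\ell''''(s)|\le c''\log(1/\pi_{\min})\,\EE_{\pi_s}[\tilde A^4]$. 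Passing every moment back to $\EE_{\pi_{\rm old}}[\tilde A^4]$ via the change of measure and summing yields $|h''''(s)|\le e^{\eta/(2\hat\pi)}\bigl(192+176\log(1/\pi_{\min})+\tfrac{176\eta}{\hat\pi}\bigr)\EE_{\pi_{\rm old}}[\tilde A^4]$ uniformly in $s$, and taking expectations gives \cref{eq:entropy-remainder-bound-local}. The conceptual steps — the exponential‑family expansion, the $\EE_{\pi_{\rm old}}[\tilde A]=0$ collapse of the low‑order coefficients, and the sign‑flip symmetry killing the odd orders — are clean; the laborious part, and the main obstacle, is this last step: organizing the fourth derivative of a tilted expectation into its moment decomposition and tracking the numerical constants while keeping the bound genuinely proportional to $\EE_{\pi_{\rm old}}[\tilde A^4]$ rather than to $M^4$, which is exactly what the change‑of‑measure and power‑mean reductions buy.
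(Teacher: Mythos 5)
Your proposal is correct and follows essentially the same route as the paper's proof: the exponential-tilt closed form $\HCal(\pi_\eta)=-\EE_{\pi_\eta}[\log\pi_{\rm old}]-\eta\psi'(\eta)+\psi(\eta)$, the tilted-covariance differentiation identity giving the $-\tfrac{\eta^2}{2}(\Var_{\pi_{\rm old}}(\tilde A)+\Cov_{\pi_{\rm old}}(\tilde A^2,\log\pi_{\rm old}))$ coefficient, the sign-flip symmetry of the random-reward advantages to annihilate the $\eta$ and $\eta^3$ terms in expectation (the paper packages this as evenness of $\eta\mapsto\EE[\HCal(\pi_\eta)-\HCal(\pi_{\rm old})]$ plus a Lagrange remainder, you do it termwise plus an integral remainder — equivalent), and a uniform bound on the fourth derivative proportional to $\EE_{\pi_{\rm old}}[\tilde A^4]$ via $\|\log\pi_{\rm old}\|_\infty\le\log(1/\pi_{\min})$, $|X_s|\le 1/\hat\pi$, moment inequalities, and the change of measure $e^{s\tilde A}/Z(s)\le e^{\eta/(2\hat\pi)}$. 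The only thin spot is that you assert rather than derive the final constants $(192,176,176)$; the paper obtains them from an explicit centered-moment identity for $\HCal^{(4)}$ (bounded by $(12+11\log(1/\pi_{\min})+11s/\hat\pi)\EE_{\pi_s}[X_s^4]$, then a factor $16$ from $\EE_{\pi_s}[X_s^4]\le 16\,\EE_{\pi_s}[\tilde A^4]$), which your cumulant decomposition $-\ell''''-3\psi''''-s\psi'''''$ would reproduce with the same bookkeeping.
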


\begin{proof}
Write $L(a):=\log\pi_{\rm old}(a)$ and $\psi(\eta):=\log Z(\eta)$.
Then $\log\pi_\eta(a)=L(a)+t\tilde A(a)-\psi(\eta)$ and hence
\begin{equation}\label{eq:H-basic-form}
    \HCal(\pi_\eta) = -\EE_{\pi_\eta}[\log\pi_\eta] = -\EE_{\pi_\eta}[L] - \eta\EE_{\pi_\eta}[\tilde A] + \psi(\eta).
\end{equation}
Differentiate $\pi_\eta(a)=\pi_{\rm old}(a)e^{\eta\tilde A(a)}/Z(\eta)$: 
\begin{equation*}
    \frac{\mathrm d}{\mathrm d\eta}\pi_\eta(a) = \pi_\eta(a)(\tilde A(a)-\psi'(\eta)), \quad \psi'(\eta)=\EE_{\pi_\eta}[\tilde A]. 
\end{equation*}
Thus, for any $g:\VCal\to\R$,
\begin{equation}\label{eq:tilt-derivative-proof}
    \frac{\mathrm d}{\mathrm d\eta}\EE_{\pi_\eta}[g]
    =\sum_a g(a)\frac{\mathrm d}{\mathrm d\eta}\pi_\eta(a)
    =\Cov_{\pi_\eta}(g,\tilde A).
\end{equation}
From \cref{eq:H-basic-form} and $\psi'(\eta)=\EE_{\pi_\eta}[\tilde A]$, we can rewrite
\begin{equation*}
    \HCal(\pi_\eta)=-\EE_{\pi_\eta}[L]-\eta\psi'(\eta)+\psi(\eta). 
\end{equation*}
Differentiate once and use \cref{eq:tilt-derivative-proof} and $\psi''(\eta)=\Var_{\pi_\eta}(\tilde A)$: 
\begin{equation*}
    \HCal'(\eta)=-\Cov_{\pi_\eta}(L,\tilde A)-\eta\Var_{\pi_\eta}(\tilde A). 
\end{equation*}
Differentiate again and evaluate at $\eta=0$:
\begin{equation*}
    \HCal''(0) = -\left.\left(\frac{\mathrm d}{\mathrm d\eta}\Cov_{\pi_\eta}(L,\tilde A)\right)\right|_{\eta=0} -\Var_{\pi_{\rm old}}(\tilde A).
\end{equation*}
Now expand $\Cov_{\pi_\eta}(L,\tilde A)=\EE_{\pi_\eta}[L\tilde A]-\EE_{\pi_\eta}[L]\EE_{\pi_\eta}[\tilde A]$
and differentiate each expectation using \cref{eq:tilt-derivative-proof}. A direct calculation gives
\[
\frac{\mathrm d}{\mathrm d\eta}\Cov_{\pi_\eta}(L,\tilde A)
=
\Cov_{\pi_\eta}(L,\tilde A^2)-2\,\EE_{\pi_\eta}[\tilde A]\;\Cov_{\pi_\eta}(L,\tilde A).
\]
At $\eta=0$, since $\EE_{\pi_{\rm old}}[\tilde A]=0$, we have 
\begin{equation}
    \left.\left(\frac{\mathrm d}{\mathrm d\eta}\Cov_{\pi_\eta}(L,\tilde A)\right)\right|_{\eta=0} = \Cov_{\pi_{\rm old}}(L,\tilde A^2). 
\end{equation}
Therefore
\begin{equation}\label{eq:Hsecond0-proof}
    \HCal''(0) = -\Var_{\pi_{\rm old}}(\tilde A) - \Cov_{\pi_{\rm old}}(\tilde A^2,L), 
\end{equation}
which, with expectation, is the coefficient of $\eta^2$ in \cref{eq:entropy-second-plus-remainder}. 

Define $f(\eta):=\EE[\HCal(\pi_\eta)-\HCal(\pi_{\rm old})]$ where the outer $\EE[\cdot]$ averages over the randomness of $\tilde A$. By symmetry, $\pi_{-\eta}(\cdot;\tilde A)=\pi_\eta(\cdot;-\tilde A)$, hence $\HCal(\pi_{-\eta};\tilde A)=\HCal(\pi_\eta;-\tilde A)$ and $f(\eta)=f(-\eta)$. Therefore $f'(0)=f^{(3)}(0)=0$, and Taylor's theorem yields that for any $\eta>0$ there exists $\xi\in(0,\eta)$ such that 
\begin{equation*}
    f(\eta)=\frac{f''(0)}{2}\eta^2+\frac{f^{(4)}(\xi)}{24}\eta^4. 
\end{equation*}
Combining with \cref{eq:Hsecond0-proof} gives \cref{eq:entropy-second-plus-remainder} with
$\EE[R(\eta)]=\frac{f^{(4)}(\xi)}{24}\eta^4$ and the bound
\begin{equation}\label{eq:R-as-sup}
\big|\EE[R(\eta)]\big|\le \frac{\eta^4}{24}\sup_{t\in[0,\eta]}|f^{(4)}(\eta)|
\le \frac{\eta^4}{24}\sup_{t\in[0,\eta]}\EE\big[|\HCal^{(4)}(\eta)|\big].
\end{equation}
Thus, it remains to bound $|\HCal^{(4)}(\eta)|$ by $\EE_{\pi_\eta}[\tilde A^4]$. Let $m_\eta:=\EE_{\pi_\eta}[\tilde A]$, $X_\eta:=\tilde A-m_\eta$, and $L_\eta:=L-\EE_{\pi_\eta}[L]$. A direct four-times differentiation of \cref{eq:H-basic-form} using \cref{eq:tilt-derivative-proof} yields the explicit identity
\begin{align}\label{eq:Hfourth-identity-proof}
    \HCal^{(4)}(\eta) &=
    -\EE_{\pi_\eta}[L_\eta X_\eta^4]
    +4\EE_{\pi_\eta}[X_\eta^3]\EE_{\pi_\eta}[L_\eta X_\eta]
    +6\EE_{\pi_\eta}[X_\eta^2]\EE_{\pi_\eta}[L_\eta X_\eta^2]\nonumber\\
    &\quad
    -3\EE_{\pi_\eta}[X_\eta^4] + 9\big(\EE_{\pi_\eta}[X_\eta^2]\big)^2
    +\eta\left(10\,\EE_{\pi_\eta}[X_\eta^2]\EE_{\pi_\eta}[X_\eta^3]-\EE_{\pi_\eta}[X_\eta^5]\right). 
\end{align}
Now $\|L_\eta\|_\infty\le \log(1/\pi_{\min})$ because $L(a)\in[\log\pi_{\min},0]$. Also, by Cauchy-Schwarz and H\"older, 
\begin{equation*}
    (\EE[X_\eta^2])^2\le \EE[X_\eta^4], \quad \EE[|X_\eta|^3]\EE[|X_\eta|]\le \EE[X_\eta^4]. 
\end{equation*}
Therefore the first line of \cref{eq:Hfourth-identity-proof} is bounded by
$11\log(1/\pi_{\min})\EE_{\pi_\eta}[X_\eta^4]$, and the first two terms of the second line by $12\EE_{\pi_\eta}[X_\eta^4]$. For the last term,  $|m_\eta|\le \EE[|\tilde A|]\le 1/(2\hat\pi)$ and hence $|X_\eta|\le |\tilde A|+|m_\eta|\le 1/\hat\pi$ pointwise. Thus 
\begin{equation*}
    \EE[|X_\eta|^3]\le \frac{1}{\hat\pi}\EE[X_\eta^2]\le \frac{1}{\hat\pi}\sqrt{\EE[X_\eta^4]}, \quad \EE[|X_\eta|^5]\le \frac{1}{\hat\pi}\EE[X_\eta^4], 
\end{equation*}
which gives 
\begin{equation*}
    \left|10\EE[X_\eta^2]\EE[X_\eta^3]-\EE[X_\eta^5]\right| \le 10\EE[X_\eta^2]\EE[|X_\eta|^3]+\EE[|X_\eta|^5] \le \frac{11}{\hat\pi}\EE[X_\eta^4]. 
\end{equation*}
Plugging these bounds into \cref{eq:Hfourth-identity-proof} yields
\begin{equation}\label{eq:H4-bound-X4-proof}
    |\HCal^{(4)}(\eta)|
    \le \left(12+11\log(\tfrac{1}{\pi_{\min}})+\tfrac{11t}{\hat\pi}\right)\EE_{\pi_\eta}[X_\eta^4].
\end{equation}
Finally, $(u-v)^4\le 8(u^4+v^4)$ and Jensen's inequality imply
$\EE_{\pi_\eta}[X_\eta^4]\le 16\EE_{\pi_\eta}[\tilde A^4]$, hence
\begin{equation}\label{eq:H4-bound-A4-proof}
    |\HCal^{(4)}(\eta)|
    \le \left(192+176\log(\tfrac{1}{\pi_{\min}})+\tfrac{176t}{\hat\pi}\right)\EE_{\pi_\eta}[\tilde A^4]. 
\end{equation}
By Jensen's inequality and the fact that $\EE_{\pi_{\rm old}}[\tilde A]=0$, we have $Z(\eta)\geq 1$. Hence for $\eta\ge 0$, 
\begin{equation*}
    \EE_{\pi_\eta}[\tilde A^4] = \frac{1}{Z(\eta)}\EE_{\pi_{\rm old}}[e^{t\tilde A}\tilde A^4] \le \EE_{\pi_{\rm old}}[e^{t\tilde A}\tilde A^4] \le e^{t/(2\hat\pi)}\EE_{\pi_{\rm old}}[\tilde A^4],
\end{equation*}
Combining with \cref{eq:H4-bound-A4-proof} gives, for $t\in[0,\eta]$, 
\begin{equation*}
    \EE[|\HCal^{(4)}(\eta)|] \le e^{t/(2\hat\pi)} \left(192+176\log(\tfrac{1}{\pi_{\min}})+\tfrac{176t}{\hat\pi}\right)\EE\!\left[\EE_{\pi_{\rm old}}[\tilde A^4]\right].
\end{equation*}
Taking the supremum over $t\in[0,\eta]$ yields
\begin{equation*}
    \sup_{t\in[0,\eta]}\EE[|\HCal^{(4)}(\eta)|] \le e^{\eta/(2\hat\pi)} \left(192+176\log(\tfrac{1}{\pi_{\min}})+\tfrac{176\eta}{\hat\pi}\right)\EE\!\left[\EE_{\pi_{\rm old}}[\tilde A^4]\right].
\end{equation*}
Plug this into \cref{eq:R-as-sup} to obtain \cref{eq:entropy-remainder-bound-local}.
\end{proof}

\begin{corollary}\label{cor:entropy-taylor-local-slim}
    Under the same setting as in \cref{lem:entropy-taylor-local-slim}, if we know that conditioned on an event $B$, $|\tilde{A}(a)|\leq \frac{1}{2\hat{\pi}}$, then we have a refined bound 
    \begin{equation}\label{eq:entropy-remainder-bound-local-mixed}
        |\EE[R(\eta)]| \le \mathbb{P}(B)C(\hat{\pi}) + (1-\mathbb{P}(B))C(\pi_{\min}), 
    \end{equation}
    where 
    \begin{equation*}
        C(\pi):=\frac{e^{\eta/(2\pi)}}{24}
        \left(192+176\log(\tfrac{1}{\pi_{\min}})+\tfrac{176\eta}{\pi}\right)\EE\!\left[\EE_{\pi_{\rm old}}[\tilde A^4]\right]\eta^4. 
    \end{equation*}
\end{corollary}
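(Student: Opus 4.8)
The plan is to rerun the concluding step of the proof of \cref{lem:entropy-taylor-local-slim}, but to split the outer expectation over the event $B$ rather than bounding everything uniformly. Recall that the proof of the lemma reduced the whole estimate to $|\EE[R(\eta)]|\le\frac{\eta^4}{24}\sup_{t\in[0,\eta]}\EE[|\HCal^{(4)}(\eta)|]$ (as in \cref{eq:R-as-sup}), and then controlled $|\HCal^{(4)}(\eta)|$ through the pointwise chain $|\HCal^{(4)}(\eta)|\le\bigl(192+176\log(\tfrac1{\pi_{\min}})+\tfrac{176t}{\hat\pi}\bigr)\EE_{\pi_\eta}[\tilde A^4]$ followed by $\EE_{\pi_\eta}[\tilde A^4]\le e^{t/(2\hat\pi)}\EE_{\pi_{\rm old}}[\tilde A^4]$, each step valid for a fixed realization of $\tilde A$ as soon as $|\tilde A(a)|\le\frac1{2\hat\pi}$ for all $a$. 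The key observation is that the parameter $\hat\pi$ enters only through these two displays (plus the elementary bound $\|L_\eta\|_\infty\le\log(1/\pi_{\min})$, which is unconditional); everywhere else in the derivation of \cref{eq:Hfourth-identity-proof} one may instead invoke the unconditional bound $|\tilde A(a)|\le\frac1{2\pi_{\min}}$ established in the proof of \cref{lem:fistexpan}.

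With this in hand, I would write $\EE[|\HCal^{(4)}(\eta)|]=\EE[\mathbf 1_B\,|\HCal^{(4)}(\eta)|]+\EE[\mathbf 1_{B^c}\,|\HCal^{(4)}(\eta)|]$. On $B$ the hypothesis supplies $|\tilde A(a)|\le\frac1{2\hat\pi}$ for all $a$, so the chain above applies with parameter $\hat\pi$ and yields $\mathbf 1_B\,|\HCal^{(4)}(\eta)|\le\bigl(192+176\log\tfrac1{\pi_{\min}}+\tfrac{176t}{\hat\pi}\bigr)e^{t/(2\hat\pi)}\,\mathbf 1_B\,\EE_{\pi_{\rm old}}[\tilde A^4]$. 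On $B^c$ the identical chain applies verbatim with $\hat\pi$ replaced by $\pi_{\min}$, which is legitimate precisely because $|\tilde A(a)|\le\frac1{2\pi_{\min}}$ always holds (note $Z(\eta)\ge 1$ unconditionally, so the last tilting step also survives). Taking expectations, observing that the coefficient in $t$ is increasing so the supremum over $[0,\eta]$ is attained at $t=\eta$, and multiplying by $\eta^4/24$, one arrives at an upper bound of the shape ``$\mathbb P(B)$ times the $\hat\pi$-constant plus $\mathbb P(B^c)$ times the $\pi_{\min}$-constant,'' i.e.\ exactly $\mathbb P(B)\,C(\hat\pi)+(1-\mathbb P(B))\,C(\pi_{\min})$.

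The one point requiring care --- and the step I expect to be the main obstacle --- is matching the indicator-weighted moment terms $\EE[\mathbf 1_B\,\EE_{\pi_{\rm old}}[\tilde A^4]]$ and $\EE[\mathbf 1_{B^c}\,\EE_{\pi_{\rm old}}[\tilde A^4]]$ to the factor $\EE[\EE_{\pi_{\rm old}}[\tilde A^4]]$ appearing in $C(\cdot)$. The clean way to do this is to use the conditional bounds $\EE_{\pi_{\rm old}}[\tilde A^4]\le\frac1{16\hat\pi^4}$ on $B$ and $\EE_{\pi_{\rm old}}[\tilde A^4]\le\frac1{16\pi_{\min}^4}$ on $B^c$ (immediate from the corresponding pointwise bounds on $|\tilde A|$) to replace the random fourth moment by a constant before integrating over the indicator, so that $\EE[\mathbf 1_B(\cdot)]$ collapses to $\mathbb P(B)\,(\cdot)$ and the constants reorganize into the stated $C(\hat\pi)$, $C(\pi_{\min})$. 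Beyond this bookkeeping there is no new analytic content: the corollary is a direct conditional refinement of the estimate already established in \cref{lem:entropy-taylor-local-slim}, obtained by re-deriving \cref{eq:R-as-sup} one event at a time.
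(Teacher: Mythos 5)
Your overall strategy---re-deriving \cref{eq:R-as-sup} and splitting the bound on $\EE[|\HCal^{(4)}(t)|]$ over $B$ and $B^c$, using the envelope $1/(2\hat\pi)$ on $B$ and the unconditional envelope $1/(2\pi_{\min})$ on $B^c$---is in substance the same conditional decomposition the paper uses (the paper splits $\EE[R(\eta)]$ by the tower property and re-runs the proof of \cref{lem:entropy-taylor-local-slim} on each conditional probability space), and keeping the Taylor/evenness step at the unconditional level is a reasonable variant. The genuine gap is in your final ``bookkeeping'' step, and it is not mere bookkeeping. After the indicator split you are left with $\EE[\mathbf 1_B\,\EE_{\pi_{\rm old}}[\tilde A^4]]$ and $\EE[\mathbf 1_{B^c}\,\EE_{\pi_{\rm old}}[\tilde A^4]]$, whereas the stated constants $C(\hat\pi),C(\pi_{\min})$ contain the \emph{unconditional} factor $\EE[\EE_{\pi_{\rm old}}[\tilde A^4]]$. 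The collapse $\EE[\mathbf 1_B\,\EE_{\pi_{\rm old}}[\tilde A^4]]\le \mathbb P(B)\,\EE[\EE_{\pi_{\rm old}}[\tilde A^4]]$ is not available in general ($\mathbf 1_B$ and the random moment are correlated; since the two indicator-weighted moments sum to $\EE[\EE_{\pi_{\rm old}}[\tilde A^4]]$, the analogous inequalities for $B$ and $B^c$ cannot both hold except with equality). Your proposed fix---replacing the random moment by the pointwise envelopes $1/(16\hat\pi^4)$ on $B$ and $1/(16\pi_{\min}^4)$ on $B^c$---does not ``reorganize into the stated $C(\hat\pi),C(\pi_{\min})$'': it yields a different inequality in which $\EE[\EE_{\pi_{\rm old}}[\tilde A^4]]$ is replaced by those worst-case envelopes. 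These are incomparable with the moment factor in general and, in the regime the paper actually uses this corollary ($\pi_{\min}=10^{-7}$, $\hat\pi=2\pi_{\min}$, $G=16$, $|\VCal|\approx 1.5\times 10^5$, compared against \cref{lem:momentofAtilde}), they are larger by many orders of magnitude, so your version would not support the numerical entropy-decay verification that follows. As written, therefore, the proposal proves a weaker statement, not \cref{eq:entropy-remainder-bound-local-mixed}.

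To close the gap you should do what the paper does: write $\EE[R(\eta)]=\mathbb P(B)\,\EE[R(\eta)\mid B]+(1-\mathbb P(B))\,\EE[R(\eta)\mid B^c]$ and apply the lemma's argument on each conditional space with envelope $\hat\pi$ (on $B$) and $\pi_{\min}$ (on $B^c$), so that the fourth-moment factor enters each term as in the definition of $C(\cdot)$ rather than through indicator-weighted moments; equivalently, if you keep your indicator split, divide through by $\mathbb P(B)$ and $\mathbb P(B^c)$ so the constants carry conditional moments, and then address how these relate to the factor in $C(\cdot)$. Note also that the conditional-space route implicitly uses that the sign-symmetry of $\tilde A$ survives conditioning on $B$ (true for the no-collision event of \cref{lem:helpNumerics}, which depends only on the $\y^{(i)}$'s); your unconditional-Taylor variant sidesteps that point, which is its one real advantage, but it does not by itself deliver the constants claimed in the corollary.
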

\begin{proof}
    Let $q:=\mathbb{P}(B)$. By the tower property,
    \begin{equation*}
        \EE[R(\eta)] = q\EE[R(\eta)\mid B] + (1-q)\EE[R(\eta)\mid B^c],
    \end{equation*}
    hence by the triangle inequality,
    \begin{equation*}
        |\EE[R(\eta)]| \le q|\EE[R(\eta)\mid B]| + (1-q)|\EE[R(\eta)\mid B^c]|.
    \end{equation*}
    Conditioned on $B$ we have the envelope $|\tilde A(a)|\le \frac{1}{2\hat\pi}$ for all $a$, so applying the proof of \cref{lem:entropy-taylor-local-slim} to the conditional probability space yields $|\EE[R(\eta)\mid B]|\le C(\hat\pi)$. On the other hand, $\pi_{\rm old}(a)\ge \pi_{\min}$ and $|\tilde A(a)|\le \frac{1}{2\pi_{\rm old}(a)}\le \frac{1}{2\pi_{\min}}$ for all $a$, so \cref{lem:entropy-taylor-local-slim} also gives $|\EE[R(\eta)\mid B^c]|\le C(\pi_{\min})$. Substituting these bounds back proves \cref{eq:entropy-remainder-bound-local-mixed}.
\end{proof}

\begin{lemma}\label{lem:momentofAtilde}
    Let $\mathcal{V}$ be the vocabulary and $\tilde{A}$ be defined in \cref{eq:tokenA} with $L=1$, then 
    \begin{align*}
        &\EE[\textnormal{Var}_{\pi_\textnormal{old}}(\tilde{A})] = \tfrac{1-2^{1-G}}{G}(|\VCal|-1), \\
        &\EE[\textnormal{Cov}_{\pi_\textnormal{old}}(\tilde{A}^2,\log(\pi_\textnormal{old}))] = \tfrac{1-2^{1-G}}{G}\left(\sum_a L(a) - |\VCal| \mathbb{E}_{\pi_{\rm old}}[L])\right) \\
        &\EE\left[\EE_{\pi_\textnormal{old}}[\tilde{A}^4]\right] \leq \tfrac{\EE[A_1^4]}{G^3}(S_2-7S_1+12|\VCal|-6)+\tfrac{3(\EE[A_1^4]+(G-1)\EE[A_1^2A_2^2])}{G^3}(S_1-2|\VCal|+1), 
    \end{align*}
    where $S_1, S_2 > 0$ are
    \begin{equation*}
        S_1 = \tfrac{|\VCal|-1}{\pi_{\min}}+\tfrac{1}{1-(|\VCal|-1)\pi_{\min}}, \quad  S_2 = \tfrac{|\VCal|-1}{(\pi_{\min})^2}+\tfrac{1}{(1-(|\VCal|-1)\pi_{\min})^2}. 
    \end{equation*}
\end{lemma}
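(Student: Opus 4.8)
The plan is to evaluate both layers of randomness explicitly: first the expectation over the i.i.d.\ token samples (conditioning on the advantages), which collapses everything to power sums of the $A_i$, and then the expectation over the random rewards, invoking the moment facts behind \cref{lem:adv-dist}. With $L=1$ write $Z_i^a:=\1\{\y^{(i)}=a\}$, so $\tilde A(a)=\tfrac{1}{G\pi_\textnormal{old}(a)}\sum_{i=1}^G Z_i^a A_i$. Since the $\y^{(i)}$ are i.i.d.\ from $\pi_\textnormal{old}$ and independent of $\{A_i\}$, we have $(Z_i^a)^m=Z_i^a$ and $\EE[Z_{i_1}^a\cdots Z_{i_r}^a\mid\{A_j\}]=\pi_\textnormal{old}(a)^r$ for distinct $i_1,\dots,i_r$. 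I will repeatedly use $\sum_i A_i=0$ (hence $\sum_{j\ne i}A_j=-A_i$), $\sum_i A_i^2=G\cdot\1\{\text{non-degenerate group}\}$, and $\EE[A_1^2]=1-2^{1-G}$ (this follows from the computation in the proof of \cref{lem:adv-dist}, where $\EE[A_i^2\mid 1\le K\le G-1]=1$ and $\mathbb P(K\in\{0,G\})=2^{1-G}$); exchangeability of $(A_1,\dots,A_G)$ also gives $\EE[\sum_i A_i^4]=G\,\EE[A_1^4]$ and $\EE[(\sum_i A_i^2)^2]=G\,\EE[A_1^4]+G(G-1)\,\EE[A_1^2A_2^2]$.

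\textbf{Variance and covariance.} Because $\EE_{\pi_\textnormal{old}}[\tilde A]=\tfrac1G\sum_i A_i=0$, we get $\Var_{\pi_\textnormal{old}}(\tilde A)=\EE_{\pi_\textnormal{old}}[\tilde A^2]$. Expanding $(\sum_i Z_i^a A_i)^2$ and taking the token-expectation turns the diagonal into $\pi_\textnormal{old}(a)\sum_i A_i^2$ and the cross terms into $\pi_\textnormal{old}(a)^2\sum_{i\ne j}A_iA_j=-\pi_\textnormal{old}(a)^2\sum_i A_i^2$; dividing by $\pi_\textnormal{old}(a)$, summing over $a$, and using $\sum_a\pi_\textnormal{old}(a)=1$ gives $\tfrac{|\VCal|-1}{G^2}\sum_i A_i^2$, and the reward-expectation with $\EE[\sum_i A_i^2]=G(1-2^{1-G})$ proves the first identity. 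Repeating the same computation with the extra weight $L(a):=\log\pi_\textnormal{old}(a)$ yields $\EE_{\mathrm{tok}}[\EE_{\pi_\textnormal{old}}[\tilde A^2 L]\mid\{A_j\}]=\tfrac1{G^2}(\sum_i A_i^2)\big(\sum_a L(a)-\EE_{\pi_\textnormal{old}}[L]\big)$; subtracting $\EE_{\pi_\textnormal{old}}[\tilde A^2]\,\EE_{\pi_\textnormal{old}}[L]$ (the second factor is deterministic given the history) and taking reward-expectations gives the second identity.

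\textbf{Fourth moment.} Expand $(\sum_i Z_i^a A_i)^4$ by grouping ordered index-quadruples by their equality pattern, with multinomial weights $1,4,3,6,1$ for patterns $(4),(3,1),(2,2),(2,1,1),(1,1,1,1)$; a block of $r$ distinct indices contributes $\pi_\textnormal{old}(a)^r$ after the token-expectation. The zero-sum identity collapses the five resulting cross-sums to polynomials in $P_4:=\sum_i A_i^4$ and $P_2:=\sum_i A_i^2$: one checks $\sum_{i\ne j}A_i^3A_j=-P_4$, $\sum_{i\ne j}A_i^2A_j^2=P_2^2-P_4$, $\sum_{i,j,k\ \text{distinct}}A_i^2A_jA_k=2P_4-P_2^2$, and (from $(\sum_i A_i)^4=0$) $\sum_{i,j,k,l\ \text{distinct}}A_iA_jA_kA_l=3P_2^2-6P_4$. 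Collecting terms, the token-expectation of $(\sum_i Z_i^a A_i)^4$ equals $(\pi_a-7\pi_a^2+12\pi_a^3-6\pi_a^4)P_4+3\pi_a^2(1-\pi_a)^2P_2^2$ with $\pi_a:=\pi_\textnormal{old}(a)$; dividing by $\pi_a^3$, summing over $a$, and using $\sum_a\pi_a=1$, $\sum_a 1=|\VCal|$ gives
\begin{equation*}
\EE_{\mathrm{tok}}\!\big[\EE_{\pi_\textnormal{old}}[\tilde A^4]\mid\{A_j\}\big]=\frac{P_4}{G^4}\Big(\sum_a\tfrac{1}{\pi_a^2}-7\sum_a\tfrac{1}{\pi_a}+12|\VCal|-6\Big)+\frac{3P_2^2}{G^4}\Big(\sum_a\tfrac{1}{\pi_a}-2|\VCal|+1\Big).
\end{equation*}
Taking reward-expectations and substituting $\EE[P_4]$, $\EE[P_2^2]$ from above produces exactly the $1/G^3$ prefactors with the advantage-moment combinations $\EE[A_1^4]$ and $\EE[A_1^4]+(G-1)\EE[A_1^2A_2^2]$. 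Finally, since $x\mapsto 1/x$ and $x\mapsto 1/x^2$ are convex, over the polytope $\{\pi_a\ge\pi_{\min},\ \sum_a\pi_a=1\}$ each sum is maximized at a vertex (all but one coordinate equal to $\pi_{\min}$), so $\sum_a\pi_a^{-1}\le S_1$ and $\sum_a\pi_a^{-2}\le S_2$; substituting these into the two brackets (using also $S_1-2|\VCal|+1\ge 0$) gives the stated inequality.

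\textbf{Main obstacle.} The one genuinely delicate step is the degree-four bookkeeping: getting the five multinomial weights right and, above all, using $\sum_i A_i=0$ and $(\sum_i A_i)^4=0$ correctly to collapse the cross-sums — any sign slip corrupts the $(\pi_a-7\pi_a^2+12\pi_a^3-6\pi_a^4)$ coefficient and hence the whole bound. A secondary point of care is the concluding simplex-maximization: the bracket multiplying $P_4$ is not the sum of a globally convex function of $\pi_a$ (the coefficient of $\sum_a\pi_a^{-1}$ is negative), so one must either argue the maximum is still attained at the vertex because the active coordinates sit near $\pi_{\min}$, where $\pi\mapsto\pi^{-2}-\pi^{-1}$ is convex, or fall back on the cruder estimate $\sum_a\pi_a^{-2}-7\sum_a\pi_a^{-1}\le\sum_a\pi_a^{-2}\le S_2$, which already suffices in the relevant regime (large $|\VCal|$, small $\pi_{\min}$) where the $S_2$ term dominates.
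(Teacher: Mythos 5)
Your proof is correct and takes essentially the same route as the paper's: condition on $\{A_i\}$, reduce the token expectations to the power sums $P_2=\sum_i A_i^2$ and $P_4=\sum_i A_i^4$ using $\EE[\1\{\y^{(i_1)}=a\}\cdots\1\{\y^{(i_r)}=a\}]=\pi_{\textnormal{old}}(a)^r$ together with $\sum_i A_i=0$, then insert $\EE[A_1^2]=1-2^{1-G}$, $\EE[P_4]=G\,\EE[A_1^4]$, $\EE[P_2^2]=G\,\EE[A_1^4]+G(G-1)\,\EE[A_1^2A_2^2]$, and finally bound $\sum_a \pi_{\textnormal{old}}(a)^{-1}$ and $\sum_a \pi_{\textnormal{old}}(a)^{-2}$ by their values $S_1,S_2$ at an extreme point of $\{\pi\ge\pi_{\min},\ \sum_a\pi(a)=1\}$, exactly as the paper does. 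Two remarks: your degree-four coefficients $\pi_a-7\pi_a^2+12\pi_a^3-6\pi_a^4$ and $3\pi_a^2(1-\pi_a)^2$ are the correct ones and agree with the lemma as stated (the paper's intermediate display misprints $12\pi_a^3$ as $24\pi_a^3$ and $-6\pi_a^3$ as $-12\pi_a^3$), and the subtlety you flag at the end --- that the bracket multiplying $P_4$ carries $-7\sum_a\pi_a^{-1}$ with a negative sign, so replacing both sums by their maxima $S_1,S_2$ is not automatically an upper bound --- is a genuine gap that the paper's own proof also passes over silently, so your proposal is no weaker than the published argument on this point.
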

\begin{proof}
    We first compute $\EE[\textnormal{Var}_{\pi_\textnormal{old}}(\tilde{A})]$. Indeed, we have 
    \begin{eqnarray*}
    \lefteqn{\textnormal{Var}_{\pi_\textnormal{old}}(\tilde{A}) = \sum_a \pi_\textnormal{old}(a)(\tilde{A}(a))^2 - \left(\sum_a \pi_\textnormal{old}(a) \tilde{A}(a)\right)^2} \\
    & = & \sum_a \pi_\textnormal{old}(a)\left(\tfrac{1}{\pi_\textnormal{old}(a)G} \sum_{i=1}^G A_i\1_{\{\y^{(i)}=a\}}\right)^2 - \underbrace{\left(\sum_a \tfrac{\pi_\textnormal{old}(a)}{\pi_\textnormal{old}(a)G}\sum_{i=1}^G A_i \1_{\{\y^{(i)}=a\}}\right)^2}_{=0} \\
    & = & \tfrac{1}{G^2}\left(\sum_{i,j}A_iA_j \left(\sum_a \tfrac{1}{\pi_{\rm old}(a)}\1_{\{\y^{(i)}=a\}}\1_{\{\y^{(j)}=a\}}\right)\right). 
    \end{eqnarray*}
    Using the fact that $A_i$ is independent of $\y^{(i)}$, we have 
    \begin{eqnarray*}
    \EE[\textnormal{Var}_{\pi_\textnormal{old}}(\tilde{A})] & = & \tfrac{1}{G^2}\sum_{i=1}^G \EE[A_i^2]\left(\EE\left[\sum_a \tfrac{1}{\pi_\textnormal{old}(a)} \mathbf{1}_{\{\y^{(i)}=a\}}\right]\right) + \tfrac{1}{G^2}\sum_{i \neq j} \EE[A_iA_j]\left(\EE\left[\sum_a \tfrac{1}{\pi_\textnormal{old}(a)} \mathbf{1}_{\{\y^{(i)}=\y^{(j)}=a\}}\right]\right) \\
    & = & \tfrac{1}{G^2}\left(|\VCal| \sum_{i=1}^G \EE[A_i^2] + \sum_{i \neq j} \EE[A_iA_j]\right).   
    \end{eqnarray*}
    Since $\sum_{i=1}^G A_i=0$, we have $\sum_{i=1}^G A_i^2 = -\sum_{i \neq j} A_iA_j$. In addition, by \cref{lem:adv-dist}, we have $\EE[A_i^2]=1-2^{1-G}$. Thus, we have
    \begin{equation*}
    \EE[\textnormal{Var}_{\pi_\textnormal{old}}(\tilde{A})] = \tfrac{1-2^{1-G}}{G}(|\VCal|-1). 
    \end{equation*}
    We then compute $\EE[\textnormal{Cov}_{\pi_\textnormal{old}}(\tilde{A}^2,\log(\pi_\textnormal{old}))]$. Indeed, we have
    \begin{equation*}
    \textnormal{Cov}_{\pi_\textnormal{old}}(\tilde{A}^2,\log(\pi_\textnormal{old})) = \sum_a \pi_\textnormal{old}(a) \log(\pi_\textnormal{old}(a))\tilde{A}(a)^2 - \textnormal{Var}_{\pi_\textnormal{old}}(\tilde{A})\left(\sum_a \pi_\textnormal{old}(a)\log \pi_{\rm old}(a)\right). 
    \end{equation*}
    Then, we have
    \begin{equation*}
    \EE[\textnormal{Cov}_{\pi_\textnormal{old}}(\tilde{A}^2,\log(\pi_\textnormal{old}))] = \tfrac{1-2^{1-G}}{G}\left(\sum_a \log(\pi_\textnormal{old}(a)) - |\VCal| \sum_a \pi_\textnormal{old}(a)\log(\pi_\textnormal{old}(a))\right). 
    \end{equation*}
    Finally, we compute $\EE[\EE_{\pi_\textnormal{old}}[\tilde{A}^4]]$. We define $I_i(a) = \1\{\y^{(i)}=a\}$. Thus, we have $I_i(a) \sim \textnormal{Bernoulli}(p)$ i.i.d. and it is independent of $\{A_i\}_{i=1}^G$. We write $S(a) = \sum_{i=1}^G A_i I_i(a)$ and 
    \begin{equation*}
    \sum_{a} \pi_\textnormal{old}(a) (\tilde{A}(a))^4 = \sum_{a} \pi_\textnormal{old}(a)\left(\tfrac{1}{\pi_\textnormal{old}(a)G}\sum_{i=1}^G A_i\1\{\y^{(i)}=a\}\right)^4 = \sum_a \tfrac{(S(a))^4}{(\pi_\textnormal{old}(a))^3G^4}.
    \end{equation*}
    Conditioning on $\{A_i\}_{i=1}^G$ and using $\sum_{i=1}^G A_i=0$, a direct fourth-moment expansion gives
    \begin{eqnarray*}
    \EE_{\pi_\textnormal{old}}\left[(S(a))^4 \mid \{A_i\}_{i=1}^G\right] & = & (\pi_\textnormal{old}(a)-7(\pi_\textnormal{old}(a))^2+24(\pi_\textnormal{old}(a))^3-6(\pi_\textnormal{old}(a))^4)\left(\sum_{i=1}^G A_i^4\right) \\ 
    & & + (3(\pi_\textnormal{old}(a))^2-12(\pi_\textnormal{old}(a))^3+3(\pi_\textnormal{old}(a))^4)\left(\sum_{i=1}^G A_i^2\right)^2.
    \end{eqnarray*}
    This implies 
    \begin{eqnarray*}
    \EE_{\pi_\textnormal{old}}\left[\EE_{\pi_\textnormal{old}}[\tilde{A}^4] \mid \{A_i\}_{i=1}^G\right] & = & \tfrac{1}{G^4}\left(\left(\sum_a \tfrac{1}{(\pi_\textnormal{old}(a))^2}\right)-7\left(\sum_a \tfrac{1}{\pi_\textnormal{old}(a)}\right)+24|\VCal|-6\right)\left(\sum_{i=1}^G A_i^4\right) \\ 
    & & + \tfrac{1}{G^4}\left(3\left(\sum_a \tfrac{1}{\pi_\textnormal{old}(a)}\right)-12|\VCal|+3\right)\left(\sum_{i=1}^G A_i^2\right)^2.
    \end{eqnarray*}
    In addition, we have
    \begin{equation*}
    \EE\left[\sum_{i=1}^G A_i^4\right] = G\EE[A_1^4], \quad \EE\left[\left(\sum_{i=1}^G A_i^2\right)^2\right] = G\EE[A_1^4]+G(G-1)\EE[A_1^2A_2^2], 
    \end{equation*}
    and 
    \begin{equation*}
    \sum_a \tfrac{1}{\pi_\textnormal{old}(a)} \leq \tfrac{|\VCal|-1}{\pi_{\min}} + \tfrac{1}{1-(|\VCal|-1)\pi_{\min}}, \quad \sum_a \tfrac{1}{(\pi_\textnormal{old}(a))^2} \leq \tfrac{|\VCal|-1}{\pi_{\min}^2} + \tfrac{1}{[1-(|\VCal|-1)\pi_{\min}]^2}. 
    \end{equation*}
    Putting these pieces together yields 
    \begin{equation*}
    \EE\left[\EE_{\pi_\textnormal{old}}[\tilde{A}^4]\right] \leq \tfrac{\EE[A_1^4]}{G^3}(S_2-7S_1+12|\VCal|-6)+\tfrac{3(\EE[A_1^4]+(G-1)\EE[A_1^2A_2^2])}{G^3}(S_1-2|\VCal|+1), 
    \end{equation*}
    where $S_1, S_2 > 0$ are
    \begin{equation*}
        S_1  = \tfrac{|\VCal|-1}{\pi_{\min}}+\tfrac{1}{1-(|\VCal|-1)\pi_{\min}}, \quad S_2  = \tfrac{|\VCal|-1}{(\pi_{\min})^2}+\tfrac{1}{(1-(|\VCal|-1)\pi_{\min})^2}. 
    \end{equation*}
\end{proof}

In what follows, we provide a detailed version of \cref{thm:entropyinc} and its proof. 
\begin{restatedtheorem}{\ref{thm:entropyinc}}
If $L=1$, then without clipping, for any $\eta>0$, we have 
\begin{equation*}
\EE[\HCal(\pi_\textnormal{new})-\HCal(\pi_\textnormal{old})] = - c_G\Phi(\pi_\textnormal{old})\eta^2 + \mathbb{E}[R(\eta)], \quad  |R(\eta)|\leq C\eta^4. 
\end{equation*}
where $c_G:=\tfrac{1-2^{1-G}}{2G}$ and 
\begin{eqnarray*}
\Phi(\pi) & = & |\VCal|-1+\sum_{a \in \VCal} \log(\pi(a))-|\VCal|\left(\sum_{a \in \VCal} \pi(a) \log(\pi(a))\right),  \\
C(\pi_{\min}) & = & \tfrac{e^{\eta/\pi_{\min}}}{24}\left(192+176\log(\tfrac{1}{\pi_{\min}})+\tfrac{176\eta}{\pi_{\min}}\right) M, \\
M(\pi_{\min}) & = & \tfrac{\EE[A_1^4]}{G^3}(S_2(\pi_{\min})-7S_1(\pi_{\min})+12|\VCal|-6) \\
&+&\tfrac{3(\EE[A_1^4]+(G-1)\EE[A_1^2A_2^2])}{G^3}(S_1(\pi_{\min})-2|\VCal|+1), 
\end{eqnarray*}
with 
\begin{align*}
S_1(\pi_{\min}) & = \tfrac{|\VCal|-1}{\pi_{\min}}+\tfrac{1}{1-(|\VCal|-1)\pi_{\min}}, \\
S_2(\pi_{\min}) & = \tfrac{|\VCal|-1}{(\pi_{\min})^2}+\tfrac{1}{(1-(|\VCal|-1)\pi_{\min})^2}. 
\end{align*}
\end{restatedtheorem}
\begin{proof}
If $L=1$, there is only one $\h$ and every rollout has the same $\h$, so we ignore $\h$ for simplicity. In this regard, we abbreviate $\pi(a \mid \h)$ as $\pi(a)$, $\y_t^{(i)}$ as $\y^{(i)}$, and
\begin{equation*}
\tilde{A}(a) := \tfrac{1}{G}\left(\sum_{i=1}^G\tfrac{\1\{\y^{(i)}=a\}}{\pi_\textnormal{old}(a)}A_i\right).
\end{equation*}
We rewrite the update rule as follows, 
\begin{equation*}
\pi_{\eta}(a) := \pi_\textnormal{new}(a) = \tfrac{\pi_\textnormal{old}(a)e^{\eta\tilde{A}(a)}}{Z(\eta)}, \quad Z(\eta) = \sum_{a \in \VCal} \pi_\textnormal{old}(a)e^{\eta\tilde{A}(a)}.
\end{equation*}
We define $\psi(\eta):=\log Z(\eta)$ and $u(a):=\eta\tilde{A}(a)-\psi(\eta)$. In \cref{lem:fistexpan}, we have shown $|\tilde{A}(a)|\leq 1/(2\pi_{\min})$ for all $a$, so by taking $\hat{\pi}:=\pi_{\min}$, we can apply \cref{lem:entropy-taylor-local-slim} to obtain 
\begin{equation*}
    \EE[\HCal(\pi_\eta)-\HCal(\pi_{\rm old})] =
    -\tfrac{\eta^2}{2}\EE[\Var_{\pi_{\rm old}}(\tilde A)+\Cov_{\pi_{\rm old}}(\tilde A^2,\log\pi_{\rm old})]
    +\EE[R(\eta)]. 
\end{equation*}
Finally, we apply \cref{lem:momentofAtilde} and define 
\begin{equation*}
    \Phi(\pi) = |\VCal|-1+\sum_{a \in \VCal} \log(\pi(a))-|\VCal|\left(\sum_{a \in \VCal} \pi(a) \log(\pi(a))\right), 
\end{equation*}
to obtain the desired result. 
\end{proof}

We provide a numerical example in Figure~\ref{fig:flatvsskewed_onestep} to illustrate the above theoretical result. Building on the two-armed setting in Remark~\ref{rmk:numunclipped}, we conduct additional numerical experiments under unclipped GRPO training. As shown in Figure~\ref{fig:flatvsskewed_onestep}, entropy growth occurs only when the policy is initialized in a sufficiently skewed regime. This observation underscores that injecting spurious rewards without clipping can help preserve or restore entropy in GRPO training, particularly when the policy entropy has already collapsed or degraded toward a highly skewed distribution.

\begin{figure*}[!ht]
\centering
\includegraphics[width=0.8\linewidth]{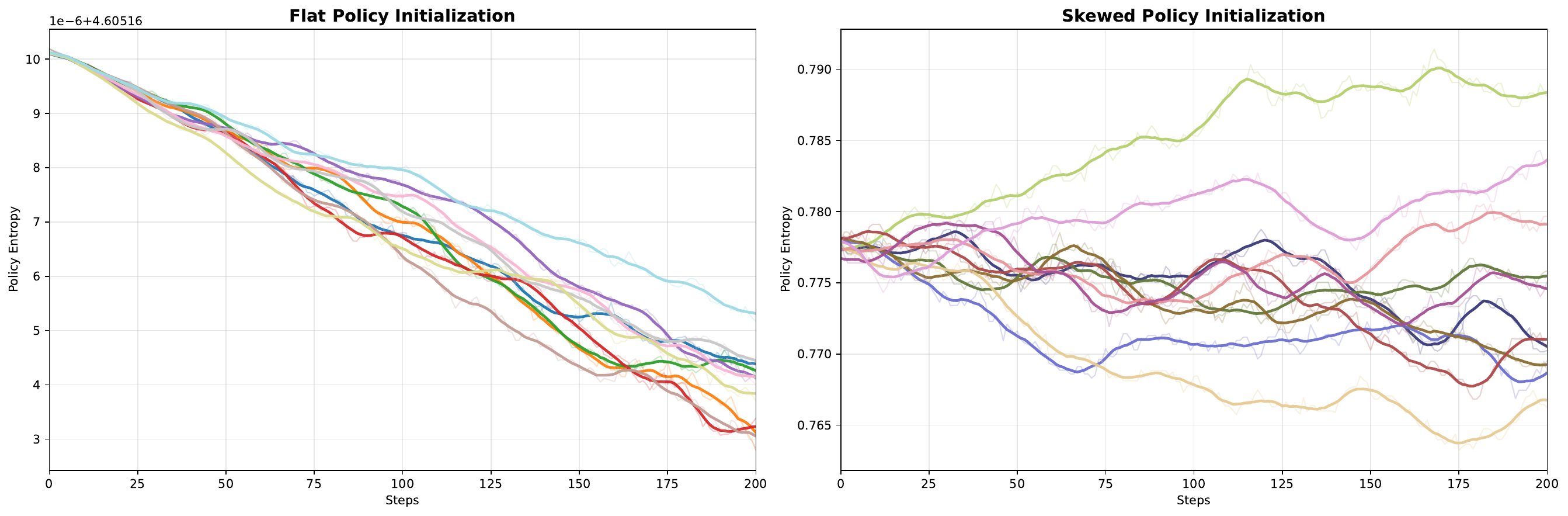}
\caption{Simulation of policy entropy evolution over unclipped GRPO training. Each panel includes the result with 10 independent trails. Flat (relatively less-skewed) policy $\pi$ initialization (\textbf{Left}); Skewed policy $\pi$ initialization (\textbf{Right}).}
\label{fig:flatvsskewed_onestep}
\end{figure*}

Then, we proceed to the entropy dynamics with upper clipping. Before proving \cref{thm:entropycollp}, we present some useful lemmas. 

\begin{lemma}\label{lem:cap-kkt}
The surrogate objective $\hat{J}(\pi)$ with only upper clipping can be rewritten as 
\begin{equation*}
\hat{J}(\pi) = \tfrac{1}{G}\sum_{a \in \VCal}(S_-(a)r(a)+S_+(a)\min\{r(a),1+\varepsilon\}). 
\end{equation*}
There exist $\lambda \in \mathbb{R}$ and $\{\mu_a\}_{a \in \VCal}$ with $\mu_a \geq 0$ and $\mu_ar_c(a)=0$ such that, 
\begin{equation*}
\tfrac{1}{G}(S_-(a)+S_+(a)\xi_a)-\tfrac{\pi_\textnormal{old}(a)}{\eta}(\log r_c(a)+1) - \lambda \pi_\textnormal{old}(a) + \mu_a = 0, \quad \textnormal{for every } a, 
\end{equation*}
where 
\begin{equation*}
\xi_a = \begin{cases} 1, & \textnormal{if } r_c(a) < 1+\varepsilon,\\ 0, & \textnormal{if } r_c(a) > 1+\varepsilon,\\ [0,1], & \textnormal{otherwise}. \end{cases}
\end{equation*}
In particular, if $r_c(a)>1+\varepsilon$, we have $\xi_a=0$ and $\log(r_c(a)) = \frac{\eta S_-(a)}{\pi_\textnormal{old}(a)G}-\eta\lambda-1$. 
\end{lemma}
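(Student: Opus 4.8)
The plan is to prove the two assertions separately: first the algebraic identity for $\hat J(\pi)$, then the KKT characterization of the regularized maximizer $\pi_\textnormal{new}^c$ of $F(\pi)=\hat J(\pi)-\tfrac1\eta D_{\rm KL}(\pi\|\pi_\textnormal{old})$. For the first, fix a rollout index $i$ and write $a:=\y^{(i)}$ and $r:=r(a)=\pi(a)/\pi_\textnormal{old}(a)$. Since $\min\{r,1+\varepsilon\}\le r$, the inner term $\min\{rA_i,\min\{r,1+\varepsilon\}A_i\}$ equals $A_i\min\{r,1+\varepsilon\}$ when $A_i>0$, equals $A_i r$ when $A_i<0$ (the inequality $\min\{r,1+\varepsilon\}\le r$ flips under multiplication by the negative number $A_i$), and vanishes when $A_i=0$. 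Summing over $i$ and collecting terms according to the value $a$ of $\y^{(i)}$ converts $\tfrac1G\sum_i(\cdots)$ into $\tfrac1G\sum_{a\in\VCal}(S_-(a)r(a)+S_+(a)\min\{r(a),1+\varepsilon\})$ by the definitions of $S_\pm(a)$. This part is pure casework.

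Next I set up the optimization. Reparametrize via $\pi(a)=\pi_\textnormal{old}(a)r(a)$; since $\pi_\textnormal{old}(a)\ge\pi_{\min}>0$ this identifies $\Delta_{|\VCal|}$ with $\{r\ge 0:\sum_a\pi_\textnormal{old}(a)r(a)=1\}$, and $D_{\rm KL}(\pi\|\pi_\textnormal{old})=\sum_a\pi_\textnormal{old}(a)r(a)\log r(a)$. Hence maximizing $F$ is equivalent to maximizing
\begin{equation*}
G(r):=\tfrac1G\sum_a\big(S_-(a)r(a)+S_+(a)\min\{r(a),1+\varepsilon\}\big)-\tfrac1\eta\sum_a\pi_\textnormal{old}(a)r(a)\log r(a)
\end{equation*}
over that compact convex set. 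Each summand of $G$ is concave ($S_-(a)r(a)$ is affine; $S_+(a)\ge0$ multiplies the concave map $s\mapsto\min\{s,1+\varepsilon\}$; and $-\pi_\textnormal{old}(a)r(a)\log r(a)$ is strictly concave on $r(a)\ge0$), so $G$ is strictly concave and attains a unique maximizer $r_c(a)=\pi_\textnormal{new}^c(a)/\pi_\textnormal{old}(a)$. The equality constraint is affine and $r\equiv1$ is strictly feasible for the inequalities $r(a)>0$, so Slater's condition holds; the (sub)differential KKT conditions are therefore necessary, and by concavity also sufficient.

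For the stationarity condition, form the Lagrangian $\mathcal L(r,\lambda,\mu)=G(r)-\lambda\big(\sum_a\pi_\textnormal{old}(a)r(a)-1\big)+\sum_a\mu_a r(a)$ with $\lambda\in\mathbb R$ and $\mu_a\ge0$. The only nonsmooth term is $\min\{\cdot,1+\varepsilon\}$, whose (super)differential at $s$ is $\{1\}$ for $s<1+\varepsilon$, $\{0\}$ for $s>1+\varepsilon$, and $[0,1]$ for $s=1+\varepsilon$; let $\xi_a$ be the selected element at $s=r_c(a)$, which is exactly the stated $\xi_a$. Applying the subdifferential sum rule (valid, as all terms are finite-valued) and differentiating the smooth pieces, the optimality condition at $r_c$ reads, for each $a$,
\begin{equation*}
\tfrac1G\big(S_-(a)+S_+(a)\xi_a\big)-\tfrac{\pi_\textnormal{old}(a)}{\eta}\big(\log r_c(a)+1\big)-\lambda\pi_\textnormal{old}(a)+\mu_a=0,
\end{equation*}
with complementary slackness $\mu_a r_c(a)=0$ (in fact the derivative of $-r\log r$ diverges to $+\infty$ as $r\to0^+$, so $r_c(a)>0$ and $\mu_a\equiv0$; I retain $\mu_a$ only to match the statement). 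For the last claim, when $r_c(a)>1+\varepsilon$ we have $\xi_a=0$ and $r_c(a)>0\Rightarrow\mu_a=0$, so the display reduces to $\tfrac1G S_-(a)=\tfrac{\pi_\textnormal{old}(a)}{\eta}(\log r_c(a)+1)+\lambda\pi_\textnormal{old}(a)$; dividing by $\pi_\textnormal{old}(a)$ and rearranging yields $\log r_c(a)=\tfrac{\eta S_-(a)}{\pi_\textnormal{old}(a)G}-\eta\lambda-1$.

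The only genuinely delicate point is the non-differentiability of the clipping term: one must use concave subdifferential calculus rather than plain Lagrange multipliers, verify a constraint qualification (Slater) so that KKT necessity is licensed, and invoke the subdifferential sum rule. Everything else — the casework identifying $\hat J$ and the algebra solving for $\log r_c(a)$ — is routine.
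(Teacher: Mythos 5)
Your proposal is correct and follows essentially the same route as the paper: reparametrize by $r(a)=\pi(a)/\pi_\textnormal{old}(a)$, note concavity of the clipped surrogate minus the KL term, and apply KKT with the subdifferential $\{1\}/\{0\}/[0,1]$ of $\min\{\cdot,1+\varepsilon\}$ plus complementary slackness, then specialize to $r_c(a)>1+\varepsilon$. The only additions are the explicit sign casework establishing the $S_\pm$ rewriting (which the paper treats as immediate) and the observation that $r_c(a)>0$ forces $\mu_a=0$; both are fine.
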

\begin{proof}
Since $\pi(a)=\pi_\textnormal{old}(a)r(a)$, we have $\sum_a \pi_\textnormal{old}(a)r(a)=1$, $r(a)\ge 0$ and 
\begin{equation*}
D_\textnormal{KL}(\pi \odot r \| \pi) = \sum_a \pi(a)r(a)\log(r(a)). 
\end{equation*}
By definition, we have 
\begin{equation*}
\pi_\textnormal{new}^c := \argmax_{\substack{r \geq 0 \\ \pi\odot r \in \Delta_{|\VCal|}}} \left\{\tfrac{1}{G}\sum_a (S_-(a)r(a)+S_+(a)\min\{r(a),1+\varepsilon\}) - \tfrac{1}{\eta}\sum_a \pi(a)r(a)\log(r(a))\right\}. 
\end{equation*}
Since the objective is concave in $r$ and the constraint qualification holds, the KKT conditions are necessary and sufficient. We define $g(r):=\min\{r,1+\varepsilon\}$ and derive its subdifferential as follows, 
\begin{equation*}
\partial g(r)= \begin{cases} \{1\}, & \textnormal{if } r<1+\varepsilon,\\ \{0\}, & \textnormal{if } r>1+\varepsilon,  \\ [0,1], & \textnormal{otherwise}.\end{cases}
\end{equation*}
We introduce the Lagrangian function with $\lambda$ for the equality constraint and $\mu_a \geq 0$ for $r(a) \geq 0$: 
\begin{eqnarray*}
\lefteqn{\mathcal{L}(r,\lambda,\mu) = \tfrac{1}{G} \sum_a(S_-(a)r(a)+S_+(a)\min\{r(a),\mathrm{cap}\})} \\
& & -\tfrac{1}{\eta}\sum_a \pi(a)r(a)\log(r(a)) - \lambda\left(\sum_a \pi(a)r(a)-1\right) + \sum_a \mu_a r(a).
\end{eqnarray*}
For $\xi_a \in \partial\min\{r(a),1+\varepsilon\}$, we have
\begin{equation*}
0 \in \partial_{r(a)}\mathcal{L} = \tfrac{1}{G}(S_-(a)+S_+(a)\xi_a) - \tfrac{\pi(a)}{\eta}(\log(r(a))+1) - \lambda \pi(a) + \mu_a. 
\end{equation*}
We also have $\mu_a r(a)=0$. This implies $\mu_a=0$ for any $a$ with $r_c(a)>0$ and 
\begin{equation*}
\tfrac{\pi(a)}{\eta}(\log(r_c(a))+1) = \tfrac{1}{G}(S_-(a)+S_+(a)\xi_a) -\lambda\pi(a). 
\end{equation*}
If $r_c(a)>1+\varepsilon$, we have $\xi_a=0$. Putting these pieces together yields the desired result. 
\end{proof}

\begin{lemma}\label{lem:ratio-hit-clip}
For any group of samples $\{\y^{(i)}\}_{i=1}^G$, we define $U:=\VCal \setminus \{\y^{(1)},\ldots,\y^{(G)}\}$. Then, we have 
\begin{equation*}
\pi_\textnormal{old}(U) = \sum_{a \in U} \pi_\textnormal{old}(a) \geq (|\VCal|-G) \pi_{\min} =: M_0. 
\end{equation*}
Suppose that 
\begin{equation}\label{eq:assum-ratio-hit-clip}
\tfrac{(1+\varepsilon)\eta}{2}<\left(M_0-\tfrac{1}{2}\sqrt{\eta(1+\varepsilon)}\right)\log(1+\varepsilon). 
\end{equation}
Then, we have $r_c(a) \leq 1+\varepsilon$ for all $a \in \VCal$. 
\end{lemma}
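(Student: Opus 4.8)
The statement splits into a counting bound and a no-clip-active claim, and I would treat them in that order. The bound $\pi_\textnormal{old}(U)\ge M_0$ is immediate: the multiset $\{\y^{(1)},\dots,\y^{(G)}\}$ has at most $G$ distinct entries, so $|U|\ge|\VCal|-G$, and since $\pi_\textnormal{old}(a)\ge\pi_{\min}$ for every $a$ we get $\pi_\textnormal{old}(U)=\sum_{a\in U}\pi_\textnormal{old}(a)\ge(|\VCal|-G)\pi_{\min}=M_0$.

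For the second part I would argue by contradiction, assuming $r_c(a^\star)>1+\varepsilon$ for some $a^\star$, and first read off the structure of $\pi_\textnormal{new}^c$ from the KKT system of Lemma~\ref{lem:cap-kkt}. Because the entropy-type penalty $r\log r$ has derivative $\to-\infty$ as $r\to0^+$, stationarity forces $r_c(a)>0$ for all $a$, so every $\mu_a=0$; for $a\in U$ (where $S_+(a)=S_-(a)=0$) this collapses to the common value $r_c(a)=c:=e^{-1-\eta\lambda}$, and for $a\in V_s$ at which the clip is active Lemma~\ref{lem:cap-kkt} gives $\log r_c(a)=\log c+\tfrac{\eta S_-(a)}{\pi_\textnormal{old}(a)G}\le\log c$ since $S_-(a)\le0$. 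Hence $r_c(a)\le c$ for every $a$, so $c=\max_a r_c(a)$, and the assumption $r_c(a^\star)>1+\varepsilon$ upgrades to $c>1+\varepsilon$. It therefore suffices to contradict $c>1+\varepsilon$.

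The quantitative engine is a two-sided estimate of $D_{\mathrm{KL}}(\pi_\textnormal{new}^c\|\pi_\textnormal{old})$. On one side, optimality of $\pi_\textnormal{new}^c$ for $F(\pi)=\hat J(\pi)-\tfrac1\eta D_{\mathrm{KL}}(\pi\|\pi_\textnormal{old})$ together with feasibility of $\pi_\textnormal{old}$, for which $F(\pi_\textnormal{old})=\hat J(\pi_\textnormal{old})=\tfrac1G\sum_i A_i=0$, gives $D_{\mathrm{KL}}(\pi_\textnormal{new}^c\|\pi_\textnormal{old})\le\eta\,\hat J(\pi_\textnormal{new}^c)$; bounding $\hat J$ by dropping the non-positive $S_-(a)r_c(a)$ terms, using $\min\{r_c(a),1+\varepsilon\}\le1+\varepsilon$, and $\sum_a S_+(a)=\sum_{i:A_i>0}A_i=\tfrac12\sum_i|A_i|\le\tfrac12\sqrt{G\sum_i A_i^2}=\tfrac G2$ (using $\sum_iA_i=0$ and $\sum_iA_i^2=G$ from the group-normalized advantage, cf.\ Lemma~\ref{lem:adv-dist}), yields $D_{\mathrm{KL}}(\pi_\textnormal{new}^c\|\pi_\textnormal{old})\le\tfrac{\eta(1+\varepsilon)}{2}$, hence by Pinsker $\mathrm{TV}(\pi_\textnormal{new}^c,\pi_\textnormal{old})\le\tfrac12\sqrt{\eta(1+\varepsilon)}$ and in particular $\pi_\textnormal{new}^c(U)\ge\pi_\textnormal{old}(U)-\mathrm{TV}\ge M_0-\tfrac12\sqrt{\eta(1+\varepsilon)}$. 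On the other side, I would write $D_{\mathrm{KL}}(\pi_\textnormal{new}^c\|\pi_\textnormal{old})=\sum_a\pi_\textnormal{new}^c(a)\log r_c(a)$, split the sum over $U$ and $V_s$, evaluate the $U$-block exactly as $\pi_\textnormal{new}^c(U)\log c$ (which since $c>1+\varepsilon$ is at least $(M_0-\tfrac12\sqrt{\eta(1+\varepsilon)})\log(1+\varepsilon)$), and lower bound the $V_s$-block: its only negative terms come from mass-losing tokens, and on them $\log r_c(a)\ge1-\tfrac1{r_c(a)}$ bounds their total contribution below by $-\mathrm{TV}\ge-\tfrac12\sqrt{\eta(1+\varepsilon)}$. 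Feeding these estimates into hypothesis~\eqref{eq:assum-ratio-hit-clip} contradicts the upper bound $D_{\mathrm{KL}}(\pi_\textnormal{new}^c\|\pi_\textnormal{old})\le\tfrac{\eta(1+\varepsilon)}{2}$, so no clip is active and $r_c(a)\le1+\varepsilon$ for all $a$.

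The step I expect to be delicate is the lower bound on $D_{\mathrm{KL}}$: the sampled tokens in $V_s$ can shed probability mass in a way that partially offsets the large $U$-block, so turning ``$c>1+\varepsilon$'' into a genuine violation of the KL budget $\tfrac{\eta(1+\varepsilon)}{2}$ requires carefully tracking those contributions — this is exactly where the $\sqrt{\eta(1+\varepsilon)}$ discount of $M_0$ and the factor $\log(1+\varepsilon)$ in~\eqref{eq:assum-ratio-hit-clip} originate (balancing the $U$-block gain against the $V_s$-block loss, both controlled via Pinsker and $\log x\ge1-1/x$). The counting bound, the KKT reduction, and the optimality-based KL budget are otherwise routine.
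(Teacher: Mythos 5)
Your route is essentially the paper's: the counting bound, the KL budget $D_{\mathrm{KL}}(\pi_\textnormal{new}^c\|\pi_\textnormal{old})\le\eta\hat J(\pi_\textnormal{new}^c)\le\tfrac{\eta(1+\varepsilon)}{2}$ from $F(\pi_\textnormal{new}^c)\ge F(\pi_\textnormal{old})=0$, Pinsker to get $\pi_\textnormal{new}^c(U)\ge M_0-\tfrac12\sqrt{\eta(1+\varepsilon)}$, and the KKT structure of Lemma~\ref{lem:cap-kkt} to show that an active upper clip forces $r_c(b)=c>1+\varepsilon$ for every $b\in U$, followed by a contradiction via a lower bound on the KL. (One small overclaim along the way: ``$r_c(a)\le c$ for every $a$'' fails for unclipped sampled tokens with positive net advantage, whose KKT equation carries the extra $+\eta S_+(a)/(\pi_\textnormal{old}(a)G)$ term; but you only need the inequality at the clip-active token $a^\star$, so $c>1+\varepsilon$ still follows.)

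The genuine problem is the closing step. Your lower bound is $D_{\mathrm{KL}}\ge\pi_\textnormal{new}^c(U)\log c-\|\pi_\textnormal{new}^c-\pi_\textnormal{old}\|_{\textnormal{TV}}\ge\left(M_0-\tfrac12\sqrt{\eta(1+\varepsilon)}\right)\log(1+\varepsilon)-\tfrac12\sqrt{\eta(1+\varepsilon)}$, because you pay an extra $-\mathrm{TV}$ for the mass-losing sampled tokens. Combined with $D_{\mathrm{KL}}\le\tfrac{\eta(1+\varepsilon)}{2}$, this only yields $\left(M_0-\tfrac12\sqrt{\eta(1+\varepsilon)}\right)\log(1+\varepsilon)\le\tfrac{\eta(1+\varepsilon)}{2}+\tfrac12\sqrt{\eta(1+\varepsilon)}$, which does \emph{not} contradict \cref{eq:assum-ratio-hit-clip}: the hypothesis leaves no room for the unabsorbed slack $\tfrac12\sqrt{\eta(1+\varepsilon)}$, and its discount of $M_0$ is already spent on the Pinsker correction to $\pi_\textnormal{new}^c(U)$ — it cannot do double duty covering the $V_s$-block loss, contrary to what you suggest. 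As written, your argument proves the lemma only under the stronger condition $\tfrac{\eta(1+\varepsilon)}{2}+\tfrac12\sqrt{\eta(1+\varepsilon)}<\left(M_0-\tfrac12\sqrt{\eta(1+\varepsilon)}\right)\log(1+\varepsilon)$. For comparison, the paper closes the contradiction by asserting $D_{\mathrm{KL}}=\sum_a\pi_\textnormal{new}^c(a)\log r_c(a)>\log(1+\varepsilon)\,\pi_\textnormal{new}^c(U)$, i.e., it drops the sampled-token block as if it were nonnegative; your concern that this block can be negative (a token dominated by $S_-(a)$ can have $r_c(a)<1$) is legitimate, but your repair does not close under the stated hypothesis. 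A cleaner repair is to apply $\pi_\textnormal{new}^c(a)\log r_c(a)\ge\pi_\textnormal{new}^c(a)-\pi_\textnormal{old}(a)$ termwise on the sampled block, which gives $D_{\mathrm{KL}}\ge\pi_\textnormal{old}(U)\left(c\log c-c+1\right)\ge M_0\,\phi(1+\varepsilon)$ with $\phi$ as in Theorem~\ref{thm:bias_subgauss}; this avoids the TV loss entirely but again contradicts a (different) smallness condition on $\eta$ rather than \cref{eq:assum-ratio-hit-clip} verbatim.
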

\begin{proof}
First, we show $D_\textnormal{KL}(\pi_\textnormal{new}^c \|\pi_\textnormal{old}) \leq \tfrac{1}{2}\eta(1+\varepsilon)$. By definition, we have $F(\pi_\textnormal{new}^c)\geq F(\pi_\textnormal{old})$. Since $r \equiv 1$ and $\hat{J}(\pi_\textnormal{old}) = \sum_iA_i=0$, we have $F(\pi_\textnormal{old})=0$. Thus, we obtain from $F(\pi_\textnormal{new}^c) \geq 0$ that $D_\textnormal{KL}(\pi_\textnormal{new}^c\|\pi_\textnormal{old}) \leq \eta\hat{J}(\pi_\textnormal{new}^c)$. Due to the upper clipping, given the number of samples with reward $+1$ in the group $1\leq K\leq G-1$, we have
\begin{equation*}
\hat{J}(\pi_\textnormal{new}^c) \leq (1+\varepsilon)\sum_{i=1}^GA_i\1\{A_i>0\} = (1+\varepsilon)\tfrac{K}{G}\sqrt{\tfrac{1-K/G}{K/G}} = (1+\varepsilon)\sqrt{\tfrac{K}{G}\left(1-\tfrac{K}{G}\right)} \leq \tfrac{1+\varepsilon}{2}. 
\end{equation*}
Putting these pieces together yields the desired result. 
    
Then, we show $\pi_\textnormal{new}^c(U) \geq M_0-\frac{\sqrt{\eta(1+\varepsilon)}}{2}$. Indeed, this can be derived from the Pinsker's inequality as follows, 
\begin{equation*}
|\pi_\textnormal{new}^c(U) - \pi_\textnormal{old}(U)| \leq \|\pi_\textnormal{new}^c - \pi_\textnormal{old}\|_\textnormal{TV} \leq \sqrt{\tfrac{1}{2}D_\textnormal{KL}(\pi_\textnormal{new}^c\|\pi_\textnormal{old})} = \tfrac{1}{2}\sqrt{\eta(1+\varepsilon)}. 
\end{equation*}
This together with $\pi_\textnormal{old}(U) \geq M_0$ yields the desired result. 
    
Finally, we show that $r_c(a) \leq 1+\varepsilon$ for all $a \in \VCal$ given \cref{eq:assum-ratio-hit-clip}. Suppose that there are some $a^\star$ so that $r_c(a^\star)>1+\varepsilon$. Then, by \cref{lem:cap-kkt}, we have $\xi_{a^\star}=0$ and 
\begin{equation*}
\log(r_c(a^\star)) = \tfrac{\eta S_-(a^\star)}{\pi_\textnormal{old}(a^\star)G} - \eta\lambda - 1 \overset{S_-(a^\star)\leq 0}{\leq} - \eta\lambda - 1, 
\end{equation*}
By definition, $S_+(b)=S_-(b)=0$ for all $b \in U$. Thus, we have $\log(r_c(b)) = - \eta\lambda - 1$ and $r_c(b) \geq r_c(a^\star) > 1+\varepsilon$ for all $b \in U$. This implies 
\begin{equation*}
D_\textnormal{KL}(\pi_\textnormal{new}^c\|\pi_\textnormal{old}) = \sum_a \pi_\textnormal{new}^c(a)\log(r_c(a)) > \log(1+\varepsilon)\pi_\textnormal{new}^c(U). 
\end{equation*}
Putting these pieces together yields a direct contradiction to \cref{eq:assum-ratio-hit-clip} as follows, 
\begin{equation*}
\tfrac{1}{2}\eta(1+\varepsilon) \geq D_\textnormal{KL}(\pi_\textnormal{new}^c\|\pi_\textnormal{old}) > \log(1+\varepsilon)\left(M_0-\tfrac{\sqrt{(1+\varepsilon)\eta}}{2}\right). 
\end{equation*}
This completes the proof. 
\end{proof}

\begin{restatedtheorem}{\ref{thm:entropycollp}}
Define $C_i:=\{A_i>0,\,r_u(\y^{(i)})>1+\varepsilon\}$. Let $\rho:=\mathbb{P}(C_1)$ and $\delta=\EE[r_u(\y^{(1)})-(1+\varepsilon)\mid C_1]$. Then, for $\eta>0$ satisfying \cref{eq:assum-ratio-hit-clip} and any $p \in (\pi_{\min},1)$, we have 
\begin{equation*}
\EE[\HCal(\pi_\textnormal{new}^c) - \HCal(\pi_\textnormal{old})] \leq -c_G\Phi(\pi_\textnormal{old})\eta^2 + \EE[R(\eta)] + c(p)G\left(\rho\delta_{\rm eff} - \tfrac{X_{\max}}{2}(G-1)p\right), 
\end{equation*}
where $c_G$, $\Phi$ and $R(\eta)$ are defined as the same as in \cref{thm:entropyinc}, and 
\begin{equation*}
\begin{array}{ll}
c(p) := -\pi_{\min}\left(\log pe^{\eta/(2\pi_{\min})}\right)_-,  & \delta_{\rm eff} := \frac{X_{\max}(\delta-M(p))_+}{X_{\max}-M(p)}, \\
X_{\max} := \exp(\eta/(2\pi_{\min}))-(1+\varepsilon), & M(p) := [\exp(\eta/(2p))-(1+\varepsilon)]_+. 
\end{array}
\end{equation*}
\end{restatedtheorem}

\begin{proof}
For simplicity, define $\Delta \HCal^c := \HCal(\pi_\textnormal{new}^c) - \HCal(\pi_\textnormal{old})$ and $\Delta \HCal^u := \HCal(\pi_\textnormal{new}^u) - \HCal(\pi_\textnormal{old})$. Then, we have
\begin{equation*}
\EE[\Delta\HCal^c] = \EE[\HCal(\pi_\textnormal{new}^c) - \HCal(\pi_\textnormal{new}^u)] + \EE[\Delta\HCal^u]. 
\end{equation*}
By first part of \cref{lem:entropy-taylor-local-slim} and \cref{lem:momentofAtilde}, we have
\begin{equation}\label{eq:upclipped-entropy-change}
\EE[\Delta \HCal^u] = -\tfrac{1-2^{1-G}}{2G}\Phi(\pi_\textnormal{old})\eta^2 + \EE[R(\eta)].
\end{equation}
It suffices to consider $\EE[\HCal(\pi_\textnormal{new}^c) - \HCal(\pi_\textnormal{new}^u)]$. Indeed, we consider
\begin{equation*}
\HCal(\pi_\textnormal{new}^c)= -\sum_a \pi_\textnormal{new}^c(a) \log(\pi_\textnormal{new}^c(a)) = -\sum_a \pi_\textnormal{new}^c(a)\log(\pi_\textnormal{new}^u(a)) - D_\textnormal{KL}(\pi_\textnormal{new}^c\|\pi_\textnormal{new}^u).
\end{equation*}
Since $D_\textnormal{KL}(\pi_\textnormal{new}^c\|\pi_\textnormal{new}^u) \geq 0$, we have
\begin{eqnarray*}
\HCal(\pi_\textnormal{new}^c) - \HCal(\pi_\textnormal{new}^u) & \leq & -\sum_a \pi_\textnormal{new}^c(a)\log(\pi_\textnormal{new}^u(a)) + \sum_a \pi_\textnormal{new}^u(a) \log(\pi_\textnormal{new}^u(a)) \\
& \leq & \sum_{a \in \VCal} (\pi_\textnormal{new}^u(a) - \pi_\textnormal{new}^c(a)) \log(\pi_\textnormal{new}^u(a)). 
\end{eqnarray*}
By \cref{lem:ratio-hit-clip}, we have $\pi_\textnormal{new}^c(a) \leq (1+\varepsilon)\pi_\textnormal{old}(a)$ for every $a$. If $r_u(a)>1+\varepsilon$, we have 
\begin{eqnarray*}
\pi_\textnormal{new}^u(a) - \pi_\textnormal{new}^c(a) & = & \pi_\textnormal{old}(a)r_u(a) - \pi_\textnormal{new}^c(a) \ \geq \ \pi_\textnormal{old}(a)(r_u(a)-(1+\varepsilon))  \\
& \geq & \pi_{\min}(r_u(a)-(1+\varepsilon)). 
\end{eqnarray*}
Thus, we have 
\begin{equation}\label{eq:Hc-Hu-cross}
\HCal(\pi_\textnormal{new}^c) - \HCal(\pi_\textnormal{new}^u) \leq \sum_{a \in \VCal} \pi_{\min}(r_u(a)-(1+\varepsilon))\log(\pi_\textnormal{new}^u(a)). 
\end{equation}
For any $p \in (0,\frac{1}{|\VCal|}]$, on the set $\{\pi_\textnormal{old}(\y^{(i)}) \leq p\}$, we have 
\begin{equation*}
\log(\pi_\textnormal{new}^u(\y^{(i)})) = \log(\pi_\textnormal{old}(\y^{(i)})r_u(\y^{(i)}))\le \min\{0,\log (pe^{\eta/(2\pi_{\min})})\}. 
\end{equation*}
Using \cref{eq:Hc-Hu-cross}, restricting to indices $i$ where $C_i$ occurs, we have 
\begin{eqnarray*}
\HCal(\pi_\textnormal{new}^c) - \HCal(\pi_\textnormal{new}^u) & \leq & \min\{0,\log(pe^{\eta/(2\pi_{\min})})\} \sum_{a:r_u(a)>1+\varepsilon} \pi_{\min}(r_u(a)-(1+\varepsilon)) \\
& \leq & \min\{0,\log(pe^{\eta/(2\pi_{\min})})\} \sum_{a:r_u(a)>1+\varepsilon, \pi_\textnormal{old}(a) \leq p}\pi_{\min}(r_u(a)-(1+\varepsilon)). 
\end{eqnarray*}
We let $X_i:=(r_u(a)-(1+\varepsilon))\1_{C_i}$ and $n_a:=\sum_{j=1}^G \1\{\y^{(j)}=a\}\1_{C_j}$. Then, we have 
\begin{equation*}
\sum_{a:r_u(a)>1+\varepsilon, \pi_\textnormal{old}(a) \leq p}\pi_{\min}(r_u(a)-(1+\varepsilon)) = \pi_{\min} \left(\sum_{i=1}^G \tfrac{X_i\1\{\pi_\textnormal{old}(\y^{(i)})\leq p\}}{n_{\y^{(i)}}}\right). 
\end{equation*}
Define $K:=n_{\y^{(i)}}-1$. Conditioned on $\y^{(i)}=a$ and $\pi_\textnormal{old}(a)\leq p$, we have
\begin{equation*}
    K\leq \sum_{j=1}^G \1\{\y^{(j)}=a\}-1\sim\text{Binomial}(G-1,\pi_{\rm old}(a)). 
\end{equation*}
Thus, we have 
\begin{eqnarray*}
\EE[K\1\{\pi_\textnormal{old}(\y^{(i)})\leq p\}] & = & \EE[\EE[K\mid \pi_\textnormal{old}(\y^{(i)})\leq p]\1\{\pi_\textnormal{old}(\y^{(i)})\leq p\}] \\
& = & \EE[(G-1)\pi_\textnormal{old}(\y^{(i)})\1\{\pi_\textnormal{old}(\y^{(i)})\leq p\}] \\
& = & (G-1)\sum_a (\pi_\textnormal{old}(a))^2\1\{\pi_\textnormal{old}(a) \leq p\} \\
& \leq & (G-1)p. 
\end{eqnarray*}
Since for any $K \geq 0$, we always have $\frac{1}{K+1} \geq 1-\tfrac{K}{2}$, 
\begin{equation*}
\tfrac{X_i\1\{\pi_\textnormal{old}(\y^{(i)})\leq p\}}{n_{\y^{(i)}}} \geq X_i\1\{\pi_\textnormal{old}(\y^{(i)})\leq p\} - \tfrac{1}{2}X_i\1\{\pi_\textnormal{old}(\y^{(i)})\leq p\}(n_{\y^{(i)}}-1). 
\end{equation*}
Thus, we have 
\begin{equation*}
\EE\left[\tfrac{X_i\1\{\pi_\textnormal{old}(\y^{(i)}) \leq p\}}{n_{\y^{(i)}}} \right] \geq \EE[X_i\1\{\pi_\textnormal{old}(\y^{(i)}) \leq p\}] - \tfrac{X_{\max}}{2}(G-1)p, 
\end{equation*}
where $X_i\leq X_{\max}:=\exp(\eta/(2\pi_{\min}))-(1+\eps)$. This $X_{\max}$ is derived by recalling that $|\tilde{A}(a)| \leq 1/(2\pi_{\min})$ and noticing that Jensen's inequality implies $Z(\eta):=\EE_{\pi_\textnormal{old}}[e^{\eta\tilde{A}(a)}] \geq e^{\eta\EE_{\pi_\textnormal{old}}[\tilde{A}(a)]} = 1$. Taking expectation of both sides of \cref{eq:Hc-Hu-cross} yields 
\begin{equation}\label{eq:Hc-Hu-cross-mid}
\EE[\HCal(\pi_\textnormal{new}^c) - \HCal(\pi_\textnormal{new}^u)] \leq c(p)G\left(\EE[X_i\1\{\pi_\textnormal{old}(\y^{(i)})\leq p\}] - \tfrac{X_{\max}}{2}(G-1)p\right), 
\end{equation}
where we denote $c(p):=\min\{0,\log(pe^{\eta/(2\pi_{\min})})\}\pi_{\min}$. Finally, notice that on $\{\pi_{\rm old}(\y^{(i)})>p\}$, we have another bound $X_i \leq M(p):=[\exp(\eta/(2p))-(1+\varepsilon)]_+$. By symmetry, for any $i=1,\ldots, G$, we have 
\begin{eqnarray*}
\delta & = & \EE[r(\y^{(i)})-(1+\varepsilon)\mid C_i] \ = \ \EE[X_i\mid C_i] \\
& = & \EE[X_i\1\{\pi_\textnormal{old}(\y^{(i)}) \leq p\}\mid C_i] + \EE[X_i\1\{\pi_\textnormal{old}(\y^{(i)})>p\} \mid C_i] \\
& \leq & X_{\max}\mathbb{P}(\pi_\textnormal{old}(\y^{(i)}) \leq p\mid C_i) + M(p)(1-\mathbb{P}(\pi_\textnormal{old}(\y^{(i)})\leq p\mid C_i)).
\end{eqnarray*}
This implies
\begin{equation*}
\mathbb{P}(\pi_\textnormal{old}(\y^{(i)})\leq p\mid C_i) \leq \tfrac{X_{\max}-\delta}{X_{\max}-M(p)}, 
\end{equation*}
which further guarantees
\begin{eqnarray*}
\EE[X_i\1\{\pi_\textnormal{old}(\y^{(i)})\leq p\}\mid C_i] & = & \delta - \EE[X_i\1\{\pi_\textnormal{old}(\y^{(i)})> p\}\mid C_i] \\
& \geq & \delta-M(p)\mathbb{P}(\pi_\textnormal{old}(\y^{(i)})\leq p\mid C_i) \\
& \geq & \delta-M(p)\tfrac{X_{\max}-\delta}{X_{\max}-M(p)} = \tfrac{X_{\max}(\delta-M(p))}{X_{\max}-M(p)}. 
\end{eqnarray*}
Notice that $\EE[X_i\1\{\pi_\textnormal{old}(\y^{(i)})\leq p\}\mid C_i]$ is nonnegative. Then, we have 
\begin{equation*}
\EE[X_i\1\{\pi_\textnormal{old}(\y^{(i)})\leq p\}\mid C_i] \geq \tfrac{X_{\max}(\delta-M(p))_+}{X_{\max}-M(p)}:=\delta_{\rm eff}. 
\end{equation*}
By using the symmetry again, we have
\begin{equation*}
\EE[X_i\1\{\pi_\textnormal{old}(\y^{(i)})\leq p\}] = \mathbb{P}(C_i)\EE[X_i\1\{\pi_\textnormal{old}(\y^{(i)})\leq p\}\mid C_i] \geq \rho\delta_{\rm eff}. 
\end{equation*}
Therefore, \cref{eq:Hc-Hu-cross-mid} becomes 
\begin{equation}\label{eq:clipped-changed-entropy}
\EE[\HCal(\pi_\textnormal{new}^c) - \HCal(\pi_\textnormal{new}^u)] \leq c(p)G\left(\rho\delta_{\rm eff} - \tfrac{X_{\max}}{2}(G-1)p\right). 
\end{equation}
Combining \cref{eq:upclipped-entropy-change} and \cref{eq:clipped-changed-entropy}, we obtain the desired result 
\begin{equation}\label{eq:HcHu-bound-final}
\begin{array}{cl}
& \EE[\HCal(\pi_c)-\HCal(\pi_u)] \\ 
\leq & -\tfrac{1-2^{1-G}}{2G} \Phi(\pi_\textnormal{old}) \eta^2 + \EE[R(\eta)] + c(p)G\left(\rho\delta_{\rm eff} - \tfrac{X_{\max}}{2}(G-1)p\right) \\
= & -\tfrac{1-2^{1-G}}{2 G} \Phi(\pi_\textnormal{old}) \eta^2 + \EE[R(\eta)] - G\pi_{\min}\left(\log (pe^{\eta/(2\pi_{\min})})\right)_-\left(\rho\delta_{\rm eff} - \tfrac{X_{\max}}{2}(G-1)p\right). 
\end{array}
\end{equation}
This completes the proof. 
\end{proof}

\begin{lemma}\label{lem:helpNumerics}
    Let $\mathcal{V}_{\le \hat{\pi}}:=\{a \in \mathcal{V}: \pi_{\rm old}(a)\le \hat{\pi}\}$ with some $\hat{\pi}\in[\pi_{\min},1)$. Define $\operatorname{Coll}_{\le \hat{\pi}}:= \{\exists a \in \mathcal{V}_{\le \hat{\pi}}: N_a \geq 2\}$ where  $N_a:=\sum_{i=1}^G \mathbf{1}\{\y^{(i)}=a\}$. Then $\mathbb P(\operatorname{Coll}_{\le \hat{\pi}}) \le \tbinom{G}{2}|\mathcal V|\hat\pi^2$. Furthermore, if $\hat\pi\leq\tfrac{G}{2(\sqrt{G}-1/\sqrt{G})}\pi_{\min}$, then on $\operatorname{Coll}_{\le \hat{\pi}}^c$, we have $|\tilde{A}(a)|\leq \tfrac{1}{2\hat{\pi}}$ for all $a$. 
\end{lemma}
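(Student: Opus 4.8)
The plan is to prove the two assertions in turn. For the probability bound I would combine a union bound over tokens with a union bound over index pairs. Since $\operatorname{Coll}_{\le\hat\pi}=\bigcup_{a\in\mathcal V_{\le\hat\pi}}\{N_a\ge 2\}$, we get $\mathbb P(\operatorname{Coll}_{\le\hat\pi})\le\sum_{a\in\mathcal V_{\le\hat\pi}}\mathbb P(N_a\ge 2)$. The $\y^{(i)}$ are i.i.d.\ from $\pi_{\rm old}$, so $\{N_a\ge 2\}\subseteq\bigcup_{1\le i<j\le G}\{\y^{(i)}=a,\ \y^{(j)}=a\}$ and hence $\mathbb P(N_a\ge 2)\le\binom G2\pi_{\rm old}(a)^2\le\binom G2\hat\pi^2$ by the definition of $\mathcal V_{\le\hat\pi}$. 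Summing over the at most $|\mathcal V|$ tokens in $\mathcal V_{\le\hat\pi}$ yields $\mathbb P(\operatorname{Coll}_{\le\hat\pi})\le\binom G2|\mathcal V|\hat\pi^2$.

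For the envelope bound, recall that with $L=1$ the per-history advantage is $\tilde A(a)=\frac1{G\pi_{\rm old}(a)}\sum_{i=1}^G\mathbf 1\{\y^{(i)}=a\}A_i$; write $B(a):=\sum_{i:\y^{(i)}=a}A_i$, so $|\tilde A(a)|=|B(a)|/(G\pi_{\rm old}(a))$. I would split on membership in $\mathcal V_{\le\hat\pi}$. If $\pi_{\rm old}(a)>\hat\pi$ (so $a\notin\mathcal V_{\le\hat\pi}$), I reuse the deterministic estimate already established in the proof of \cref{lem:fistexpan}: using $\sum_iA_i=0$ and $\sum_iA_i^2=G$, Cauchy--Schwarz applied to $B(a)$ directly and to its complementary sum gives $|B(a)|\le\sqrt{N_a(G-N_a)}\le G/2$, whence $|\tilde A(a)|\le\frac1{2\pi_{\rm old}(a)}<\frac1{2\hat\pi}$. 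If instead $a\in\mathcal V_{\le\hat\pi}$, then on $\operatorname{Coll}_{\le\hat\pi}^c$ we have $N_a\le 1$, so either $B(a)=0$ (and $\tilde A(a)=0$) or $|B(a)|=|A_i|\le\sqrt G-1/\sqrt G$ for the unique index $i$ with $\y^{(i)}=a$, by \cref{lem:adv-dist}(ii); together with $\pi_{\rm old}(a)\ge\pi_{\min}$ this gives $|\tilde A(a)|\le\frac{\sqrt G-1/\sqrt G}{G\pi_{\min}}$.

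Finally, the hypothesis $\hat\pi\le\frac{G}{2(\sqrt G-1/\sqrt G)}\pi_{\min}$ is exactly the rearrangement of $\frac{\sqrt G-1/\sqrt G}{G\pi_{\min}}\le\frac1{2\hat\pi}$, so the small-probability tokens satisfy the same $1/(2\hat\pi)$ bound that the large-probability ones obey automatically, giving $|\tilde A(a)|\le 1/(2\hat\pi)$ for all $a$ on $\operatorname{Coll}_{\le\hat\pi}^c$. I do not expect a real obstacle; the only thing needing care is the constant bookkeeping in the last step -- and a sanity check that the threshold is consistent with $\hat\pi\in[\pi_{\min},1)$, which holds because $\frac{G}{2(\sqrt G-1/\sqrt G)}=\frac{G^{3/2}}{2(G-1)}>1$ for every $G\ge 2$.
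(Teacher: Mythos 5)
Your proposal is correct and follows essentially the same route as the paper: a union bound giving $\mathbb P(N_a\ge 2)\le\binom{G}{2}\pi_{\rm old}(a)^2$ summed over $\mathcal V_{\le\hat\pi}$ for the probability bound, and the observation that on $\operatorname{Coll}_{\le\hat\pi}^c$ each low-probability token occurs at most once, so $|\tilde A(a)|\le\frac{\sqrt G-1/\sqrt G}{G\pi_{\min}}\le\frac{1}{2\hat\pi}$ by \cref{lem:adv-dist} and the stated threshold on $\hat\pi$. The only difference is that you spell out the case $\pi_{\rm old}(a)>\hat\pi$ via the envelope $|\tilde A(a)|\le\frac{1}{2\pi_{\rm old}(a)}$ from the proof of \cref{lem:fistexpan}, which the paper leaves implicit.
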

\begin{proof}
    Let $\mu_{\le \hat{\pi}}:=\pi_{\rm old}(\mathcal{V}_{\le \hat{\pi}})$. For any fixed $a\in \mathcal V_{\le \hat\pi}$, we know $N_a\sim \operatorname{Binomial}(G,\pi(a))$ and $\mathbb{P}(N_a\ge 2)\le \tbinom{G}{2}\pi(a)^2$. Union bound over $a\in \mathcal V_{\le \hat\pi}$ yields
    \begin{equation*}
        \mathbb P(\operatorname{Coll}_{\le \hat{\pi}}) \le \tbinom{G}{2}\sum_{a:\pi(a)\le \hat\pi}\pi(a)^2 \le \tbinom{G}{2}\hat{\pi}\sum_{a:\pi(a)\le \hat\pi}\pi(a) = \tbinom{G}{2}\hat{\pi}\mu_{\le \hat{\pi}}, 
    \end{equation*}
    and using $\mu_{\le \hat\pi}\le |\mathcal V|\hat\pi$ gives the final bound
    $\mathbb P(\operatorname{Coll}_{\le \hat{\pi}})\le \tbinom{G}{2}|\mathcal V|\hat\pi^2$. On $\operatorname{Coll}_{\le \hat{\pi}}^c$, if $\pi_{\rm old}(a)\leq\hat{\pi}$, then the token $a$ appears at most once in the batch, hence $\tilde{A}(a) = \tfrac{A_i}{G\pi(a)}$ or $\tilde{A}(a)=0$. For the former one, by \cref{lem:adv-dist}, we have $|\tilde{A}(a)|\leq \tfrac{\sqrt{G}-1/\sqrt{G}}{G\pi(a)} \leq \tfrac{\sqrt{G}-1/\sqrt{G}}{G\pi_{\min}}$. By assumption $\hat\pi\leq\tfrac{G}{2(\sqrt{G}-1/\sqrt{G})}\pi_{\min}$, one concludes that $|\tilde{A}(a)|\leq \tfrac{1}{2\hat{\pi}}$. 
\end{proof}

\begin{remark}
In this remark, we will show that under practical settings, the assumption, i.e., \cref{eq:assum-ratio-hit-clip}, in \cref{thm:entropycollp} is indeed satisfied and the extra term ensure the entropy is decreasing in expectation. Recall the parameters we used or observed in experiments: $G=16$, $\eta=5\times 10^{-7}$, $|\mathcal{V}|\approx 150000$, $\pi_{\min}=10^{-7}$, $\eps=0.2$, $\rho\approx0.001$, $\delta\approx 10$, and we choose the threshold $p=\hat{\pi}=2\pi_{\min}=2\times 10^{-7}$. By \cref{thm:entropycollp}, we compute $X_{\max}=10.98$, $M(p)=2.29$, $\delta_{\rm eff}=9.74$, and $c(p)=-1.29\times 10^{-6}$. Thus, the clipping correction term (the third term) in the upper bound of $\mathbb{E}[\Delta\mathcal{H}]$ is $-2.01\times 10^{-7}$. Furthermore, $c_G\eta^2=7.81\times 10^{-15}$ and $\Phi(\pi)\geq\Phi_{\min}=-2.23\times 10^{6}$. Finally, define  $B:=\operatorname{Coll}_{\le \hat{\pi}}$ in \cref{lem:helpNumerics}, applying \cref{cor:entropy-taylor-local-slim}, we have 
\begin{equation*}
    |\mathbb{E}[R(\eta)]| \leq (1-q_{\hat{\pi}})C(\hat{\pi}) + q_{\hat{\pi}}C(\pi_{\min}),  
\end{equation*}
where $q_{\hat{\pi}}\leq 7.2\times 10^{-7}$ by \cref{lem:helpNumerics}, $C(\hat{\pi})=3.43\times 10^{-8}$, and $C(\pi_{\min})=5.31\times 10^{-7}$. Hence, $|\mathbb{E}[R(\eta)]|\leq 3.43\times 10^{-8}$. Combining all numbers, we have 
\begin{equation*}
    \mathbb{E}[\Delta\mathcal{H}] \leq (-2.23\times 10^{6}) \times (-7.81\times 10^{-15}) + 3.43\times 10^{-8} - 2.01\times 10^{-7} 
    < -1.49\times 10^{-7} < 0. 
\end{equation*}
Thus, we can conclude $\mathbb{E}[\Delta\mathcal{H}]<0$. 
\end{remark}

\subsection{Missing Proofs in \S\ref{s5}} \label{app:S5proof}
\paragraph{Proof of \cref{prop:global-moments}.} Since $f \sim \textnormal{Binomial}(n_i,\frac{1}{2})$ and $g \sim \textnormal{Binomial}(n_c,\frac{1}{2})$, we have $\EE[f]=\frac{n_i}{2}$ and $\EE[g]=\frac{n_c}{2}$. We rewrite \cref{eq:damage} as $\Delta = \frac{n_c(f-g)}{G} + g$ and have
\begin{equation*}
\EE[\Delta] = \tfrac{n_c(n_i-n_c)}{2G}+\tfrac{n_c}{2} = \tfrac{n_c (G-n_{c})}{G}.
\end{equation*}
In addition, we have $\textnormal{Var}(f)=\frac{n_i}{4}$ and $\textnormal{Var}(g)=\frac{n_c}{4}$. Since $f$ and $g$ are independence, we have
\begin{equation*}
\textnormal{Var}(\Delta) = \left(\tfrac{n_c}{G}\right)^2\textnormal{Var}(f) + \left(\tfrac{n_i}{G}\right)^2\textnormal{Var}(g) = \left(\tfrac{n_c}{G}\right)^2\tfrac{n_i}{4} + \left(\tfrac{n_i}{G}\right)^2 \tfrac{n_c}{4} = \tfrac{n_c(G-n_c)}{4G}.
\end{equation*}
This completes the proof. \hfill\qed

\paragraph{Proof of \cref{thm:cond-means}.} We define $X=f$ and $Y=n_c-g$. Then, we have $Y\sim \textnormal{Binomial}(n_c,\frac{1}{2})$ and
\begin{equation}\label{eq:delta}
\Delta = \tfrac{n_c X}{G} + \tfrac{n_i(n_c-Y)}{G} = \tfrac{n_in_c}{G} + \tfrac{n_c X}{G} - \tfrac{n_i Y}{G}. 
\end{equation}
By definition, $Z=X+Y\sim \textnormal{Binomial}(G,\frac{1}{2})$ and 
\begin{equation*}
f > g \iff Z > n_c, \quad f < g \iff Z < n_c. 
\end{equation*}
By definition, we have $\Pr(\vrr_j=1 \mid Z=z) = \frac{z}{G}$ for each $j\in \ICal$ and each $j \in \CCal$. Thus, we have
\begin{equation*}
\EE[X \mid Z=z] = \tfrac{n_c z}{G}, \quad \EE[Y \mid Z=z] = \tfrac{n_i z}{G}.
\end{equation*}
Taking the conditional expectation of both sides of \cref{eq:delta} yields
\begin{equation*}
\EE[\Delta\mid Z=z] = \tfrac{n_i n_c}{G} + \tfrac{n_c}{G}\EE[X \mid Z=z] - \tfrac{n_i}{G}\EE[Y\mid Z=z] = \tfrac{n_in_c}{G}. 
\end{equation*}
By using the tower property, we have
\begin{equation*}
\EE[\Delta \mid f>g] = \EE[\Delta \mid g>f] = \tfrac{n_in_c}{G}.
\end{equation*}
Note that $\EE[\Delta \mathbf{1}_{\{f>g\}}]=\EE[\Delta \mid f>g] \Pr(f>g)$ and $\EE[\Delta \mathbf{1}_{\{g<f\}}] = \EE[\Delta \mid g>f]\Pr(g>f)$. Thus, it suffices to prove that $\Pr(f > g) < \Pr(f < g)$. Indeed, we write them in wedge form as
\begin{equation}\label{eq:main}
\Pr(f > g) = \tfrac{1}{2^G} \sum_{k > \ell} \binom{n_i}{k} \binom{n_c}{\ell},\quad
\Pr(g > f) = \tfrac{1}{2^G} \sum_{k > \ell} \binom{n_c}{k}\binom{n_i}{\ell}.
\end{equation}
Fixing the integers $k > \ell \geq 0$, we define the function as follows, 
\begin{equation*}
\Psi(n) = \tfrac{\binom{n}{k}}{\binom{n}{\ell}},\textnormal{ for all } n \geq k. 
\end{equation*}
We claim that $\Psi(n)$ is strictly increasing in $n$. Indeed, we have 
\begin{equation*}
\Psi(n) = \tfrac{\ell!}{k!} \tfrac{(n-\ell)!}{(n-k)!} = \tfrac{\ell!}{k!} \left(\Pi_{j=0}^{k-\ell-1}(n-\ell-j)\right).
\end{equation*}
Each term in the product is strictly increasing in $n$. Thus, this yields the desired result. 

Since $n_c > n_i \geq k$, we have
\begin{equation*}
\tfrac{\binom{n_c}{k}}{\binom{n_c}{\ell}} = \Phi(n_c) > \Phi(n_i) = \tfrac{\binom{n_i}{k}}{\binom{n_i}{\ell}}.
\end{equation*}
This implies 
\begin{equation*}
\binom{n_c}{k}\binom{n_i}{\ell} > \binom{n_i}{k} \binom{n_c}{\ell}, \textnormal{ for all } k > \ell. 
\end{equation*}
This together with \cref{eq:main} yields the desired result. \hfill\qed

\paragraph{Conditional variance analysis.}Let $f \sim \textnormal{Binomial}(n_i,\frac{1}{2})$ and $g \sim \textnormal{Binomial}(n_c,\frac{1}{2})$ be independent, and let $\Delta$ be defined in \cref{eq:damage}. We write $X=f$, $Y=n_c-g$ and let $Z=X+Y\sim \textnormal{Binomial}(G,\frac{1}{2})$. Then, we have
\begin{equation*}
\EE[\Delta \mid Z=z] = \tfrac{n_in_c}{G}, \quad \textnormal{Var}(\Delta \mid Z=z) = \tfrac{n_i(G-n_i)}{G-1}\tfrac{z(G-z)}{G^2}.
\end{equation*}
We let $C = \frac{n_i(G-n_i)}{G^2(G-1)}$ and define $h(z)=z(G-z)$. Then, we have
\begin{equation*}
\textnormal{Var}(\Delta \mid f>g) = C\EE[h(Z) \mid Z>n_c], \quad \textnormal{Var}(\Delta \mid g>f) = C\EE[h(Z) \mid Z<n_c].
\end{equation*}
If $n_c>n_i$, we have $\textnormal{Var}(\Delta \mid f>g) < \textnormal{Var}(\Delta \mid g>f)$. 

\paragraph{Proof.} Conditional on $Z=z$, the $z$ positive labels are uniformly scattered among $G$ positions. Then, the count $X$ of positives falling inside the $n_i$ indices of $\ICal$ is
\begin{equation*}
X \mid Z=z \sim \textnormal{Hypergeometric}(G, z, n_i). 
\end{equation*}
Thus, we have
\begin{equation*}
\EE[X \mid Z=z] = \tfrac{n_i z}{G},\quad \textnormal{Var}(X\mid Z=z) = \tfrac{n_i z}{G}(1-\tfrac{z}{G})\tfrac{G-n_i}{G-1}.
\end{equation*}
By definition, we have
\begin{equation*}
\Delta = \tfrac{n_c X}{G} + \tfrac{n_i(n_c-Y)}{G} = X - \tfrac{n_i Z}{G} + \tfrac{n_in_c}{G}.
\end{equation*}
and 
\begin{equation*}
\EE[\Delta \mid Z=z] = \EE[X \mid Z=z] - \tfrac{n_i z}{G} + \tfrac{n_in_c}{G} = \tfrac{n_i n_c}{G},
\end{equation*}
and
\begin{equation*}
\textnormal{Var}(\Delta \mid Z=z) = \textnormal{Var}(X \mid Z=z) = \tfrac{n_i(G-n_i)}{G-1} \tfrac{z(G-z)}{G^2}.
\end{equation*}
We let $C=\frac{n_i(G-n_i)}{G^2(G-1)}$ and define $h(z)=z(G-z)$. Since $\EE[\Delta \mid Z]$ is independent of $Z$, we have
\begin{equation*}
\textnormal{Var}(\Delta \mid A) = \EE[\textnormal{Var}(\Delta \mid Z) \mid A]=C\EE[h(Z) \mid A], \quad \textnormal{for an event } A \textnormal{ measurable w.r.t. } Z. 
\end{equation*}
Since $f>g \iff Z>n_c$ and $g>f \iff Z<n_c$, we have 
\begin{equation*}
\textnormal{Var}(\Delta \mid f>g) = C\EE[h(Z) \mid Z>n_c], \quad \textnormal{Var}(\Delta \mid g>f) = C\EE[h(Z) \mid Z<n_c].
\end{equation*}
Since $Z \sim \textnormal{Binomial}(G,\frac{1}{2})$, $h(G-z)=h(z)$ and $h(z)$ is strictly increasing on $\{0,1, 2, 3, \ldots,\lfloor \frac{G}{2}\rfloor\}$, we have
\begin{equation*}
\EE[h(Z) \mid Z>n_c] = \EE[h(Z) \mid Z < G-n_c].
\end{equation*}
Since $n_c > \frac{G}{2}$, we have $0 \leq G-n_c < n_c \leq G$. Thus, we have
\begin{equation*}
\EE[h(Z) \mid Z < G-n_c] < \EE[h(Z) \mid Z<n_c]. 
\end{equation*}
Multiplying both sides of the above inequality by $C>0$ yields  
\begin{equation*}
\textnormal{Var}(\Delta \mid f>g) < \textnormal{Var}(\Delta \mid g>f). 
\end{equation*}
This completes the proof. \hfill\qed

\end{document}